\documentclass{article}
\PassOptionsToPackage{numbers,sort&compress}{natbib}
\usepackage[preprint]{neurips_2026}

\usepackage{needspace}
\usepackage{wrapfig}
\usepackage[utf8]{inputenc}
\usepackage[T1]{fontenc}
\usepackage{microtype}

\usepackage{comment}
\usepackage{xparse}

\usepackage[dvipsnames]{xcolor}

\definecolor{paleBlueA}{RGB}{15,132,246}
\definecolor{paleBlueB}{RGB}{5,125,245}
\definecolor{paleBlueC}{RGB}{220,240,255}
\definecolor{pillarblue}{RGB}{20,40,120}
\definecolor{softblue}{RGB}{60,120,180}
\definecolor{royalblue}{RGB}{65,105,225}
\definecolor{brickred}{RGB}{178,34,34}
\definecolor{forestgreen}{RGB}{34,139,34}
\definecolor{royalpurple}{RGB}{128,0,128}
\definecolor{slate}{RGB}{112,128,144}

\usepackage{graphicx}
\usepackage{subcaption}
\usepackage{booktabs}
\usepackage[inline]{enumitem}

\usepackage{amsmath,amssymb,amsfonts,mathtools,amsthm}
\usepackage{thmtools}
\usepackage{upgreek}
\allowdisplaybreaks

\usepackage{algorithm}
\usepackage{algpseudocode}
\usepackage{float}

\usepackage{caption}

\usepackage[most]{tcolorbox}
\usepackage[textsize=tiny]{todonotes}
\tcbuselibrary{theorems,breakable,skins}

\usepackage{tikz}
\usetikzlibrary{arrows.meta, positioning, decorations.pathmorphing, shadows.blur, calc, backgrounds}

\usepackage[numbers,sort&compress]{natbib}

\usepackage[
  colorlinks=true,
  linkcolor=paleBlueA,
  citecolor=paleBlueB,
  urlcolor=blue,
  breaklinks=true
]{hyperref}

\hypersetup{
  linktocpage=true,
  pdfstartview=FitH,
  pdfpagemode=UseOutlines,
  pageanchor=true,
  plainpages=false,
  bookmarksnumbered=true,
  bookmarksopen=false,
  bookmarksopenlevel=1,
  hypertexnames=false,
  pdfhighlight=/O
}

\usepackage[capitalize,noabbrev]{cleveref}

\theoremstyle{plain}
\newtheorem{theorem}{Theorem}[section]

\newtheorem{lemma}{Lemma}[section]
\newtheorem{corollary}{Corollary}[section]

\theoremstyle{definition}
\newtheorem{definition}{Definition}[section]
\newtheorem{assumption}{Assumption}[section]

\theoremstyle{remark}
\newtheorem{remark}{Remark}[section]

\newtcolorbox{yellownote}{
  colback=yellow!12,
  colframe=gray!50,
  boxrule=0.6pt,
  arc=1pt,
  left=6pt,
  right=6pt,
  top=6pt,
  bottom=6pt
}

\definecolor{PBBlueBg}{RGB}{232,246,255}
\definecolor{PBBlueFrame}{RGB}{70,150,200}

\newtcbtheorem[number within=section]{PBAppThmBox}{Theorem}{
  enhanced,
  breakable,
  colback=PBBlueBg,
  colframe=PBBlueFrame,
  boxrule=0.8pt,
  arc=2mm,
  left=1.2mm,
  right=1.2mm,
  top=1mm,
  bottom=1mm,
  fonttitle=\bfseries,
  coltitle=black,
  attach title to upper
}{pbthm}

\newtcbtheorem[number within=section]{PBAppLemmaBox}{Lemma}{
  enhanced,
  breakable,
  colback=PBBlueBg,
  colframe=PBBlueFrame,
  boxrule=0.8pt,
  arc=2mm,
  left=1.2mm,
  right=1.2mm,
  top=1mm,
  bottom=1mm,
  fonttitle=\bfseries,
  coltitle=black,
  attach title to upper
}{pblem}

\newtcbtheorem[number within=section]{PBAppCorollaryBox}{Corollary}{
  enhanced,
  breakable,
  colback=PBBlueBg,
  colframe=PBBlueFrame,
  boxrule=0.8pt,
  arc=2mm,
  left=1.2mm,
  right=1.2mm,
  top=1mm,
  bottom=1mm,
  fonttitle=\bfseries,
  coltitle=black,
  attach title to upper
}{pbcor}
\ExplSyntaxOn

\NewDocumentEnvironment{PBAppTheorem}{ o m m +b }
 {
  \par\medskip
  \IfNoValueTF{#1}
    {\subsection{Proof\ of\ Theorem~\ref{pbthm:#2}}}
    {\subsection{Proof\ of\ Theorem~\ref{pbthm:#2} (\texorpdfstring{#1}{#1})}}
  \label{proof:#2}
  \par\medskip
  \noindent\textit{\small Intuition. #3}\par\smallskip
  \IfNoValueTF{#1}
    {\def\PBApp@opt{}}
    {\def\PBApp@opt{#1}}
  \begin{PBAppThmBox}{\PBApp@opt . \phantom{1}}{#2}
  #4
  \end{PBAppThmBox}
 }
 {}

\NewDocumentEnvironment{PBAppLemma}{ o m m +b }
 {
  \par\medskip
  \IfNoValueTF{#1}
    {\subsection{Proof\ of\ Lemma~\ref{pblem:#2}}}
    {\subsection{Proof\ of\ Lemma~\ref{pblem:#2}\  (\texorpdfstring{#1}{#1})}}
  \label{proof-lem:#2}
  \par\medskip
  \noindent\textit{\small Intuition. #3}\par\smallskip
  \IfNoValueTF{#1}
    {\def\PBApp@opt{}}
    {\def\PBApp@opt{#1}}
  \begin{PBAppLemmaBox}{\PBApp@opt . \phantom{1}}{#2}
  #4
  \end{PBAppLemmaBox}
 }
 {}

\ExplSyntaxOff

\newcommand{\authorcell}[3]{%
  \begin{minipage}[t]{0.30\textwidth}\centering
  {\normalsize\bfseries #1\par}
  \vspace{0.2em}
  {\small #2\par}
  \if\relax\detokenize{#3}\relax\else
    \vspace{0.2em}
    {\small\href{mailto:#3}{\texttt{#3}}\par}
  \fi
  \end{minipage}
}

\usepackage[framemethod=tikz]{mdframed}
\newmdenv[
  linewidth=0.7pt,
  roundcorner=2pt,
  innerleftmargin=6pt,
  innerrightmargin=6pt,
  innertopmargin=6pt,
  innerbottommargin=6pt,
  skipabove=6pt,
  skipbelow=6pt
]{theorembox}
\newmdenv[
  linewidth=0.5pt,
  roundcorner=1pt,
  innerleftmargin=2pt,
  innerrightmargin=2pt,
  innertopmargin=0pt,
  innerbottommargin=2pt,
  skipabove=0pt,
  skipbelow=0pt
]{algobox}

\usepackage{tabularx}
\usepackage{xspace}
\usepackage{array}
\newcolumntype{Y}{>{\raggedright\arraybackslash}X}

\newcommand{\PB}{\textsc{Prudent-Banker}}
\newcommand{\Banker}{\textsc{Banker-OMD}}
\newcommand{\CUCB}{\textsc{Conservative-UCB}}
\newcommand{\SafeEXP}{\textsc{Safe-EXP3-IX}}

\title{Prudent-Banker: No Extra Fees for Baseline Safety in Adversarial Bandits With and Without Delays}
\author{%
  Ting Hu\thanks{Equal contribution.} \\
  Department of Economics \\
  University of Wisconsin--Madison \\
  \texttt{ting.hu@wisc.edu} 
  \and
  {\bf Luanda Cai\footnotemark[1]} \\
  Department of Finance \\
  University of Wisconsin--Madison \\
  \texttt{luanda.cai@wisc.edu} 
  \And
  Emmanouil-Vasileios Vlatakis-Gkaragkounis \\
  Department of Computer Sciences \\
  University of Wisconsin--Madison \\
  \texttt{vlatakis@wisc.edu} \\
}

\begin{document}
\maketitle

\begin{abstract}
We study adversarial multi-armed bandits with and without delayed feedback under a \emph{safety-aware} goal: achieving minimax-optimal worst-case
regret while keeping nearly \emph{constant} regret relative to a designated ``safe'' baseline policy. Existing approaches can balance this trade-off with immediate feedback for smooth comparators, but arbitrary delays can mistime transitions between \emph{conservatism} and \emph{exploration}, endangering the safety guarantee.
To bridge this gap, we propose \textsc{\small Prudent-Banker}, a novel algorithm that combines a delay-adapted variant of Online Mirror Descent with a modified phased-aggression mechanism. Its key technical contribution is a delay-calibrated restart threshold that rigorously accounts for the worst-case distortion induced by unobserved feedback  and reliably detects comparator suboptimality. We also establish new lower bounds for safety-constrained adversarial delayed bandits, showing that the regret guarantees of \textsc{\small Prudent-Banker} are unimprovable, up to logarithmic factors, under the baseline-safety requirement.

To the best of our knowledge, \textsc{\small Prudent-Banker} is the first
algorithm to achieve the optimal safety--robustness trade-off: pseudo-regret
$\widetilde{O}(\sqrt{T}+\sqrt{D})$ together with $\widetilde{O}(1)$ regret
against the safe comparator, both with and without delays. Experiments across
diverse delay distributions show that, unlike standard delay-robust baselines,
\textsc{\small Prudent-Banker} effectively balances safety and learning.
\end{abstract}

\section{Introduction}
Sequential decision-making is shaped by a recurring dilemma: \emph{explore} to acquire new information, or \emph{exploit} what is already known to perform well. The multi-armed bandit (MAB) framework, introduced by \citet{robbins1952some}, has been the canonical mathematical model for this trade-off and has become a foundational tool across Operations Research, Economics, and Computer Science.

In this setting, a learner repeatedly chooses one action, an ``arm'',  from a fixed set. After choosing its arm, the learner incurs a loss (or receives a reward) that depends on the environment; The goal is to learn from this feedback over time so as to minimize \emph{regret}, i.e., the gap between the learner’s cumulative loss and the cumulative loss of the best single arm in hindsight.

However, classical MAB theory often assumes immediate feedback, but modern systems routinely violate this premise. In web applications, latency and user inactivity delay feedback loops \citep{parvin2020just}; in video streaming, QoE signals (e.g., buffering and engagement) are observed only after delivery \citep{changuel2012online}; in robotics, actuation and sensing lag behind commands \citep{mahmood2018setting}; and in multi-agent settings, communication is inherently asynchronous \citep{chen2020delay}. In high-stakes domains such as online advertising \citep{LiChuLangfordSchapire2010} and clinical decision-making \citep{WasonTrippa2014}, delays are intrinsic: transactions or patient outcomes may appear long after an action is taken. These realities motivate bandit algorithms whose performance guarantees remain meaningful even when feedback is substantially delayed.

While a growing body of work addresses adversarial bandits with delayed feedback \cite{zimmert2020optimal,gyorgy2021adapting,howson2021delayed,jin2022near,dai2022follow,huang2023banker}, the majority of existing guarantees focus on a single metric: \emph{minimax regret minimization}. While this worst-case lens is theoretically robust, it is insufficient for high-stakes deployments. In practice, a learner rarely enters an environment \emph{tabula rasa}; they are often equipped with a vetted baseline policy—such as a standard medical treatment, a compliant control rule, or a conservative investment strategy—that is trusted to avoid catastrophic outcomes. 
In light of this discussion, we arrive at the main question driving this work:
\vspace{-0.5em}
\begin{center}
    \begin{minipage}[c]{0.85\linewidth}
        \centering
        \textbf{\textit{Can we be worst-case robust while paying essentially no extra cost over a trusted baseline, even with arbitrary feedback delays?}}
    \end{minipage}%
    \hspace{0.5em} 
    \begin{minipage}[c]{0.05\linewidth}
        $(\star)$
    \end{minipage}
\end{center}\vspace{-0.5em}
In such settings, delayed feedback significantly amplifies the risk of experimentation. If an exploratory action is harmful, the evidence of failure may only surface after a long lag. During this blind spot, the learner incurs a ``hidden debt", repeatedly committing to a suboptimal strategy before the feedback arrives to trigger a correction. This danger motivates exactly the following dual mandate:
\begin{center}
    \vspace{-0.75em}
    \textbf{\small Safe Online Learning in Delays (SOLID) }
    \fbox{%
        \begin{minipage}{\dimexpr\linewidth-2\fboxsep-2\fboxrule\relax} 
            \vspace{2pt}
            \begin{itemize}[leftmargin=-1pt, itemsep=4pt, topsep=2pt, parsep=0pt] 
                \item[] \textbf{Goal \#1: Global Robustness (Minimax Optimality).}\\
                The algorithm must ensure regret $\tilde{\mathcal{O}}(\sqrt{T}+\sqrt{D})$ against the optimal arm in hindsight, even if the environment is fully adversarial and delays are arbitrary.
        
                \item[] \textbf{Goal \#2: Risk-Averse Behavior at No Extra Cost.}\\
                The algorithm must incur nearly {constant} regret $\tilde{\mathcal{O}}(1)$ relative to the designated safe policy. This ensures that the ``price of learning'' is bounded, and the method effectively reverts to the baseline when experimentation is risky.
            \end{itemize}
        \end{minipage}%
    }
\end{center}
\vspace{-0.75em}
\subsection{How Delay Breaks Safety: Three Roadblocks}
\vspace{-1em}
Achieving the above goals have been already subtle in bandits.
We next highlight three obstacles that make this twofold objective fundamentally harder under delays, and guide our algorithmic solution:
\vspace{-1em}
\paragraph{A) The Price of Low Comparator Regret in Bandits.}
Achieving this ``best-of-both-worlds''\footnote{In the vast bandit literature, ``\textit{best-of-both-worlds} (BOBW)'' most commonly denotes algorithms that attain instance-optimal $\tilde{O}(\log T)$ regret in stochastic environments while maintaining $\tilde{O}(\sqrt{T})$ regret in adversarial ones. Here, we use \emph{BOBW} in a different sense: simultaneous guarantees (i) against a designated comparator and (ii) in the worst case. To our knowledge, this regret-based interpretation was first explicitly termed as ``best-of-both-worlds'' by recent work of \citet{MullerEtAl2025}.} trade-off is non-trivial even under immediate bandit feedback.
Specifically, comparator-adaptive learning with bandit feedback is subject to a sharp impossibility result: \citet{lattimore2015pareto} proved that any algorithm guaranteeing $O(1)$ regret relative to a specific deterministic arm must necessarily suffer linear worst-case regret, $\Omega(T)$, against other arms.

This lower bound stems from an inherent \emph{exploration--safety conflict}: to maintain constant regret against a deterministic strategy, i.e., a specific arm, an algorithm is forced to select that action almost exclusively, thereby starving itself of the information required to detect if another arm has become superior.
Consequently, strong safety relative to a deterministic comparator is incompatible with sublinear adversarial regret. 
To circumvent this barrier in a minimal way, \citet{MullerEtAl2025} relax the requirement on the \emph{safe comparator}: we assume the safe baseline is an \emph{exploratory policy} that assigns strictly positive probability to every action.

\vspace{-0.25em}\paragraph{B) Why Banker, and Why Safety Is Harder.}
Lags in the environment impose additional theoretical burdens. Standard minimax-optimal algorithms for online learning and bandits—including \emph{Follow-the-Regularized-Leader} (FTRL) and \emph{Online Mirror Descent} (OMD) \citep{shalev2025online,warmuth1997continuous,beck2003mirror} and their bandit instantiations such as \textsc{Exp3} \citep{auer2002nonstochastic} and \textsc{ZO} \citep{flaxman2004online,abernethy2009competing}—rely on potential-based analyses whose core mechanism is a telescoping cancellation across consecutive rounds. Delays disrupt this real-time structure: the learner updates using stale information, while the losses needed for the next cancellation step may still be hidden. This mismatch has historically led to delay-specific techniques and ad-hoc corrections \citep{joulani2013online,thune2019nonstochastic,flaspohler2021online,gyorgy2021adapting,zimmert2020optimal}.

To address the safety objective on top of delays, a natural first attempt would be to combine such a delay-adapted method with an existing mechanism for baseline safety. This integration is non-trivial. As discussed in \citet[Appendix~A]{MullerEtAl2025}, standard safety approaches---including shifted variants \citep{sani2014exploiting} of \textsc{Prod-MWU} \citep{cesa2007improved,gaillard2014second}, originating from the seminal work of \citet{even-dar-2008}, as well as parameter-free adaptations \citep{orabona2019modern,van2020comparator}—do not directly yield strong baseline-safety guarantees in the bandit setting. The remaining viable strategy in bandit feedback is \emph{phased aggression} recently explored by \cite{MullerEtAl2025} in the non-delay safe online bandit learning.

This paradigm maintains a controlled mixture between the safe baseline and an exploratory no-regret learner, becoming more aggressive only after the data explicitly certifies that the baseline is suboptimal. Yet such technique does not align naturally with generic FTRL frameworks for delayed bandits, whose updates are driven by implicit regularization rather than explicit exploration. The construction of \citet{MullerEtAl2025} suggests the right path forward—an OMD-based approach, in which explicit exploration can be grafted onto the learning dynamics in a principled way. Thus, to make phased aggression compatible with delays, we need a delay-robust OMD backbone whose modularity exposes and controls exploration. \texttt{Banker-OMD} \citep{huang2023banker} of Huang et. al is precisely such a substrate. 

By restoring potential-based reasoning under arbitrary and unknown delays, it cleanly separates no-regret dynamics, bandit feedback, and delay handling, directly addressing our first requirement of near-minimax adversarial regret under delays.
What it does \emph{not} address is our second objective—near-zero regret against a safe baseline. The missing ingredient is not delay robustness alone, but delay-robust \emph{safe switching}: under delayed feedback, the hypothesis tests driving phased aggression rely on stale observations, and unseen losses create a hidden debt that can make exploration appear safer, or the baseline less suboptimal, than it truly is.
This leads to the central technical question:
\vspace{-0.5em}
\begin{center}
\textbf{\emph{How can we safely switch from conservatism to exploration when \ delays create a hidden debt \\that can mask the safe policy's suboptimality?}}
\end{center}
\vspace{-1.15em}

\paragraph{C) Late Switches Between Conservatism and Aggression.}
The core mechanism of phased aggression relies on a restart condition that detects when the safe comparator $x^c$ performs poorly.
In the no-delay setting, \citet{even-dar-2008,MullerEtAl2025}, achieved this by comparing a \emph{phase-gap statistic} to a threshold of $ \mathfrak{R}$: exceeding this value certifies strictly negative regret against $x^c$ within the phase, enabling geometric cancellation and fixed comparator regret. With delayed feedback, this certification becomes treacherous.  At time $T$, the
algorithm can only compute the phase gap from feedback that has already
arrived:
{\setlength{\abovedisplayskip}{3pt}
\setlength{\belowdisplayskip}{3pt}\[
G_T=\max_{x\in\text{Strategies}}\left\langle \sum_{t \in \text{observed by } T} \text{LossEstimation}[t], x^c-x \right\rangle. 
\]}
Crucially, in the phase that triggers a restart, the losses whose feedback has
not yet arrived contribute an unknown correction term. This creates a
``hidden debt'': the baseline may appear less suboptimal than it truly is, or
past exploration may appear cheaper than it was. In our analysis, this
unobserved distortion can be as large as $O(\sqrt{D}/\delta)$, potentially
offsetting the evidence certified by the standard threshold $\mathfrak{R}$.
Thus, relying on the vanilla restart test risks mistimed transitions between
conservatism and aggression, and can break the baseline-safety guarantee.

\subsection{Our Contributions}\label{sec:contributions}
The aforementioned discussion suggests that a simplistic ``plug-and-play'' approach—simply attaching a safety wrapper to a delay-tolerant bandit algorithm—is insufficient. The interaction between delayed feedback and safety test creates an \emph{information deficit} that necessitates a new synergistic design.

To address this, one might first ask: \emph{How can we adapt the restart threshold $\mathfrak{R}$ to account for missing information?} A key theoretical observation driving our work is that augmenting the standard threshold with a delay-dependent slack $\xi \approx O(\sqrt{D})$ effectively neutralizes the risk of ``ghost losses'' from unobserved feedback.
However, a critical obstacle arises: the total delay $D$ is unknown in advance. Relying on a worst-case upper bound (where feedback might arrive only at the very end, implying $D \approx T^2$) would necessitate a prohibitive slack of $\xi \approx O(T)$. While safe, this pessimistic parameterization would yield a vacuous linear regret bound $O(T)$, forfeiting the benefits of learning in any realistic environment where delays are moderate (i.e., $D \ll T^2$).

To resolve this dilemma, we introduce \textsc{Prudent-Banker}, an algorithm
that adapts to the unknown delay landscape without sacrificing safety or
performance. We prove matching upper and lower bounds for the
safety-constrained adversarial bandit problem, showing that the resulting
trade-off is optimal up to logarithmic factors, both with and without delayed
feedback. To the best of our knowledge, this gives the first affirmative
resolution of \textsc{SOLID} in the adversarial delayed-feedback setting,
improving the immediate-feedback framework of \citet{MullerEtAl2025}.

\begin{theorem}[Informal Main Result]\label{thm:informal:main}
\textsc{Prudent-Banker} is \textsc{SOLID} and achieves the optimal
safety--robustness instance-dependent constants up to logarithmic factors.
\end{theorem}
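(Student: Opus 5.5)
The informal theorem has three parts: (i) pseudo-regret $\widetilde{O}(\sqrt{T}+\sqrt{D})$ against the best arm; (ii) $\widetilde{O}(1)$ regret against the exploratory safe comparator $x^c$, whose minimal coordinate is at least $\delta>0$; (iii) a matching lower bound, tight up to logarithms. I would prove the upper bounds with a phase-based decomposition in the spirit of phased aggression \citep{even-dar-2008,MullerEtAl2025}, using \Banker{} \citep{huang2023banker} as the learner inside each phase. The delay-calibrated restart threshold $\mathfrak{R}+\xi$ supplies the missing certification.

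\textbf{Step 1 (within-phase analysis).} In phase $k$ the algorithm plays $x_t=(1-\alpha_k)x^c+\alpha_k\tilde{x}_t$, where $\tilde{x}_t$ is the \Banker{} iterate and the aggression $\alpha_k$ doubles at each restart. Mixing with $x^c$ keeps every coordinate above $(1-\alpha_k)\delta$. This lower bound controls the variance of the importance-weighted estimates and the stability term of OMD under delays. From \Banker{} I would derive, for any comparator $u$ and any phase interval $I_k$,
\[
\sum_{t\in I_k}\langle \ell_t, \tilde{x}_t-u\rangle \lesssim \sqrt{|I_k|\log K}+\sqrt{D_k}.
\]

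\textbf{Step 2 (bounding the hidden debt).} At the restart time I would bound the gap between the observed statistic $G_T$ and the true phase gap. The loss estimates still outstanding at time $T$ contribute at most $\sum_t \hat{\ell}_t$ over at most $\sigma_t$ pending rounds. A standard counting argument (outstanding count $\le \sqrt{D}$ on all but $\sqrt{D}$ rounds), combined with estimates bounded by $1/\delta$, gives a distortion of order $\sqrt{D}/\delta$ up to logarithms. Setting $\xi$ to this level means that crossing the threshold certifies that the learner's regret against $x^c$ in the finished phase is at most $-\alpha_k\mathfrak{R}$. In particular, it is negative and scales with $\alpha_k$.

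\textbf{Step 3 (adapting to unknown $D$).} Since $D$ is unknown, I would use a doubling estimate $\hat{D}$ built from the observed outstanding counts, with $\xi$ computed from $\hat{D}$. I would show that underestimation only causes a bounded number of extra restarts, so the error is absorbed into logarithmic factors. This is the step I expect to be the main obstacle. A premature restart caused by underestimating $D$ must not destroy the negative-regret certificate, and I would handle it by charging each doubling epoch its own constant regret.

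\textbf{Step 4 (summing over phases).} Comparator regret is summed over phases as in \citet{MullerEtAl2025}. Each completed phase contributes negative regret of order $-\alpha_k\mathfrak{R}$, and the final phase contributes at most $\alpha_k(\mathfrak{R}+\xi)$. With geometric $\alpha_k$, the sum telescopes to $\widetilde{O}(1)$ once $\alpha_1\approx 1/(\mathfrak{R}+\xi)$. Regret against any other arm decomposes as $(1-\alpha_k)\times(\text{regret of }x^c)$ plus $\alpha_k\times(\text{Banker regret})$. The number of phases is logarithmic, and in the last phase $\alpha$ is either large or $x^c$ is near-optimal, which yields $\widetilde{O}(\sqrt{T}+\sqrt{D})$.

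\textbf{Step 5 (lower bound).} I would embed the $\sqrt{D}$ delayed-bandit lower bound, using block losses that are revealed only after delay, into an instance where $x^c$ is $\delta$-exploratory. I would then apply the argument of \citet{lattimore2015pareto} with KL divergences scaled by $\delta$. This shows that any algorithm with $O(1)$ regret against $x^c$ must pay $\Omega(\sqrt{T}+\sqrt{D})$ times the corresponding $\delta$-dependent constant, which matches the upper bound up to logarithms.
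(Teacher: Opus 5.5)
Your architecture is the paper's: Banker-OMD inside phases of doubling aggression, a restart threshold inflated by a slack $\xi\asymp\sqrt D/\delta$ obtained by counting outstanding items, hard restarts on a doubled delay estimate each charged $O(1)$ comparator regret (so $O(\log D)$ overall), and a blocked two-environment lower bound. However, the telescoping in Step 4, which carries all of Goal \#2, does not close as you wrote it.

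\textbf{The Step 4 telescoping.} You bank $-\alpha_k\mathfrak R$ per terminated phase but charge the surviving phase $\alpha_K(\mathfrak R+\xi)$. With $\alpha_k=2^{k-1}\alpha_1$ this sums to exactly $\alpha_1\mathfrak R+\alpha_K\xi$. The surviving phase can have $\alpha_K\ge 1/2$, so the residual is of order $\sqrt D/\delta$ whatever $\alpha_1$ you pick, including $\alpha_1\approx 1/(\mathfrak R+\xi)$. The cancellation requires the credit per unit of $\alpha$ to be at least the final charge per unit of $\alpha$, with matching constants and not just matching orders.

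\textbf{The fix.} The observation you need is that a phase not stopped by the test never pays the slack. Since $x_t-x^c=\alpha_K(\hat x_t-x^c)$, its comparator regret is $\alpha_K$ times the base learner's regret against the fixed point $x^c$, hence at most $\alpha_K\widehat R$. This is the second claim of the paper's normal-phase lemma. The threshold must then exceed base regret plus slack by at least this margin, and the paper takes $2\widehat R+\widehat\xi$. At a restart:
the base term is at most $\widehat R$;
the arrived term is below $-(2\widehat R+\widehat\xi)$;
the missing term is at most $2m/\delta\le\widehat\xi$.
Together these leave at most $-\alpha_k\widehat R=-2^{k-1}$, against $+2^{K-1}$ for the surviving phase when $\alpha_1=1/\widehat R$. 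If your $\mathfrak R$ denotes the base-learner bound itself, a threshold $\mathfrak R+\xi$ certifies only nonpositive comparator regret, not $-\alpha_k\mathfrak R$.

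\textbf{Smaller points.}

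In Step 2, the count of outstanding items must hold at the data-dependent restart time, so a bound valid ``on all but $\sqrt D$ rounds'' does not suffice. The deterministic statement you need is the one in the paper's missing-feedback lemma. If $m$ earlier rounds are outstanding at $T_0$, the $i$-th most recent of them has been outstanding for at least $i$ rounds, so $m(m+1)/2\le\mathfrak D_{T_0}\le D$. This is also what makes your Step 3 estimate, built from observed outstanding counts, a valid calibration of $\xi$.

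Since ``optimal constants'' refers to the $\delta$-dependence, it cannot stay hidden inside $\lesssim$. With only the floor $x_t(a)\ge\delta/2$, your Step 1 bound carries the factor $\sqrt{C_1C_2}=\sqrt{\log A/\delta}$. The slack $\xi\asymp\sqrt D/\delta$ is also paid in the worst-case bound of every phase; the paper's normal-phase lemma has an explicit $(\sqrt{16D+1}-1)/\delta$ term. So you must track the power of $\delta$ on the $\sqrt D$ term before claiming it matches a $\delta^{-1/2}\sqrt D$ lower bound.

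Step 5 also needs the simulation lemma the paper proves. On the greedy bucket decomposition, no feedback generated inside a bucket is due before that bucket ends. Hence any delayed algorithm can be run as a batched one with identical regret, and the batched safe lower bound transfers.
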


\section{Preliminaries \& Problem Setup}
\label{sec:prelims}
\vspace{-0.5em}
\paragraph{Basic Notation.} 
Let $[A]\coloneqq\{1,2,\dots,A\}$ denote the set of arms. We denote by $\Delta_A$ the probability simplex over $[A]$. For vectors $x,y \in \mathbb{R}^A$, we denote the inner product by $\langle x, y \rangle$. We use $\mathbf{e}_a$ for the $a$-th standard basis vector and $\tilde{O}(\cdot)$ to hide polylogarithmic factors in $T$ and $A$.
\paragraph{The Delayed Adversarial Bandit Protocol.}
We consider adversarial bandits with delayed feedback over horizon $T$ against
an oblivious adversary. At each round $t\in [T]$, the adversary chooses
$\ell_t\in[0,1]^A$ and delay $d_t\in\mathbb N_0$; the learner chooses
$x_t\in\Delta_A$, samples $A_t\sim x_t$, incurs unobserved loss
$\ell_{t,A_t}$, and observes $(A_t,\ell_{t,A_t})$ at the end of round
$t+d_t$.
\paragraph{Feedback Sets and Delay Counters.}
Let $\mathcal{O}_t \coloneqq \{ \tau \le t : \tau + d_\tau = t \}$ be the feedback arriving at the end of round $t$, and let $D \coloneqq \sum_{t=1}^T d_t$ be the \emph{total delay}.  Following \citet{huang2023banker}, let $\mathfrak d_t\coloneqq\sum_{\tau<t}\mathbf 1\{t\le \tau+d_\tau\}$ be the number of outstanding observations at $t$ and $\mathfrak D_t\coloneqq\sum_{r\le t}\mathfrak d_r$, with $\mathfrak D_T\le D$. 
For a phase $k(s)$ within stage $s$ of \textsc{Prudent-Banker} with start time $\operatorname{start}_{k(s)}$, we define the phase-local delay  parameters up to time $t$ as 
$D_t^{k(s)}\coloneqq\sum_{r=\operatorname{start}_{k(s)}}^t d_r$, $\mathfrak d_t^{k(s)}\coloneqq\sum_{\tau=\operatorname{start}_{k(s)}}^{t-1}\mathbf 1\{t\le \tau+d_\tau\}$, and $\mathfrak D_t^{k(s)}\coloneqq\sum_{u=\operatorname{start}_{k(s)}}^t\mathfrak d_u^{k(s)}$.

\paragraph{Regularization and Banker-OMD Setup.}
Our active learner uses OMD with a strictly convex, continuously differentiable
regularizer $\Psi:\mathcal D\to\mathbb R$ on an open convex set
$\mathcal D\supseteq\Delta_A$. We identify $\Psi$ with its restriction to
$\Delta_A$ by setting $\Psi(x)=+\infty$ for $x\notin\Delta_A$, s.t.
$\Psi^*(y)\coloneqq\sup_{x\in\mathbb R^A}\{\langle y,x\rangle-\Psi(x)\}$
implicitly enforces the simplex constraint. 
Finally, we denote the associated \emph{Bregman divergence} as:
{\setlength{\abovedisplayskip}{3pt}
\setlength{\belowdisplayskip}{1pt}\[D_\Psi(x,y)\coloneqq\Psi(x)-\Psi(y)-\langle\nabla\Psi(y),x-y\rangle.\]}
We characterize the geometry (``width'' and ``curvature'') of the regularizer via the following constants:
\vspace{-1.5em}
\begin{definition}[$(C_1, C_2)$-Regularity]
\label{def:regularity}
The regularizer $\Psi$ and initial point $x_0$ satisfy:
\begin{enumerate}[label=(\roman*), leftmargin=1pt,itemsep=0pt, topsep=-1pt, parsep=0pt]
    \item[] \textbf{$\star$ Bounded Diameter:} $D_\Psi(y, x_0) \le C_1$ for all $y \in \Delta_A$.
    \item[] \textbf{$\star$ Local Norm Stability:} $\mathbb{E}\left[\sigma_t D_{\Psi}\left(x_t, \tilde{z}_t\right)\right] \leq {C_2}/{\sigma_t} \text{ for all } t \in [T]$.
\end{enumerate}
\vspace{-0.75em}
\end{definition}
\emph{Remark. For negative entropy, $(C_1,C_2)=(\ln A,1/\delta)$; for $1/2$-Tsallis entropy,
$(C_1,C_2)=(2(\sqrt A-1),2/\delta)$. Details are defered in the appendix.}
\paragraph{Performance Metrics.} We measure our performance using standard adversarial \emph{pseudo-regret}:
{\setlength{\abovedisplayskip}{1pt}
\setlength{\belowdisplayskip}{1pt}\[
    \mathcal{R}_T \coloneqq \max_{x \in \Delta_A} \mathbb{E}\left[\sum_{t=1}^T \langle \ell_t, x_t - x \rangle\right]\tag{\text{Adversarial Regret}}
\]}The learner is also equipped with a designated \emph{safe baseline policy} $x^c \in \Delta_A$.
In light of the impossibility result of \citet{lattimore2015pareto} and the relaxation advocated by \citet{MullerEtAl2025}, we focus on \emph{smoothed} comparators and assume that the baseline is \emph{uniformly exploratory}, i.e., it assigns nonzero probability to every action:
\begin{assumption}[Exploratory Baseline]
\label{ass:exploratory}
There exists a constant $\delta \in (0, 1/A]$ such that $x^c(a) \ge \delta$ for all arms $a \in [A]$.
\end{assumption}
Our secondary objective is to minimize the \emph{comparator regret} with respect to this baseline:
{\setlength{\abovedisplayskip}{0pt}
\setlength{\belowdisplayskip}{0pt}\[\mathcal{R}(x^c) \;\coloneqq\; \mathbb{E}\left[ \sum_{t=1}^T \langle \ell_t, x_t-x^c\rangle\right] \tag{\text{Comparator Regret}}
\]}
\vspace{-1em}
\section{The \textsc{Prudent-Banker} Algorithm}
\vspace{-1em}
\label{sec:algorithm}
{We show that \textsc{Prudent-Banker} is to guarantee $\mathcal{R}_T = \tilde{O}(\sqrt{C_1 C_2 T} + \sqrt{C_1 C_2 D})$ worst-case, while ensuring $\mathcal{R}(x^c) = O(\log D)$ relative to the safe baseline.}For the negative-entropy regularizer, this yields the concrete bound
\[
\mathcal R_T=
\widetilde O\!\left(\delta^{-1/2}(\sqrt{T}+\sqrt{D})\right),
\qquad
\mathcal R(x^c)=O(\log D).
\]
At a high level, \textsc{Prudent-Banker} runs in \emph{epochs} and combines
three interlocking components: (i) an \emph{active no-regret learner} based on
\texttt{Banker-OMD}; (ii) a \emph{mixing schedule} implementing phased
aggression; and (iii) a \emph{safety guardrail} based on delay-calibrated
restarts.\subsection{Algorithm Description}

\begin{wrapfigure}{r}{0.6\textwidth}
\vspace{-2\baselineskip}
\begin{algobox}
\vspace{-1em}
\begin{minipage}{\linewidth}
\begin{algorithm}[H]
\caption{\small \textsc{Prudent-Banker}}
\label{alg:prudent_banker}
\footnotesize
\begin{algorithmic}[1]
\State \textbf{Instantiate Banker:} $\mathcal{B} \gets \texttt{Banker-OMD}(\Psi,S)$
\State \hspace{\algorithmicindent} \{regularizer $\Psi$ and step-size routine $S$\}

\State \textbf{Input:} Initial \& comparator strategies $(x_0,x^c)$, $x_0 = \mathbf{1}/A$, horizon $T$, comparator margin $\delta \in (0,1)$.
\State \textbf{Parameters:} Constants $C_1,C_2>0$ for thresholding.
\State \textbf{Initialize:} $\mathfrak{C}\gets 10\sqrt{C_1^{\Psi}C_2^{\Psi}}$; global time $t\gets 1$.
\State \hspace{\algorithmicindent} \textbf{Delay-Stage} $s\gets 1$, Delay Estimate $\widehat D_s\gets 1$.
\State \hspace{\algorithmicindent} \textbf{Aggression-Phase} $k\gets 1$, Start of instance $\tau_s\gets 1$.
\State \textbf{Define functions:}
\State \hspace{\algorithmicindent} $\widehat R(D)\gets \mathfrak{C}\bigl(\sqrt{T}+\sqrt{D\log(D+1)}\bigr)$
\State \hspace{\algorithmicindent} $\widehat \xi(D)\gets \frac{\sqrt{8D+1}-1}{\delta},\quad B(D)\gets 2\widehat R(D)+\widehat \xi(D)$
\State \hspace{\algorithmicindent} Aggression $\alpha_k\gets \min\{1,\,1/\widehat R(\widehat D_s)\}$.
\State \hspace{\algorithmicindent} Initialize \texttt{Banker-OMD} state $x_{\tau_s}=x_0$.

\While{$t\le T$}
    \State {\color{blue}\{Step 1\}}
    \State {\color{blue}\{Adaptive Delay Check Stage Logic\}}
    \State Observe delays for arriving feedback. Let $D_t^{(s)}$ be total delay in current stage.
    \If{$D_t^{(s)}>\widehat D_s$}
        \State $s\gets s+1$, $\widehat D_s\gets 2\cdot D_t^{(s)}$
        \State {\color{blue}\{Doubling Trick on Delay\}}
        \State \textbf{Hard Restart:} $\tau_s\gets t+1$, $k\gets 1$, Reset $\mathcal{B}$ to $x_0$.
        \State Recalculate $\alpha_k$ with new $\widehat D_s$.
    \EndIf

    \State {\color{blue}\{Step 2: Base Learner (Banker-OMD)\}}
    \State Query learner $\mathcal{B}$ for base prediction $\hat x_t$.
    \State \emph{(Internal):} $\mathcal{B}$ computes step-size $\sigma_t$ based on $S$ and available feedback credits.

    \State {\color{blue}\{Step 3: Safe Mixture\}}
    \State Sample $A_t\sim x_t \leftarrow \alpha_k\hat x_t+(1-\alpha_k)x^c$.
    \State Incur loss $\ell_t(x_t)$ (unobserved), receive feedback for past rounds $u\in\mathcal F_t$.
    \State Update $\mathcal{B}$ with pairs $\{(u,x_u(A_u),\ell_u)\}$ for all $u\in\mathcal F_t$ and construct the importance-weighted estimator
    \State \hspace{\algorithmicindent} $\hat\ell_u\gets \frac{\ell_u(A_u)}{x_u(A_u)}\,e_{A_u}$.

    \State {\color{blue}\{Step 4: Phased Safety Check\}}
    \State Compute empirical gap against baseline on observed data:
    \State \hspace{\algorithmicindent} $\widehat\Delta_k(t)=\sum_{u=\tau_s}^{t}\mathbb I\{u\in\textsc{Arrived}\}\langle \hat\ell_u,x^c-\hat x_u\rangle$

    \If{$\widehat\Delta_k(t)>B(\widehat D_s)$ \textbf{and} $\alpha_k<1$}
        \State $k\gets k+1$
        \State {\color{blue}\{Enter new Aggression Phase\}}
        \State Increase aggression: $\alpha_k\gets \min\{2^{k-1}/\widehat R(\widehat D_s),1\}$
        \State \textbf{Soft Restart:} $\tau_s\gets t+1$, Reset $\mathcal{B}$ to $1/A$.
    \EndIf
\EndWhile
\end{algorithmic}
\end{algorithm}
\end{minipage}
\end{algobox}
\vspace{-4\baselineskip}
\end{wrapfigure}
The pseudocode for \textsc{Prudent-Banker} is provided in Algorithm~\ref{alg:prudent_banker}. For clarity, we present the core logic here; the formal definition, including the precise configuration of the \texttt{Banker-OMD} step-size schedules and constants, is deferred to the supplement (See Appendix~\ref{app:sec:alg}).

The algorithm orchestrates three hierarchical levels of adaptation:(1) \emph{Stages} to adapt to the unknown total delay $D$, (2) a \emph{Banker-OMD} core to handle local feedback latency, and 
(3) \emph{Phases} to adapt the aggression $\alpha$ for safety. More analytically, the following observations are in order.

\paragraph{1. Delay-Stage Logic (Unknown $D$).}
The outermost layer handles the unknown total delay. We define our threshold function $B(D)$ based on the regret $\widehat{R}(D)$ and a slack function $\xi(D)$ (Line 10). The algorithm maintains a delay estimate $\widehat{D}_s$, initially optimistic. If the accumulated delay within a stage exceeds this estimate (Line 17), a \emph{Stage} transition is triggered ($s \to s+1$). This involves doubling the estimate $\widehat{D}_{s+1} \approx 2 D_t^{(s)}$ and performing a \textbf{Hard Restart} (Line 20), effectively resetting the instance to adapt to the newly discovered delay scale.

\paragraph{2. Banker-OMD Core (Feedback Latency).}
At the heart of the process is the \texttt{Banker-OMD} learner $\mathcal{B}$. It manages local latency by maintaining an account of step-size credits, computing gradients using importance-weighted estimators $\hat{\ell}_u$ constructed only upon feedback arrival (Lines 28-29). This ensures that the base strategy $\hat{x}_t$ maintains $\tilde{O}(\sqrt{T} + \sqrt{D})$ regret regardless of the arrival order.

\paragraph{3. Aggression-Phase Logic (Safety).}
The safety mechanism is controlled by the aggression parameter $\alpha_k$. We continuously monitor the empirical gap $\widehat{\Delta}_k(t)$ between the baseline $x^c$ and the learner $\hat{x}$ on the \emph{observed} data (Lines 32-34).
Crucially, the restart threshold $B(\widehat{D}_s) = 2\widehat{R}(\widehat{D}_s) + \xi(\widehat{D}_s)$ includes a delay-dependent slack $\xi(\widehat{D}_s)$ scaling with $\sqrt{\widehat{D}_s}/\delta$. This term acts as a buffer against the hidden performance gap caused by missing feedback.
If the gap exceeds this threshold (Line 34), we conclude that the baseline is suboptimal. The algorithm then enters a new \emph{Phase} ($k \to k+1$), increases aggression to $\alpha_k \propto 2^{k-1}/\widehat{R}$ (Line 37), and performs a \textbf{Soft Restart}, resetting the learner to the uniform distribution to lock in safety gains while exploiting the optimal arm.

\begin{remark}[The roles of the two restarts]
\it A \textbf{Soft Restart}
implements phase aggression: once the observed data certifies that the safe
comparator is suboptimal, the algorithm starts a new phase with a larger weight
on the active learner. A \textbf{Hard Restart} implements delay adaptation: ideally,
\textsc{Prudent-Banker} would be calibrated using the Banker-OMD regret budget,
but this budget depends on the unknown total delay $D$, including future unseen
feedback. Since this quantity is not available online, the algorithm maintains a
delay guess $\widehat D_s$ and restarts whenever the certified delay scale
exceeds it. Thus, \textbf{Soft Restart} adapts the
baseline--exploration mixture, whereas \textbf{Hard Restart} adapt the regret budget to
the unknown delay landscape.
\end{remark}
\section{Main Results and Analysis}
\label{sec:main_results}
\subsection{Proof Sketch: Safety under Latency}
\vspace{-0.5em}
We now present our main theoretical guarantees. The analysis relies on a hierarchical decomposition of the regret, treating delay adaptation and safety constraints as coupled objectives. We begin with a high-level roadmap of the proof architecture. Our proof strategy unfolds in three logical steps:

\textbf{1. Decomposition via Stages and Phases.}
We first bound the number of delay-adaptive stages using \cref{lem:delay-mass}, ensuring that the doubling trick on $\widehat{D}_s$ incurs only logarithmic overhead. Within each stage, the regret is analyzed through the lens of \emph{phases}. We distinguish between \emph{Normal Phases} (where the algorithm maintains the baseline mixture) and \emph{Extinction Phases} (where the algorithm detects suboptimality and restarts).

\textbf{2. The Latency-Robust Base Learner.}
A core component is establishing that the base learner remains robust to arbitrary delays. \cref{lem:banker} proves that \texttt{Banker-OMD}, by managing a virtual account of step-sizes, achieves the optimal $\tilde{O}(\sqrt{T} + \sqrt{D})$ regret bound regardless of the arrival order of feedback.

\textbf{3. Bridging the Gap: The Cost of Missing Information.}
The most delicate aspect of the proof—distinguishing our work from classical phased aggression—is handling the \emph{pending feedback}. In standard settings, the empirical gap is exact. Here, the safety check is performed on stale data. We control this via \cref{lem:missing}, which bounds the potential deviation caused by unobserved losses. This bound motivates the slack term $\xi(\widehat{D}_s)$ in our threshold. Finally, \cref{lem:normal} and \cref{lem:extinction} demonstrate that the safe comparator regret is either $2^{k(s)-1}$ (in normal phases) or strictly negative ($-2^{k(s) -1}$ in extinction phases), allowing for a telescopic summation that cancels out the exploration cost, yielding the result of \cref{thm:main}.
\vspace{-0.25em}
\subsection{Our Algorithmic Guarantee}
\vspace{-0.25em}
Our first result shows that \textsc{Prudent-Banker} simultaneously achieves
near-minimax adversarial robustness and bounded regret against the safe
baseline, even under arbitrary delays.
\begin{theorembox}
\begin{theorem}[Global Safety and Robustness]
\label{thm:main}
For any delay sequence with total delay $D$, let $\{x_t\}_{t=1}^T$ be the
strategies generated by \textsc{Prudent-Banker} with step-size schedule
\[
\sigma_t=
\sqrt{\frac{C_2}{C_1}}
\left(
\frac{1}{\sqrt{t-\operatorname{start}_{k(s)}+1}}
+
\mathfrak d_t^{k(s)}
\sqrt{
\frac{\ln(\mathfrak D_t^{k(s)}+1)}
{\mathfrak D_t^{k(s)}}
}
\right)^{-1}.
\]
Then
\( \qquad
\mathcal R_T
\le
\widetilde O\!\left(\sqrt{C_1C_2}(\sqrt T+\sqrt D)\right) \text{ and }
\mathcal R_T(x^c)
\le O(1+\log D) =
\widetilde O(1).
\)
\end{theorem}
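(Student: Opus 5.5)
We would prove \cref{thm:main} by decomposing the horizon hierarchically, as in the roadmap: first into delay-stages $s=1,\dots,S$ (separated by Hard Restarts), then each stage into aggression-phases $k(s)=1,\dots,K_s$ (separated by Soft Restarts). The first step bounds the number of stages. Each Hard Restart sets $\widehat D_{s+1}=2D^{(s)}_t>2\widehat D_s$, so the estimates grow geometrically and never exceed $2D$. By \cref{lem:delay-mass}, $S\le O(1+\log D)$. Since $\widehat R(\widehat D_s)$ is monotone in $\widehat D_s$, the number of phases in a stage is also bounded: once $\alpha_k=1$ no further Soft Restart occurs, so $K_s\le 1+\log_2 \widehat R(\widehat D_s)=O(\log T+\log D)$. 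Every per-phase cost below is therefore summed over at most polylogarithmically many blocks.

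\textbf{Adversarial regret.} Fix a stage and a phase $k$ with mixing weight $\alpha_k$. Since $x_t-x=\alpha_k(\hat x_t-x)+(1-\alpha_k)(x^c-x)$, we split the phase regret into two parts. The first, $\alpha_k\sum_t\langle\ell_t,\hat x_t-x\rangle$, is bounded via \cref{lem:banker} with the stated step-size schedule by $\alpha_k\widehat R(\widehat D_s)$, using phase-local delays $D^{k(s)}\le \widehat D_s$; the importance weights remain bounded because $x_t\ge(1-\alpha_k)\delta$ or $x_t$ equals $\hat x_t$. The second is $(1-\alpha_k)\sum_t\langle\ell_t,x^c-x\rangle$. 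If the phase terminates by extinction, or is the final phase with $\alpha_k<1$, the no-restart condition gives $\widehat\Delta_k\le B(\widehat D_s)$. Combining this with \cref{lem:missing}, which bounds the hidden contribution of unarrived rounds by $\widehat\xi(\widehat D_s)$, and with the Banker bound relating $\langle\hat\ell,x^c-\hat x\rangle$ to the gap against $x$, we get that the true gap $\sum\langle\ell_t,x^c-x\rangle$ in the phase is at most $B+\widehat R+\widehat\xi+O(1)$ in expectation. Both parts are therefore $\widetilde O(\sqrt{C_1C_2}(\sqrt T+\sqrt{\widehat D_s}))$, since $\widehat\xi\le \sqrt{8D}/\delta$ and $1/\delta\lesssim\sqrt{C_1C_2}$ for the regularizers considered. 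Summing over polylogarithmically many phases and stages, with $\widehat D_s\le 2D$, yields the first claim.

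\textbf{Comparator regret.} The regret against $x^c$ in phase $k$ is exactly $\alpha_k\sum_t\langle\ell_t,\hat x_t-x^c\rangle$. In a \emph{normal} (last, non-extinct) phase, \cref{lem:normal} bounds this by $\alpha_k\cdot O(\widehat R+\widehat\xi)$, which we want to show is $O(2^{k-1})$. In an \emph{extinction} phase, \cref{lem:extinction} applies: the observed gap exceeded $2\widehat R+\widehat\xi$, the missing-feedback correction is at most $\widehat\xi$ (\cref{lem:missing}), and the learner's own regret is at most $\widehat R$. This leaves a true gap of at least $\widehat R$, so the phase contribution is at most $-\alpha_k\widehat R=-2^{k-1}$. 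Summing within a stage, $\sum_{j<K}(-2^{j-1})+O(2^{K-1})=O(1)$, telescoping geometrically, so each stage costs $O(1)$. Summing over the $O(1+\log D)$ stages gives $\mathcal R_T(x^c)=O(1+\log D)$.

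\textbf{Main obstacle.} The main difficulty is the normal-phase bound when $\alpha_k=1$ is reached, or when $\widehat\xi$ dominates $\widehat R$ in the threshold. If $\widehat\xi\gg\widehat R$, then $\alpha_k\widehat\xi$ is not $O(2^{k-1})$, so we must check that $\widehat R(\widehat D)$ dominates $\widehat\xi(\widehat D)$ up to constants. This holds through $\mathfrak C\ge 10\sqrt{C_1C_2}\gtrsim 1/\sqrt\delta$ only if $1/\delta\lesssim\sqrt{C_1C_2}\sqrt{\log}$. The first thing to try is to absorb $\widehat\xi$ into $\widehat R$ via $C_2\propto 1/\delta$ and $C_1\ge 1$. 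A second obstacle is expectation versus realization: the stopping times depend on the random estimators, so the extinction inequality must be applied through an optional-stopping or martingale argument on $\hat\ell_u-\ell_u$, whose increments are bounded by $1/((1-\alpha_k)\delta)$.
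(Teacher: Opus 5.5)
Your architecture (stages, phases, the mixture decomposition, extinction credits of $-2^{k-1}$, and an $O(1)$ sum per stage over $O(1+\log D)$ stages) matches the paper's. The gap is in the comparator-regret step for the final, non-extinct phase of a stage. You bound that phase's regret against $x^c$ through the restart threshold, obtaining $\alpha_K\cdot O(\widehat R_s+\widehat\xi_s)$, and then hope to absorb $\widehat\xi_s$ into $\widehat R_s$. This fails for two reasons. First, $\widehat\xi_s\lesssim\widehat R_s$ is false in general. For negative entropy, $\widehat R(\widehat D)\asymp\sqrt{\log A/\delta}\,(\sqrt T+\sqrt{\widehat D\log\widehat D})$ while $\widehat\xi(\widehat D)\asymp\sqrt{\widehat D}/\delta$. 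So when the delay term dominates, $\widehat\xi_s/\widehat R_s\asymp(\delta\log A\log\widehat D_s)^{-1/2}$, which is unbounded as $\delta\to0$. Second, even a final-phase bound $c\,2^{K-1}$ with a constant $c>1$ would not close the argument, because the cancellation is exact, not up to constants. The extinction phases supply a credit of only $\sum_{k<K}2^{k-1}=2^{K-1}-1$, which leaves $(c-1)2^{K-1}+1$, and $2^{K-1}$ can be of order $\widehat R_s$. So your ``$O(2^{K-1})$'' must really be $2^{K-1}+O(1)$.

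The missing observation is one you already wrote down. The phase-$k$ regret against $x^c$ is exactly $\alpha_k\sum_t\langle\hat\ell_t,\hat x_t-x^c\rangle$, i.e.\ $\alpha_k$ times the Banker-OMD regret against the fixed comparator $x^c$. \cref{lem:banker} bounds this directly by $\alpha_k\widehat R_s=\min\{2^{k-1},\widehat R_s\}\le 2^{k-1}$. That is precisely the second claim of \cref{lem:normal}, which you misquote; neither the threshold nor $\widehat\xi_s$ enters. It also settles your worry about $\alpha_K=1$: then $\widehat R_s\le 2^{K-1}$, so the final phase costs at most $\widehat R_s$ and the stage total is at most $1$ (the paper's Cases A and B). The slack $\widehat\xi_s$ is needed only in extinction phases, where you use it correctly against the $2m/\delta$ missing term.

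Two smaller remarks. In the adversarial part, your inequality $1/\delta\lesssim\sqrt{C_1C_2}$ fails as $\delta\to0$ for both regularizers (e.g.\ $\sqrt{C_1C_2}=\sqrt{\log A/\delta}$ for negative entropy). So the threshold contributes a term of order $\sqrt{\widehat D_s}/\delta$ to each normal-phase bound. The paper's proof carries the same $\frac{\sqrt{16D+1}-1}{\delta}$ term and simply writes the total as $\widetilde O(\widehat R(D)+\sqrt D)$, effectively treating $\delta$ as a constant. You match the paper there, but you should not justify it with that inequality. Finally, the optional-stopping concern you raise (random phase ends, data-dependent maximizer) is legitimate, but the paper's proof does not address it either; it simply takes expectations of the per-phase lemmas.
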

\end{theorembox}
\vspace{-0.75em}
 As an immediate consequence, in the
non-delayed case our bound closes the gap between the safety-constrained upper bound $O(\delta^{-1}\sqrt{T}))$
and lower bound $O(\sqrt{\delta^{-1}T}))$ left by \citet{MullerEtAl2025}.
\vspace{-1em}
\begin{corollary}[Optimal non-delayed safety-constrained regret]
\label{cor:nondelayed-optimal}
In the non-delayed setting $D=0$, instantiating \textsc{Prudent-Banker} with
negative-entropy regularization where $C_1C_2=\log A/\delta$ gives
\[
\mathcal R_T
\le
\widetilde O\!\left(\sqrt{{\delta^{-1}{T\log A}}}\right),
\qquad
\mathcal R_T(x^c)
\le O(1).
\]
Consequently, \textsc{Prudent-Banker} matches the safety-constrained lower
bound of order $\Omega(\sqrt{T/\delta})$ up to logarithmic factors.
\end{corollary}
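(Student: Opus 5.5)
The corollary follows by specializing \cref{thm:main} to $D=0$ and to the negative-entropy regularizer, then comparing with the known lower bound. No new analysis is needed.

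\textbf{Specializing the schedule.} With $D=0$ every round's feedback arrives at the end of that round. So $\mathfrak d_t^{k(s)}=0$ and $\mathfrak D_t^{k(s)}=0$ for all $t$, and the delay term in the step size drops out; I read the convention $0\cdot\sqrt{\ln 1/0}=0$ as the limit. The schedule reduces to the standard anytime OMD step $\sigma_t=\sqrt{C_2/C_1}\,\sqrt{t-\operatorname{start}_{k(s)}+1}$. Since the accumulated stage delay $D_t^{(s)}=0$ never exceeds $\widehat D_1=1$, no Hard Restart ever fires. Hence there is exactly one stage, and the stage count from \cref{lem:delay-mass} contributes only a constant.

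\textbf{Instantiating the constants.} The remark after \cref{def:regularity} gives $(C_1,C_2)=(\ln A,1/\delta)$ for negative entropy with $x_0=\mathbf 1/A$, so $\sqrt{C_1C_2}=\sqrt{\log A/\delta}$. Substituting into \cref{thm:main} with $D=0$ gives $\mathcal R_T\le\widetilde O(\sqrt{\log A/\delta}\,\sqrt T)=\widetilde O(\sqrt{\delta^{-1}T\log A})$.

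\textbf{Comparator bound.} The comparator bound $O(1+\log D)$ must be read carefully at $D=0$, since $\log D$ is undefined there. I would instead go back one level. The $\log D$ term in \cref{thm:main} comes only from summing the per-stage telescoping guarantee of \cref{lem:normal} and \cref{lem:extinction} over the number of stages. With a single stage, that telescoping yields $O(1)$ directly. The same holds for the slack $\widehat\xi(\widehat D_1)=(\sqrt 9-1)/\delta=2/\delta$: it is a fixed additive constant in the threshold $B$, and it only inflates $\widehat R$ by a constant factor. This is the one step needing care. I would verify that at $\widehat D=1$ the slack $2/\delta$ is dominated by $\widehat R(1)\gtrsim\sqrt{T/\delta}$ (true once $T\ge 1/\delta$, and trivially absorbed otherwise). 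That ensures the adversarial bound is not degraded and the phase-count argument bounding the comparator regret by $O(1)$ goes through.

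\textbf{Optimality.} Finally, I would invoke the safety-constrained lower bound $\Omega(\sqrt{T/\delta})$ of \citet{MullerEtAl2025} for algorithms with $O(1)$ comparator regret against a $\delta$-exploratory baseline. The upper bound matches it up to the $\sqrt{\log A}$ and polylogarithmic factors hidden in $\widetilde O$, which establishes the claimed optimality.
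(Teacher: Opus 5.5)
Your proposal is correct and follows the same route as the paper. The paper presents the corollary as an immediate specialization of \cref{thm:main} to $D=0$ with $(C_1,C_2)=(\ln A,1/\delta)$, combined with the $\Omega(\sqrt{T/\delta})$ lower bound of \citet{MullerEtAl2025}. Your extra care is well placed: the paper passes silently over $\log D$ being undefined at $D=0$ and over the surviving slack $\widehat\xi(1)=2/\delta$ (its appeal to $\widehat D_{s+1}<2D^{k(s)}_{\operatorname{end}_{k(s)}}$ fails when no Hard Restart occurs). Your single-stage telescoping and your split between $T\ge 1/\delta$ and the trivial bound $\mathcal R_T\le T$ handle both points. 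One small correction: the slack needs to be dominated only for the $\mathcal R_T$ bound, not for the comparator bound, since a larger threshold can only make extinction phases more negative against $x^c$.
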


\paragraph{Deciphering the arm dependence.}
Without the safety constraint, the minimax adversarial regret is bounded below by $\Omega(\sqrt{AT}+\sqrt{D\log A})$. It is therefore natural to ask how our guarantee compares to this unconstrained benchmark. If we take the safe comparator to be uniform (yielding $\delta=1/A$) and instantiate \textsc{\small Prudent-Banker} with negative entropy, Theorem~\ref{thm:main} gives an upper bound of $\widetilde{O}(\sqrt{AT\log A}+\sqrt{AD})$. Thus, relative to the unconstrained benchmark, our dependence on $T$ is optimal up to logarithmic factors, while the dependence on $D$ incurs an extra $\sqrt{A}$ penalty. This delay-side suboptimality is not an artifact of our baseline-safety mechanism, but rather an inherent limitation inherited directly from the \texttt{Banker-OMD} backbone.

However, this apparent inflation in regret is not an artifact of our analysis, but rather the fundamental price of safety. As we discuss in Section~\ref{sec:lower-bound}, our novel lower bounds prove that this resulting dependence is optimal for the safety-constrained problem, up to logarithmic factors.

\subsection{Key Lemmas}
\vspace{-0.5em}

\noindent We now detail the technical lemmas that constitute the building blocks of the proof.
\vspace{-0.5em}
\begin{lemma}[Conservation of Delayed Mass]
\label{lem:delay-mass}
Let $\mathfrak{D}_t^{k(s)}$ denote the cumulative number of missing feedback items inside phase $k(s)$ up to time $t$. Then, for all $\text{start}_{k(s)} \le t\le \text{end}_{k(s)}$:
\[
    \mathfrak{D}_t^{k(s)} \leq \mathfrak{D}_{\operatorname{end}_{k(s)}}^{k(s)} \leq D_{\operatorname{end}_{k(s)}}^{k(s)}
    \]
\end{lemma}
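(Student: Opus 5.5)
The first inequality is monotonicity. The second is a double-counting argument: each missing-feedback indicator is charged to the round $\tau$ that generated it.

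\emph{Monotonicity.} By definition $\mathfrak D_t^{k(s)}=\sum_{u=\operatorname{start}_{k(s)}}^{t}\mathfrak d_u^{k(s)}$. Each summand $\mathfrak d_u^{k(s)}=\sum_{\tau=\operatorname{start}_{k(s)}}^{u-1}\mathbf 1\{u\le \tau+d_\tau\}$ is a nonnegative integer. Hence $t\mapsto \mathfrak D_t^{k(s)}$ is nondecreasing on $[\operatorname{start}_{k(s)},\operatorname{end}_{k(s)}]$, and $\mathfrak D_t^{k(s)}\le \mathfrak D_{\operatorname{end}_{k(s)}}^{k(s)}$ for every $t$ in the phase.

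\emph{Double counting.} Write $a\coloneqq\operatorname{start}_{k(s)}$ and $b\coloneqq\operatorname{end}_{k(s)}$. Expanding the definition gives
\[
\mathfrak D_b^{k(s)}=\sum_{u=a}^{b}\sum_{\tau=a}^{u-1}\mathbf 1\{u\le \tau+d_\tau\}
=\sum_{\tau=a}^{b-1}\sum_{u=\tau+1}^{b}\mathbf 1\{u\le \tau+d_\tau\}.
\]
The swap of summation order is valid because both sides range over the same index set $\{(\tau,u): a\le\tau<u\le b\}$. For fixed $\tau$, the inner sum counts the integers $u$ with $\tau+1\le u\le \min\{b,\tau+d_\tau\}$. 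There are at most $d_\tau$ of them, so the inner sum is at most $d_\tau$. Summing over $\tau$ gives
\[
\mathfrak D_b^{k(s)}\le\sum_{\tau=a}^{b-1}d_\tau\le\sum_{\tau=a}^{b}d_\tau=D_b^{k(s)},
\]
where the last step uses $d_b\ge 0$. This is the claimed bound. Intuitively, round $\tau$ contributes to $\mathfrak d_u$ only for $u$ in the window $(\tau,\tau+d_\tau]$, which has length $d_\tau$, and truncation at the phase end only removes terms.

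\emph{Expected obstacle.} The main risk is a mismatch of conventions rather than any analytical difficulty. Feedback for round $\tau$ arrives at the \emph{end} of round $\tau+d_\tau$, and the indicator $\mathbf 1\{t\le \tau+d_\tau\}$ counts round $\tau$ as outstanding at times $u=\tau+1,\dots,\tau+d_\tau$, which is exactly $d_\tau$ time indices. I would check this window count carefully. In particular, if the count were $d_\tau+1$, the bound would become $D+(b-a)$ and would need adjusting. With the stated definitions the count is exactly $d_\tau$ before truncation, so the inequality holds as stated, and the global analogue $\mathfrak D_T\le D$ quoted from \citet{huang2023banker} follows by the same computation.
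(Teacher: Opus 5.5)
Your proof is correct and follows the paper's argument essentially step for step: monotonicity from the nonnegativity of each $\mathfrak d_u^{k(s)}$, then swapping the order of summation over pairs $(\tau,u)$ with $\operatorname{start}_{k(s)}\le \tau<u\le \operatorname{end}_{k(s)}$ and bounding each round's window count by $d_\tau$. The only cosmetic difference is that the paper first computes the inner count exactly as $\min\{d_\tau,\operatorname{end}_{k(s)}-\tau\}$ and then bounds it by $d_\tau$, whereas you bound it by $d_\tau$ directly.
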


\begin{remark}
This structural result allows us to decouple the instantaneous missing mass from the time index. It implies that the ``penalty'' for missing information in any regret bound is bounded by the total delay complexity $D_{\operatorname{end}_{k(s)}}^{k(s)}$ inside phase $k(s)$, rather than the length of phase $k(s)$. This validates the usage of $D_{\operatorname{end}_{k(s)}}^{k(s)}$ as the natural complexity measure for the delay-adaptive step-sizes.
\end{remark}

\begin{lemma}[Bound on Estimated Total Delay]
\label{lem:delay-estimator}
    For every phase $k(s)$ inside stage $s$, the total-delay capacity satisfies:
    \[
    \widehat{D}_{s} \leq \widehat{D}_{s+1}<2 D_{\operatorname{end}_{k(s)}}^{k(s)}
    \]
\end{lemma}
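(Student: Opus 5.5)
This is essentially a bookkeeping statement about the doubling rule, and I would prove it by reading off the stage-transition logic of Algorithm~\ref{alg:prudent_banker}. The guiding observation is that $\widehat D$ is only ever modified by a Hard Restart, through the update $\widehat D_{s+1}\gets 2\, D_t^{(s)}$, and only when the test $D_t^{(s)}>\widehat D_s$ fires. The proof therefore splits into two inequalities, one for each side of the statement.

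\textbf{Left inequality $\widehat D_s\le \widehat D_{s+1}$.} At the moment a stage transition fires we have $D_t^{(s)}>\widehat D_s$. Hence
\[
\widehat D_{s+1}=2D_t^{(s)}>2\widehat D_s\ge \widehat D_s,
\]
where the last step uses $\widehat D_s\ge 1>0$, which holds by induction starting from $\widehat D_1=1$. If stage $s$ is the final stage and no transition ever occurs, I would adopt the convention $\widehat D_{s+1}:=\widehat D_s$. This is the natural reading, since the estimate is then simply never updated, and the left inequality holds trivially.

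\textbf{Right inequality $\widehat D_{s+1}<2D^{k(s)}_{\operatorname{end}_{k(s)}}$.} Here I must relate the stage-level delay $D_t^{(s)}$ that triggers doubling to the phase-local quantity $D^{k(s)}_{\operatorname{end}_{k(s)}}=\sum_{r=\operatorname{start}_{k(s)}}^{\operatorname{end}_{k(s)}} d_r$. The plan is to work with the phase that closes the stage. When a Hard Restart fires at time $t$, the current phase $k(s)$ ends at $t$, so $\operatorname{end}_{k(s)}=t$. I then need the comparison
\[
D_t^{(s)}\le D^{k(s)}_{\operatorname{end}_{k(s)}},
\]
which gives $\widehat D_{s+1}=2D_t^{(s)}\le 2D^{k(s)}_{\operatorname{end}_{k(s)}}$.

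The difficulty is that $D_t^{(s)}$ is accumulated from the start of the stage $\tau_s$, and a stage may contain several soft-restarted phases. Whether $D_t^{(s)}\le D^{k(s)}_{\operatorname{end}_{k(s)}}$ holds depends on how the observed delay counter is indexed.

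\textbf{Main obstacle: justifying this comparison and obtaining strictness.} Two points need care.
\begin{itemize}
\item The observed counter $D_t^{(s)}$ only counts delays of rounds whose feedback has arrived by time $t$, and each such delay is at most the true $d_r$. So observed delay never exceeds true delay over the same index range.
\item The algorithm sets $\tau_s\gets t+1$ on both soft and hard restarts. If the stage counter is measured from $\tau_s$, i.e.\ from the start of the current phase, then the index ranges coincide and the comparison follows directly from the first point.
\end{itemize}

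For strictness I would argue as follows. Once a round is observed at time $t$, its full delay has already elapsed. A strict gap then comes from rounds still pending at $t$ whose delays appear in $D^{k(s)}_{\operatorname{end}}$ but not in $D_t^{(s)}$. If no such rounds exist, I would instead use the inequality $\widehat D_s<D_t^{(s)}$ at the trigger. If strictness still fails in an edge case, I would settle for $\le$, which suffices for the downstream use. Applying this to every phase in the stage, and noting that $\widehat D_s$ is constant within a stage, extends the bound to all phases $k(s)$ via the left inequality.
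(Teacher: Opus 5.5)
Your left inequality is fine; the paper argues it the same way ($\widehat D_{s+1}=2^m\widehat D_s$ with $m\ge 1$ on a reset, unchanged otherwise). The strict right inequality, however, is not proved, and under the update rule you use it is false. You take $\widehat D_{s+1}=2D_t^{(s)}$ from the main-text pseudocode. As your second bullet observes, the counter restarts at every soft and hard restart. In the formal algorithm of Appendix~\ref{app:sec:alg} it is $D_t^{(s)}=\sum_{r=\operatorname{start}}^{t}d_r$, a sum of the \emph{true} delays rather than the observed ones. So at the trigger time $t=\operatorname{end}_{k(s)}$ one has $D_t^{(s)}=D^{k(s)}_{\operatorname{end}_{k(s)}}$ exactly, and your rule gives $\widehat D_{s+1}=2D^{k(s)}_{\operatorname{end}_{k(s)}}$ with equality. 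There is no observed-versus-true gap to exploit. Your fallback $\widehat D_s<D_t^{(s)}$ only bounds $\widehat D_{s+1}$ from below, so it cannot yield strictness, and settling for $\le$ does not prove the statement.

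The missing ingredient is the power-of-two rounding in the formal update $\widehat D_{s+1}=2^{\lceil\log_2 D_t^{(s)}\rceil}$. Every $\widehat D_s$ is a power of two, so this equals $2^m\widehat D_s$ with $m\ge 1$ minimal such that $2^m\widehat D_s\ge D^{k(s)}_{\operatorname{end}_{k(s)}}$. Minimality (or, for $m=1$, the trigger condition itself) gives $2^{m-1}\widehat D_s<D^{k(s)}_{\operatorname{end}_{k(s)}}$, i.e.\ $\widehat D_{s+1}<2D^{k(s)}_{\operatorname{end}_{k(s)}}$. Equivalently, $2^{\lceil\log_2 x\rceil}<2^{\log_2 x+1}=2x$ for $x>0$. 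This is exactly the paper's argument: strictness comes from the rounding, not from pending feedback.

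Your closing step, extending the bound to every phase of the stage ``via the left inequality,'' also fails. A phase that ended with a soft restart never tripped the stage test, and its counter is phase-local, so $D^{k(s)}_{\operatorname{end}_{k(s)}}\le\widehat D_s$. But $\widehat D_{s+1}\ge 2\widehat D_s$, hence $\widehat D_{s+1}\ge 2D^{k(s)}_{\operatorname{end}_{k(s)}}$ for such a phase. Likewise, in the last stage your convention $\widehat D_{s+1}:=\widehat D_s$ does not give $\widehat D_s<2D^{k(s)}_{\operatorname{end}_{k(s)}}$. The strict bound can only be asserted for the phase in which the hard restart fires. That is also the only case the paper's proof actually covers, since it starts from the trigger $D^{k(s)}_{\operatorname{end}_{k(s)}}>\widehat D_s$.
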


\begin{remark}
    This ensures that the number of stages remains logarithmic ($S \approx O(\log D)$), allowing us to sum per-stage regrets efficiently without incurring a linear penalty. For each phase $k(s)$ inside stage $s$, they share the same $\widehat D_s$. This indicates we can interpret $\widehat D_s = \widehat D_{k(s)}$. 
\end{remark}

\begin{lemma}[Banker-OMD Guarantee]
\label{lem:banker}
Consider the \texttt{Banker-OMD} iterates $\{\hat{x}_t\}$ using adaptive the step-sizes of Theorem~\ref{thm:main}
Then, the regret against any comparator $x \in \Delta_A$ satisfies:
\[
\max _{x \in \Delta_A} \mathbb{E}\left[\sum_{t=\mathrm{start}_{k(s)}}^{\mathrm{end}_{k(s)}} \langle \hat{\ell}_t, \hat{x}_t - x \rangle\right]
\;\le\;
\tilde{O}\!\left(\sqrt{C_1C_2}\,(\sqrt{T_s}+\sqrt{D_{\operatorname{end}_{k(s)}}^{k(s)}})\right)
\]
where $T_s = \text{end}_{k(s)} - \text{start}_{k(s)} + 1$.
\end{lemma}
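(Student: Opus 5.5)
We prove Lemma~\ref{lem:banker} by treating the phase $k(s)$ as a fresh run of \texttt{Banker-OMD} started at $x_0$. This is legitimate because every soft or hard restart resets $\mathcal B$ to $x_0=\mathbf 1/A$, and the step-size schedule of Theorem~\ref{thm:main} depends only on phase-local quantities. The plan is to import the generic Banker-OMD regret decomposition of \citet{huang2023banker} and then evaluate it for our specific $\sigma_t$.

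\textbf{Step 1: generic decomposition.} For any fixed $x\in\Delta_A$, the Banker-OMD potential argument (virtual accounts of step-size credits) gives
\[
\mathbb E\Bigl[\sum_{t=\mathrm{start}_{k(s)}}^{\mathrm{end}_{k(s)}}\langle\hat\ell_t,\hat x_t-x\rangle\Bigr]
\le \mathbb E\bigl[\sigma_{\mathrm{end}}\bigr]\,D_\Psi(x,x_0)+\sum_{t}\mathbb E\bigl[\sigma_t D_\Psi(\hat x_t,\tilde z_t)\bigr].
\]
Here $\tilde z_t$ is the intermediate point of the delayed update. The first term is the bias/diameter term, paid at the largest accumulated step size. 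The second term is the per-round stability term, which by the Banker construction already absorbs the contribution of pending feedback into the denominator of $\sigma_t$.

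Two regularity facts then apply. By $(C_1,C_2)$-regularity, $D_\Psi(x,x_0)\le C_1$. The local norm stability bound gives $\mathbb E[\sigma_tD_\Psi(\hat x_t,\tilde z_t)]\le C_2/\sigma_t$. This bound is where Assumption~\ref{ass:exploratory} enters: the estimator divides by $x_u(A_u)\ge(1-\alpha_k)\delta$, or in general by a mixture bounded away from zero, and this is why $C_2\propto1/\delta$. The resulting bound is
\[
C_1\sigma_{\mathrm{end}}+C_2\sum_t\sigma_t^{-1}.
\]

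\textbf{Step 2: plug in the step size.} Write $\tau=t-\mathrm{start}_{k(s)}+1$. Then
\[
\sigma_t^{-1}=\sqrt{C_1/C_2}\,\bigl(\tau^{-1/2}+\mathfrak d_t^{k(s)}\sqrt{\ln(\mathfrak D_t^{k(s)}+1)/\mathfrak D_t^{k(s)}}\bigr).
\]
For the stability term, $C_2\sum_t\sigma_t^{-1}=\sqrt{C_1C_2}\,\bigl(\sum_\tau\tau^{-1/2}+\sum_t\mathfrak d_t\sqrt{\ln(\mathfrak D_t+1)/\mathfrak D_t}\bigr)$. The first sum is at most $2\sqrt{T_s}$. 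For the second sum, since $\mathfrak D_t-\mathfrak D_{t-1}=\mathfrak d_t$, the standard inequality $\sum_t a_t/\sqrt{\sum_{r\le t}a_r}\le2\sqrt{\sum_t a_t}$ bounds it by $2\sqrt{\mathfrak D_{\mathrm{end}}\ln(\mathfrak D_{\mathrm{end}}+1)}$. Lemma~\ref{lem:delay-mass} then gives $\mathfrak D_{\mathrm{end}}^{k(s)}\le D^{k(s)}_{\mathrm{end}}$.

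For the bias term we need an upper bound on $\sigma_{\mathrm{end}}$, which is the delicate point. The naive inverse is small only if the denominator is large, but $\sigma$ must also satisfy the Banker credit condition. Here I would use the Banker analysis, in which the diameter term is multiplied by the maximal \emph{effective} learning-rate sum. Following \citet{huang2023banker}, this is bounded by $\sqrt{C_2/C_1}\cdot\bigl(\sqrt{T_s}+\sqrt{\mathfrak D_{\mathrm{end}}/\ln(\mathfrak D_{\mathrm{end}}+1)}\bigr)$ up to constants. Multiplying by $C_1$ yields $\sqrt{C_1C_2}\,(\sqrt{T_s}+\sqrt{D^{k(s)}_{\mathrm{end}}})$.

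\textbf{Step 3: conclude and identify the obstacle.} Summing both terms, and noting that every bound is uniform in $x$, the maximum over $x$ can be taken outside the expectation because the adversary is oblivious and the comparator is fixed. This yields the claimed $\tilde O\bigl(\sqrt{C_1C_2}(\sqrt{T_s}+\sqrt{D^{k(s)}_{\mathrm{end}}})\bigr)$.

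I expect the main obstacle to be Step 1–2's handling of the bias term under delays. One must verify that the Banker credit accounting remains valid when phases are truncated by soft or hard restarts. Specifically, feedback for rounds in an earlier phase that arrives during the current phase must be discarded, so that the phase-local counters $\mathfrak d^{k(s)},\mathfrak D^{k(s)}$ correctly govern the credits. I would check this first by confirming that discarded cross-phase feedback only decreases $\mathfrak d_t^{k(s)}$ and therefore leaves the stability bound intact.
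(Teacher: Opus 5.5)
You handle the stability term $C_2\sum_t\sigma_t^{-1}$ as the paper does: the $2\sqrt{T_s}$ bound, then $\sum_t a_t/\sqrt{\sum_{r\le t}a_r}\le 2\sqrt{\sum_t a_t}$ with $a_t=\mathfrak d_t^{k(s)}$, then Lemma~\ref{lem:delay-mass}. The diameter half, however, has a genuine gap.

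\textbf{The decomposition in Step~1 is not the Banker-OMD bound.} Your bias term $\sigma_{\mathrm{end}}D_\Psi(x,x_0)$ is only what Banker's accounting reduces to without delays, where each round draws its predecessor's full credit and borrows just $\sigma_t-\sigma_{t-1}$. The decomposition the paper uses (Corollary~4.3 of \citet{huang2023banker}) is $C_1\,\mathbb E[B_{\mathrm{end}}]+C_2\,\mathbb E[\sum_t\sigma_t^{-1}]$. Here $B_{\mathrm{end}}=\sum_t b_t$ is the total step-size mass borrowed from $x_0$ because arrived feedback could not cover $\sigma_t$. Your formula cannot be right here. Since $\sigma_t\le\sqrt{C_2/C_1}\sqrt{t-\mathrm{start}_{k(s)}+1}$, your bias term would be at most $\sqrt{C_1C_2}\sqrt{T_s}$, with no delay dependence at all. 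So the ``delicate point'' you sense does not appear in your own formula. In the correct bound, delays enter the diameter term exactly through borrowing.

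\textbf{Bounding $B_{\mathrm{end}}$ is the substantive content of the lemma, and you replace it with an assertion.} You appeal to an unspecified ``maximal effective learning-rate sum'' and state a bound ``up to constants.'' The missing argument runs as follows.
\begin{enumerate}
\item Take the last round $T_0$ at which borrowing occurs. By Lemma~4.5 of \citet{huang2023banker}, $B_{\mathrm{end}}=\sigma_{T_0}+\sum_{i=1}^m\sigma_{t_i}$. Here $t_1<\dots<t_m$ are the phase rounds whose feedback is still outstanding at $T_0$, and $m=\mathfrak d^{k(s)}_{T_0}$.
\item Bound $\sigma_{t_1}$ by the $\sqrt{T_s}$ coming from the $1/\sqrt{\tilde t}$ part of $\sigma_t^{-1}$.
\item Bound every other term using $\sigma_t\le\frac{1}{\mathfrak d_t}\sqrt{\mathfrak D_t/\ln(\mathfrak D_t+1)}$, up to the factor $\sqrt{C_2/C_1}$. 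Use $\mathfrak d^{k(s)}_{t_i}\ge i-1$, which holds because $t_1,\dots,t_{i-1}$ are still pending at $t_i$. Together with monotonicity of $x\mapsto\sqrt{x/\ln(x+1)}$ and Lemma~\ref{lem:delay-mass}, this gives a harmonic sum of order $\ln m$.
\item A triangular-number count gives $m=O\bigl(\sqrt{D^{k(s)}_{\mathrm{end}}}\bigr)$.
\end{enumerate}
The result is $B_{\mathrm{end}}\lesssim\sqrt{C_2/C_1}\bigl(\sqrt{T_s}+\sqrt{D/\ln(D+1)}\,\ln D\bigr)$, which is of order $\sqrt{D\ln D}$, not the $\sqrt{D/\ln D}$ you claim. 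The difference is harmless inside $\tilde O$, but it shows the bound was not derived. The $\sqrt{\ln}$ factor in $\sigma_t$ exists precisely to balance this $\ln m$ against the stability term.

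\textbf{How to close the gap.} Either supply this argument, or cite the Huang et al.\ theorem for this exact schedule and apply it to the phase as a fresh run started at $x_0$. Your restart and phase-locality remark does justify the second option.
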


\begin{remark}
\cref{lem:banker} establishes the ``best-of-both-worlds'' capability of the base learner. By dynamically adjusting the learning rate $\sigma_t$ based on the arrived data density, the algorithm essentially pauses learning during high-latency periods and accelerates when feedback becomes available, ensuring optimal worst-case performance.
\end{remark}

\begin{lemma}[Bound on Missing Feedback]
\label{lem:missing}
Let $\mathcal{M}(T_0)$ denote the set of rounds $u \le T_0$ for which feedback has not arrived by time $T_0$ inside phase $k(s)$. Then, for any comparator $x \in \Delta_A$:
\[
    \sum_{u\in\mathcal{M}(T_0)} \langle \hat{\ell}_u,\, x^c-x\rangle
    \;\le\;
    O(\sqrt{D_{\operatorname{end}_{k(s)}}^{k(s)}}) .
\]
\end{lemma}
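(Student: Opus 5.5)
Note first that the bound can only hold in expectation, or with a constant depending on $\delta$. The paper's own discussion ("distortion as large as $O(\sqrt D/\delta)$") and the slack $\widehat\xi(D)=(\sqrt{8D+1}-1)/\delta$ both point to a constant of order $1/\delta$. I would therefore aim to prove
\[
\sum_{u\in\mathcal M(T_0)}\langle\hat\ell_u,x^c-x\rangle\le \frac{\sqrt{8D^{k(s)}_{\operatorname{end}_{k(s)}}+1}-1}{\delta}
\]
deterministically, which is $O(\sqrt{D}/\delta)=O(\sqrt{D})$ for fixed $\delta$.

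\textbf{Step 1: pointwise bound per missing round.} For each missing round $u$ the estimator is $\hat\ell_u=\frac{\ell_u(A_u)}{x_u(A_u)}e_{A_u}$. Since $x_u=\alpha_k\hat x_u+(1-\alpha_k)x^c$, we have $x_u(a)\ge(1-\alpha_k)\delta$. When $\alpha_k<1$, an $\alpha_k$-weighted bound on $x^c$ suffices, and in any case the comparator part gives $\langle\hat\ell_u,x^c\rangle=\ell_u(A_u)x^c(A_u)/x_u(A_u)$. Because $\langle\hat\ell_u,x\rangle\ge0$, I drop the $-x$ term and bound $\langle\hat\ell_u,x^c-x\rangle\le x^c(A_u)/x_u(A_u)$. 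I would keep the statement phrased through $1/\delta$-type importance weights: either use $x_u(A_u)\ge\delta$ whenever $\alpha_k\le 1-$const, or take expectations, where $\mathbb E[\langle\hat\ell_u,x^c\rangle]=\langle\ell_u,x^c\rangle\le1$. In the expected version each missing round contributes at most $1$. In the deterministic version each contributes at most $1/\delta$.

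\textbf{Step 2: counting missing rounds.} This is the combinatorial core. Let $m=|\mathcal M(T_0)|$, and list the missing rounds inside the phase as $u_1>u_2>\dots>u_m$. Since $u_j\le T_0-(j-1)$, round $u_j$ is still outstanding at $T_0$ and has delay $d_{u_j}\ge T_0-u_j\ge j-1$. Hence
\[
D^{k(s)}_{\operatorname{end}_{k(s)}}\ge\sum_{j=1}^m (j-1)=\frac{m(m-1)}{2}.
\]
Solving the quadratic gives $m\le \tfrac12\bigl(1+\sqrt{8D+1}\bigr)$. A slightly sharper indexing, using $d_{u_j}\ge j$ because feedback arriving at the end of $T_0$ counts as observed, yields $m(m+1)/2\le D$, i.e. $m\le(\sqrt{8D+1}-1)/2$. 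This matches the form of $\widehat\xi$ exactly, and \cref{lem:delay-mass} guarantees that the phase-local delay $D^{k(s)}_{\operatorname{end}_{k(s)}}$ dominates the partial sums.

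\textbf{Step 3: combine.} Multiplying the count $m$ by the per-round bound from Step 1 gives at most $(\sqrt{8D+1}-1)/\delta$ (absorbing the factor 2), which is $O(\sqrt{D})$.

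\textbf{Main obstacle.} The delicate point is the per-round bound. Importance weights are bounded by $1/x_u(A_u)$, and the mixture only guarantees $x_u\ge(1-\alpha_k)\delta$, which degenerates as $\alpha_k\to1$. I would first try to use the numerator $x^c(A_u)$ together with the mixture to get $x^c(A_u)/x_u(A_u)\le 1/(1-\alpha_k)$. If that fails, I would fall back to the expectation argument. There, $\mathcal M(T_0)$ depends only on the oblivious delays, not on $A_u$, so it is a deterministic set, and each term has expectation at most $1$. This gives the cleaner $\mathbb E[\cdot]\le(\sqrt{8D+1}-1)/2$.
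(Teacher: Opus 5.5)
Your skeleton, a per-round bound on each missing term times a triangular-number count of the missing rounds, is the paper's. Your sharper count $m(m+1)/2\le D^{k(s)}_{\operatorname{end}_{k(s)}}$ is exactly its Step~1.

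\textbf{The gap is in the per-round step.} It comes from your opening claim that the bound ``can only hold in expectation, or with a constant depending on $\delta$.'' That claim is false. It leads you to commit to $(\sqrt{8D+1}-1)/\delta$, which is $O(\sqrt{D})$ only if $\delta$ is treated as a constant. The paper does not treat it so: $\delta\le 1/A$, and the appendix version of the lemma is the $\delta$-free bound $2\sqrt{2D^{k(s)}_{\operatorname{end}_{k(s)}}+1/4}-1$.

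The missing idea is the one you list as a ``first try'' and then abandon. Do not bound the numerator and denominator separately. Instead use that the mixture dominates $x^c$ coordinatewise, $x_u(A_u)\ge(1-\alpha_k)x^c(A_u)$, so pathwise
\[
\langle\hat\ell_u,x^c-x\rangle\le\frac{\ell_u(A_u)\,x^c(A_u)}{x_u(A_u)}\le\frac{1}{1-\alpha_k}\le 2 .
\]
The condition $\alpha_k\le 1/2$ follows from $\alpha_k<1$ once $\widehat R$ is taken to be a power of two, which is the paper's WLOG. Without some such assumption, $1-\alpha_k$ can be arbitrarily small. Multiplying by $m$ gives the deterministic bound $2m\le 2\sqrt{2D^{k(s)}_{\operatorname{end}_{k(s)}}+1/4}-1$, with no $1/\delta$ and no expectation.

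The $1/\delta$ you read off from $\widehat\xi$ and from the ``$O(\sqrt D/\delta)$ distortion'' belongs to the opposite sign. In the extinction-phase analysis the missing terms are $\langle\hat\ell_t,x^*-x^c\rangle\le x^*(A_t)/x_t(A_t)$. Their numerator is not $x^c(A_t)$, so only $x_t(A_t)\ge\delta/2$ is available and each term costs $2/\delta$. For the $x^c-x$ direction of this lemma, the importance weight cancels against the numerator. Your weaker bound would still suffice in order for the normal-phase lemma, where a $\sqrt{D}/\delta$ term already appears, but it is not the statement.

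\textbf{Drop the expectation fallback as written.} $\mathcal M(T_0)$ is not a deterministic set, because $T_0$ and $\operatorname{start}_{k(s)}$ are data-dependent restart times. Worse, a round whose feedback arrives exactly at the triggering round lies in $\mathcal M(T_0)$ while its $\hat\ell_u$ enters the restart statistic. So ``each term has expectation at most $1$'' does not follow from the delays being oblivious. The pathwise bound above avoids this issue entirely.
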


\begin{remark}
This lemma quantifies the risk of acting on stale information. Intuitively, missing feedback obscures the true performance gap. \cref{lem:missing} bounds the worst-case error introduced by these ``phantom'' losses. This bound is pivotal for the safety certification, as it dictates the magnitude of the slack term $\xi$ required in the restart condition to prevent false positives.
\end{remark}

\begin{lemma}[Normal Phase Analysis]
\label{lem:normal}
Consider any phase $k(s)$ that does not trigger a restart (i.e., $\alpha_{k(s)} < 1$). For all $x \in \Delta_A$, the regret on arrived data is bounded by the threshold:
{\setlength{\abovedisplayskip}{0pt}
\setlength{\belowdisplayskip}{0pt}
\(\phantom{11111111111111111}
\mathbb{E}\!\left[\sum_{t=\mathrm{start}_{k(s)}}^{\mathrm{end}_{k(s)}}
    \langle \hat{\ell}_t, x_t - x\rangle\right]
    \;\le\;
    4 \widehat R (D^{k(s)}_{\text{end}_{k(s)}}) + O(\sqrt{D^{k(s)}_{\text{end}_{k(s)}}}).
\)}\\
Furthermore, the regret against the safe baseline satisfies:
\(
\mathbb{E}\!\left[\sum_{t=\mathrm{start}_{k(s)}}^{\mathrm{end}_{k(s)}}
\langle \hat{\ell}_t, x_t-x^c\rangle\right] \;\le\; 2^{k(s)-1}.
\)
\end{lemma}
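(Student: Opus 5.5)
We use the mixing identity $x_t-x=\alpha_{k}(\hat x_t-x)+(1-\alpha_k)(x^c-x)$. Summing over the phase, the estimated regret splits into an active-learner part and a baseline part:
\[
\sum_t\langle\hat\ell_t,x_t-x\rangle=\alpha_k\sum_t\langle\hat\ell_t,\hat x_t-x\rangle+(1-\alpha_k)\sum_t\langle\hat\ell_t,x^c-x\rangle .
\]
The first sum is handled by \cref{lem:banker}. Its bound is $\tilde O(\sqrt{C_1C_2}(\sqrt{T_s}+\sqrt{D^{k(s)}}))$, which is at most $\widehat R$ at the current delay scale, because $\mathfrak C=10\sqrt{C_1C_2}$ absorbs constants. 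By \cref{lem:delay-estimator}, $\widehat R(\widehat D_s)\le \widehat R(2D^{k(s)}_{\mathrm{end}})\le 2\widehat R(D^{k(s)}_{\mathrm{end}})$, since $\widehat R$ grows like a square root up to logarithms.

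For the second sum, we write $x^c-x=(x^c-\hat x_t)+(\hat x_t-x)$. The first piece is exactly the statistic $\widehat\Delta_k$, except that it also includes rounds whose feedback is still pending.

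\begin{itemize}
\item \emph{Arrived rounds.} The restart test was never triggered, so the arrived part satisfies $\widehat\Delta_k(t)\le B(\widehat D_s)=2\widehat R(\widehat D_s)+\widehat\xi(\widehat D_s)$.
\item \emph{Pending rounds.} By \cref{lem:missing}, these contribute at most $O(\sqrt{D^{k(s)}_{\mathrm{end}}})$.
\item \emph{The $(\hat x_t-x)$ piece.} This is again bounded by \cref{lem:banker}.
\end{itemize}

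Collecting terms, the total is at most $2\widehat R+2\widehat R$ (more precisely, the sum of the two Banker terms together with $B$'s $2\widehat R$, with constants chosen so that everything fits under $4\widehat R(D^{k(s)}_{\mathrm{end}})$). The slack $\widehat\xi$ and the missing-feedback term are $O(\sqrt{D})$ for fixed $\delta$, and they go into the $O(\sqrt{D^{k(s)}_{\mathrm{end}}})$ remainder. Expectations pass through directly, because the estimators are unbiased conditional on the past. The convex combination over $\alpha_k\in[0,1]$ keeps the bound at the larger of the two branch bounds.

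For the baseline regret, we set $x=x^c$ in the mixing identity. The baseline part then vanishes:
\[
\sum_t\langle\hat\ell_t,x_t-x^c\rangle=\alpha_k\sum_t\langle\hat\ell_t,\hat x_t-x^c\rangle=-\alpha_k\sum_t\langle\hat\ell_t,x^c-\hat x_t\rangle .
\]
In a non-restarting phase, this is at most $\alpha_k$ times (minus the negative part of the gap statistic). The crude bound is $\alpha_k\cdot\max_x\sum\langle\hat\ell_t,\hat x_t-x\rangle\le \alpha_k\,\widehat R(\widehat D_s)$, again by \cref{lem:banker} together with the calibration $\mathfrak C$. Since $\alpha_k=2^{k-1}/\widehat R(\widehat D_s)$, this gives
\[
\mathbb E\Big[\sum_t\langle\hat\ell_t,x_t-x^c\rangle\Big]\le 2^{k(s)-1}.
\]

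The main obstacle is the bookkeeping that makes the constant come out exactly. The Banker bound at the actual phase length and delay has to be dominated by $\widehat R(\widehat D_s)$, which is the quantity defining $\alpha_k$. We would argue this with three ingredients:
\begin{itemize}
\item within a stage, the phase delay $D^{k(s)}\le\widehat D_s$, because otherwise a hard restart would have fired;
\item \cref{lem:delay-mass} controls the $\mathfrak D$ terms in $\sigma_t$;
\item $T_s\le T$.
\end{itemize}
Together these give Banker regret $\le\mathfrak C(\sqrt T+\sqrt{\widehat D_s\log(\widehat D_s+1)})=\widehat R(\widehat D_s)$. The remaining subtlety is the phase's final round, where the stage-change check fires after the overflow. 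We would handle it by noting that a single round adds at most one unit of delay mass, and that this unit is absorbed by the factor $10$.
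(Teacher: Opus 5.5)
Your handling of the second claim matches the paper's: set $x=x^c$, the baseline branch vanishes, and $\alpha_k\widehat R(\widehat D_s)=2^{k(s)-1}$. For the first claim, most of your skeleton is also the paper's: the mixing identity, \cref{lem:banker} for the learner, the failed restart test on arrived rounds, \cref{lem:missing} on pending rounds, and \cref{lem:delay-estimator} to pass from $\widehat D_s$ to $D^{k(s)}_{\mathrm{end}}$.

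The difference is the statistic. You use the main-text statistic $\widehat\Delta_k(t)=\sum\langle\hat\ell_u,x^c-\hat x_u\rangle$. The paper's proof uses the test of the formal algorithm, $\max_{u\in\Delta_A}\langle g_t,x^c-u\rangle$. That test bounds $\sum_{\mathrm{arrived}}\langle\hat\ell_j,x^c-x\rangle$ for every fixed $x$ at once. So the paper invokes \cref{lem:banker} only in the $\alpha_k$-branch and obtains $\alpha_k(\widehat R_s+1)+(1-\alpha_k)\bigl(2\widehat R_s+\widehat\xi_s+O(\sqrt{D^{k(s)}_{\mathrm{end}}})\bigr)$. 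Its leading part $2\widehat R_s$ then becomes $4\widehat R(D^{k(s)}_{\mathrm{end}})$.

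Your route is what the main-text statistic forces, and structurally it is sound. It collapses to $\sum_t\langle\hat\ell_t,\hat x_t-x\rangle+(1-\alpha_k)\sum_t\langle\hat\ell_t,x^c-\hat x_t\rangle$. Applying \cref{lem:missing} with the time-varying $\hat x_u$ is fine, because its proof is termwise and uses only $\langle\hat\ell_u,\hat x_u\rangle\ge 0$.

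The price is that your leading term is $\widehat R(D^{k(s)}_{\mathrm{end}})+2\widehat R_s$, not $2\widehat R_s$. Your step ``with constants chosen so that everything fits under $4\widehat R$'' is not available, since $\widehat R$ is fixed by the algorithm. With the paper's own conversion $\widehat R_s\le 2\widehat R(D^{k(s)}_{\mathrm{end}})$, you get $5\widehat R(D^{k(s)}_{\mathrm{end}})+O(\sqrt{D^{k(s)}_{\mathrm{end}}})$. Using the slack in the $\sqrt T$ part of $4\widehat R(D)-2\widehat R(2D)$ still leaves, in general, an extra term of order $\sqrt{C_1C_2\,D\log(D+1)}$, which is not $O(\sqrt D)$. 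This does not matter for \cref{thm:main}, but it is not the stated constant $4$.

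Two smaller points:
\begin{itemize}
\item Your final-round remark is wrong as phrased: a round adds $d_t$, not one unit, to $D^{k(s)}_t$. It is also unnecessary. In \cref{alg:Full-banker-omd-phased}, the stage check at the top of round $t$ already includes $d_t$ and skips the round on overflow, so $D^{k(s)}_t\le\widehat D_s$ for every played round.
\item The paper evaluates the failed test at $\mathrm{end}_k-1$ and pays at most $1/(1-\alpha_k)\le 2$ for the last round. This makes the lemma usable also for phases that end in a soft restart, which Part~1 of the theorem needs. Your ``never triggered'' covers only phases that literally never restart.
\end{itemize}
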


\begin{remark} Above we ensure that as long as the algorithm remains in a specific phase, the deviation from the  baseline is controlled by the current aggression level $\alpha_k$. The bound reflects that the empirical evidence (the gap statistic) has not yet provided sufficient proof to justify abandoning the current mixture.
\end{remark}

\begin{lemma}[Extinction Phase Analysis]
\label{lem:extinction}
If phase $k(s)$ is terminated by a restart condition (indicating the baseline is suboptimal), then:
\(
\mathbb{E}\Bigg[\sum_{t=\mathrm{start}_{k(s)}}^{\mathrm{end}_{k(s)}}\langle \hat{\ell}_t,\, x_t-x^c\rangle\Bigg]\;\le\;-2^{k(s)-1}.
\)
\end{lemma}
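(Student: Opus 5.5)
The plan is to decompose the mixed play within the extinction phase and use the restart condition as a certificate. Since $x_t=\alpha_k\hat x_t+(1-\alpha_k)x^c$, we have for every $t$ in phase $k(s)$
\[
\langle \hat\ell_t, x_t-x^c\rangle=\alpha_k\langle \hat\ell_t,\hat x_t-x^c\rangle ,
\]
because the baseline component cancels exactly. The comparator regret of the phase therefore equals $-\alpha_k$ times the \emph{full} phase gap $\sum_{t}\langle\hat\ell_t,x^c-\hat x_t\rangle$. The statement reduces to a lower bound on this full gap of order $2^{k-1}/\alpha_k=\widehat R(\widehat D_s)$, using $\alpha_k=2^{k-1}/\widehat R(\widehat D_s)$ (we are in the regime $\alpha_k<1$, since otherwise no restart occurs).

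Next I would split the full gap at the triggering time $T_0=\mathrm{end}_{k(s)}$ into the arrived part and the missing part:
\[
\sum_{t}\langle\hat\ell_t,x^c-\hat x_t\rangle=\widehat\Delta_k(T_0)+\sum_{u\in\mathcal M(T_0)}\langle\hat\ell_u,x^c-\hat x_u\rangle .
\]
The restart test gives $\widehat\Delta_k(T_0)>B(\widehat D_s)=2\widehat R(\widehat D_s)+\widehat\xi(\widehat D_s)$. For the missing part, I would apply \cref{lem:missing} in the reverse direction, i.e.\ bound $\sum_{u\in\mathcal M}\langle\hat\ell_u,\hat x_u-x^c\rangle$ from above. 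Because $\hat\ell_u\ge0$ is supported on one coordinate with magnitude at most $1/x_u(A_u)\le 1/((1-\alpha_k)\delta)$, the term $\langle\hat\ell_u,\hat x_u\rangle$ has conditional expectation at most $1$. So in expectation the missing part is at least $-|\mathcal M(T_0)|$.

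The key combinatorial fact is that $|\mathcal M(T_0)|$ satisfies $|\mathcal M|(|\mathcal M|+1)/2\le D^{k(s)}_{T_0}$. This holds because the $m$ pending rounds have distinct delays $\ge 1,2,\dots,m$. It gives $|\mathcal M|\le(\sqrt{8D+1}-1)/2\le \widehat\xi(D)$, which explains the exact form of $\widehat\xi$. Combining this with \cref{lem:delay-estimator} (which gives $D^{k(s)}_{T_0}\le\widehat D_s$ while the stage persists), the missing part is at least $-\widehat\xi(\widehat D_s)$. The full gap is then at least $2\widehat R(\widehat D_s)\ge\widehat R(\widehat D_s)$, and multiplying by $\alpha_k$ gives comparator regret $\le -2\cdot 2^{k-1}\le-2^{k-1}$.

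The main obstacle is making the expectation rigorous. The restart time $T_0$ is random and depends on the $\hat\ell$'s, so $\mathbb E[\widehat\Delta_k(T_0)]>B$ holds pathwise but $\mathbb E[\langle\hat\ell_u,\hat x_u\rangle]\le 1$ must be argued via optional stopping over the pending set. I would handle this by bounding pathwise with $\langle\hat\ell_u,\hat x_u\rangle\le \hat x_u(A_u)/x_u(A_u)\le 1/\alpha_k$. If that is too crude, I would instead use the martingale structure of $\sum_u(\langle\hat\ell_u,\hat x_u\rangle-\langle\ell_u,\hat x_u\rangle)$ stopped at $T_0$. The other place needing care is the $\delta$ factor in $\widehat\xi$: a pathwise bound on the missing $x^c$-side contributions may cost $1/\delta$, and the $1/\delta$ in $\widehat\xi$ absorbs it.
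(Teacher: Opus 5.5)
Your argument is correct, but it uses a different restart statistic from the one in the paper's proof, and that changes the route.

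\textbf{Your route.} You use the statistic of \cref{alg:prudent_banker}, $\widehat\Delta_k(t)=\sum_u\mathbb I\{u\in\textsc{Arrived}\}\langle\hat\ell_u,x^c-\hat x_u\rangle$. It measures the gap against the learner's own iterates. So the certificate $\widehat\Delta_k(T_0)>2\widehat R_s+\widehat\xi_s$ already controls the arrived part of $\sum_t\langle\hat\ell_t,x^c-\hat x_t\rangle$, and only the pending part is left.

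\textbf{The paper's route.} The paper works with the formal test of \cref{alg:Full-banker-omd-phased}, $\max_{x\in\Delta_A}\langle g_t,x^c-x\rangle>2\widehat R_s+\widehat\xi_s$. This only certifies that $x^c$ loses to the hindsight-best $x^*$ on arrived data. The paper therefore inserts $x^*$ and splits $\sum_t\langle\hat\ell_t,\hat x_t-x^c\rangle$ into three terms:
\begin{itemize}
\item the Banker-OMD regret against $x^*$, at most $\widehat R_s$ by \cref{lem:banker};
\item the arrived part, below $-(2\widehat R_s+\widehat\xi_s)$;
\item the pending part against $x^*$, bounded pathwise by $2m/\delta\le\widehat\xi_s$ via $x_t(a)\ge\delta/2$.
\end{itemize}
The net is $-\alpha\widehat R_s=-2^{k-1}$.

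\textbf{What each route buys.} Your route needs no regret bound, so it also avoids applying \cref{lem:banker} to the data-dependent comparator $x^*$, which the paper does without comment. Your treatment of pending rounds, $\mathbb E[\langle\hat\ell_u,\hat x_u\rangle]=\mathbb E[\langle\ell_u,\hat x_u\rangle]\le 1$, is valid because whether $u$ is pending at the trigger time is decided without using $A_u$. It also does not require $\alpha_k\le 1/2$. You end with $-2^{k}$, which shows that half of the $2\widehat R_s$ in the threshold is slack for this statistic. The paper's route, in turn, is what the max-over-$x$ test requires. There your step ``$\widehat\Delta_k(T_0)>B(\widehat D_s)$'' is not supplied by the restart condition. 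The Banker-OMD regret against $x^*$ is then the indispensable extra ingredient, and it is exactly what consumes one $\widehat R_s$ of the threshold.

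\textbf{A fix to your fallback.} The pathwise bound $\langle\hat\ell_u,\hat x_u\rangle\le 1/\alpha_k$ is too weak. After multiplying by $\alpha_k$ it leaves an additive $+m$, up to order $\sqrt{D}$, which the threshold does not absorb in general (e.g.\ for $k=1$). The pathwise bound that works is
\[
\langle\hat\ell_u,\hat x_u\rangle\le 1/x_u(A_u)\le 1/((1-\alpha_k)\delta)\le 2/\delta,
\]
using $\alpha_k\le 1/2$. Combined with $m\le(\sqrt{8D+1}-1)/2$, this gives exactly $2m/\delta\le\widehat\xi$. Note that this $1/\delta$ is paid on the $\hat x_u$ side, not the $x^c$ side: the $x^c$ side of a pending term has the favorable sign in your decomposition.
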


\begin{remark}
\cref{lem:extinction} is the mechanism that funds the exploration. It asserts that any phase ending in a restart must have accumulated strictly \emph{negative} regret against $x^c$. This negative regret serves as a "credit," offsetting the potential positive regret of future phases. This guarantees that the algorithm only escalates its aggression when it has already "banked" enough safety margin to do so risk-free.
\end{remark}

{\it
\textbf{Summary.}
\cref{thm:main} unifies the phase-level analyses to provide a global guarantee. For the pseudo-regret $\mathcal{R}_T$, the bound is dominated by the final aggressive phases where $\alpha \approx 1$, matched by the guarantee of the base learner (\cref{lem:banker}). Since $\alpha_k$ doubles geometrically, there are only logarithmically many phases, preserving any $\tilde{O}(\cdot)$ rate.
Crucially, for the safety constraint $\mathcal{R}_T(x^c)$, the geometric restarts ensure that the positive regret accumulated in the final phase is offset by the negative regret credit accumulated in previous {\color{red}``Pivot''} phases (\cref{lem:extinction}), resulting in a telescopic cancellation that bounds the deviation by a constant.
}

\subsection{Lower Bound: The Price of Safety Under Delay}
\label{sec:lower-bound}

Before concluding with the experiments, we ask the converse question:
\begin{center}
\emph{What is the best regret any safe algorithm can hope for under delayed bandit feedback?}
\end{center} We prove a lower bound showing that the $\sqrt{ \delta^{-1} T}+\sqrt{ \delta^{-1} D}$ rate achieved by \textsc{Prudent-Banker} is unimprovable in the safety-constrained regime---matching the standard delayed-bandit lower bounds while exposing that  $\delta^{-1/2}$ overhead  is the intrinsic cost of the safety constraint.

Our lower-bound provides a novel interplay between the two environments (safe,unsafe) \cite{MullerEtAl2025} and a construction  that follows the blocking methodology used in delayed
bandit lower bounds, particularly the batching idea of
\citet{zimmert2020optimal}. The key observation is that a delayed bandit
instance can be organized into blocks so that feedback from actions played
inside one block is revealed only after the block has ended. We first prove a
safety-constrained lower bound for the corresponding batched bandit game. We
then show that any delayed bandit algorithm can be simulated by a batched
algorithm on the induced block structure. This reduction transfers the batched
safe lower bound back to the delayed setting and yields a delay-side lower
bound of order $\delta^{-1/2}\sqrt D$. For concision, we defer the proof to \cref{app:lower-bound}.
\begin{theorembox}
\begin{theorem}[Delay-side price of safety; informal]
\label{thm:delay-side-safety-lb-informal}
There exist delay sequences with total delay $D$ and a safe comparator
$x^c$ satisfying $\min_i x_i^c\ge\delta$ such that every delayed bandit
algorithm satisfies, for some adversarial loss sequence,
\[
    \mathcal R_T(x^c)\le \log D
    \quad\Longrightarrow\quad
    \mathcal R_T
    \ge
    \widetilde\Omega\!\left(\delta^{-1/2}\sqrt D\right).
\]
\end{theorem}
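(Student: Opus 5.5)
The plan follows the two-step route the paper sketches before the statement: first prove a safety-constrained lower bound for a batched bandit game, then embed that game into a delayed bandit instance.

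\emph{Delay structure.} Fix $D$ and a batch length $L$. Split the horizon into $M$ consecutive blocks of length $L$. Give each round inside a block a delay that makes its feedback arrive exactly at the end of the block: the $j$-th round of the block gets delay $L-j$. Each block then contributes about $L^2/2$ to the total delay, so $D \approx ML^2/2$. On this delay sequence, any delayed bandit algorithm can be simulated by a batched algorithm that fixes its (possibly randomized) policy for a whole block using only feedback from earlier blocks. Randomness within the block can be absorbed into the policy. The regret of the delayed algorithm therefore equals that of some batched algorithm on the same losses, so it suffices to lower bound batched algorithms.

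\emph{Batched safe lower bound.} I will adapt the two-environment argument of \citet{MullerEtAl2025}, i.e.\ the construction behind their $\Omega(\sqrt{T/\delta})$ bound, treating each block as a single super-round with losses scaled by $L$.

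Take two arms and the comparator $x^c=(1-\delta,\delta)$. In the \emph{safe} environment $P_0$, both arms have Bernoulli losses with mean $1/2$ that are constant within each block. The comparator regret there is then just a mixing penalty, and it is in fact zero, since both arms have equal means. To make the safety constraint bite, I instead follow their construction: in $P_0$ arm $2$ is worse by a gap $\epsilon$, so every unit of probability on arm $2$ beyond $\delta$ costs $\epsilon$ per round of comparator regret. The constraint $\mathcal R_T(x^c)\le \log D$ then forces
\[
\mathbb E_0\Big[\sum_t (x_{t,2}-\delta)\Big] \le \frac{\log D}{\epsilon}.
\]
As a result, the expected number of pulls of arm $2$ under $P_0$ is at most $\delta T + \log D/\epsilon$.

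In the alternative environment $P_1$, arm $2$ is better by $\epsilon$. The KL divergence between the observation laws is at most $\epsilon^2$ times the expected number of arm-$2$ observations. Because observations arrive only at block boundaries, the effective information is counted per block, and this is where the $\delta^{-1/2}$ factor should come out. By Pinsker or Bretagnolle--Huber, unless the KL is $\Omega(1)$, the algorithm keeps arm $2$'s total weight near $\delta T + \log D/\epsilon$ under $P_1$ as well. That yields regret $\gtrsim \epsilon T$ under $P_1$.

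\emph{Main obstacle.} The hard step is tuning $\epsilon$, $L$ and $M$ so that the information bound really scales as $\delta^{-1/2}\sqrt D$ rather than $\sqrt{T/\delta}$. The key point is that the total number of arm-$2$ samples, about $\delta T$, is not the bottleneck under delays. The bottleneck is the number of adaptive decision points, which is $M$, each worth about $\delta L$ samples. What I would try first is to make the losses constant within each block, so that a block carries only one sample's worth of information per arm. Then the KL is about $\epsilon^2\delta M$, and the per-block regret is $L\epsilon$. Choosing
\[
\epsilon \asymp \frac{1}{\sqrt{\delta M}}
\]
gives regret
\[
ML\epsilon \asymp L\sqrt{M/\delta} = \sqrt{ML^2/\delta} \asymp \sqrt{D/\delta},
\]
which is the target. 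I still need to check that $\epsilon\le 1/2$, which requires $M\gtrsim 1/\delta$. I also need to check that the safety slack $\log D/\epsilon$ in total weight, about $\sqrt{\delta M}\log D$ super-rounds, does not let the algorithm escape. That extra weight would only buy regret savings of order $\log D$, which accounts for the $\widetilde\Omega$. Finally, combining this with the non-delayed bound, taking a maximum by choosing $L=1$ when $T$ dominates, recovers the full $\delta^{-1/2}(\sqrt T+\sqrt D)$ statement.
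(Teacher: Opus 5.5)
\textbf{The gap: the KL count.} The failing step is where you make losses constant within each block and conclude that the KL is about $\epsilon^2\delta M$. That count treats a block as one super-round in which you commit to a single arm, so arm $2$ would be observed with probability equal to its weight $\approx\delta$. But in the delayed game the learner chooses $L$ actions per block. With block-constant losses, a \emph{single} pull of arm $2$ anywhere in the block reveals that block's arm-$2$ loss, at a cost of only a $1/L$ fraction of the block's weight.

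Write $N_m$ for the number of arm-$2$ pulls in block $m$. The comparator regret under $P_0$ is $\epsilon\sum_m(\mathbb{E}_0[N_m]-\delta L)$. So pulling arm $2$ once per block is free (this sum is nonpositive) as soon as $L\ge 1/\delta$; this matches your own remark that each block is ``worth about $\delta L$ samples.'' Two regimes follow:
\begin{itemize}
\item If $L\ge 1/\delta$, arm $2$ is observed in every block. The KL is then about $16\epsilon^2 M$, which forces $\epsilon\lesssim 1/\sqrt M$. The resulting bound $\epsilon ML\approx L\sqrt M$ is of order $\sqrt D$, with no $\delta^{-1/2}$.
\item If $L<1/\delta$, the expected number of blocks containing an arm-$2$ pull is at most $\delta ML+\log D/\epsilon$. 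The KL is then about $\epsilon^2\delta ML$, and you get $\sqrt{ML/\delta}=\sqrt{T/\delta}$. This falls short of $\sqrt{D/\delta}=\sqrt{ML(L-1)/(2\delta)}$ by a factor $\sqrt{(L-1)/2}$.
\end{itemize}
If you use i.i.d.\ per-round losses instead, postponing observations to the block end does not reduce the KL at all. So your construction gives at best $\Omega(\sqrt{T/\delta}+\sqrt D)$. It reaches $\delta^{-1/2}\sqrt D$ only when $L=O(1)$, i.e.\ $D=O(T)$.

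\textbf{Why this is structural, and how to repair it.} This is not a tuning issue. It is the same phenomenon that makes the unconstrained delay term $\sqrt{D\log A}$ rather than $\sqrt{DA}$: block-constant losses let the learner observe cheaply once per block, so the exploration price ($A$ there, $1/\delta$ here) drops out of the delay term. Consistently, for an importance-weighted block estimator with played distribution $x\ge x^c/2$, one has $\mathbb{E}\sum_i \hat x_i\hat L_{m,i}^2\le 2L/\delta+L^2$. This suggests that regret $\widetilde O(\sqrt{T/\delta}+\sqrt D)$ is attainable on your delay sequence, in which case no loss construction there can give $\delta^{-1/2}\sqrt D$ once $D\gg T$.

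The paper does not enter that regime. Its batched lemma works as follows:
\begin{itemize}
\item it uses per-slot i.i.d.\ Bernoulli losses on arm $2$ with block-scaled gaps $\varepsilon_m=\gamma L_m/\sqrt V$;
\item safety in $\mathcal E_+$ bounds the weighted pull fraction $W=\sum_m L_mN_m/V$ by $\delta+\Lambda/(\gamma\sqrt V)$;
\item it shows $\mathrm{KL}\le 16\gamma^2L_1\mathbb{E}_+[W]$;
\item it concludes $\tfrac{3}{256}\delta^{-1/2}\sqrt{V/L_1}$, minus the $\Lambda$-terms.
\end{itemize}
Since $V/L_1\le T$, this never exceeds the non-delayed order. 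The informal statement is realized by the corollary with $q=1$ (unit delays), where $V_1/L_1=D=T$.

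Your plan can be repaired the same way. Take a delay sequence with $D\asymp T$, for example unit delays or your blocks with $L=2$, and run the two-environment argument with i.i.d.\ per-round losses. In fact, a non-delayed learner can simulate any delayed one by buffering feedback. So any non-delayed safe lower bound of order $\sqrt{T/\delta}$ transfers directly to such a sequence.
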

\end{theorembox}

\paragraph{Putting time and delay together.}
The no-delay lower bound of \citet[Theorem~3.2]{MullerEtAl2025} already shows
that baseline safety alone forces regret
\(
    \Omega(\sqrt{\delta^{-1}T})
\)
up to lower-order terms. Combining this obstruction with
\cref{thm:delay-side-safety-lb-informal}, we obtain delay and loss sequences
for which any baseline-safe algorithm must incur
\[
    \mathcal R_T
    \ge
    \Omega\!\left(
        \sqrt{\delta^{-1}T}
        +
        \sqrt{\delta^{-1}D}
    \right)
    -
    O\!\left(
        \delta^{-3/4}T^{1/4}
        +
        \delta^{-3/4}D^{1/4}
    \right).
\]
Thus the guarantee of \textsc{Prudent-Banker} in \cref{thm:main} is optimal up
to logarithmic factors and lower-order terms, both in the delayed and non-delayed regimes.

\vspace{-0.5em}
\section{Experiments}
\label{sec:experiments}
\vspace{-0.5em}
We conclude our work by empirically corroborating our theoretical claims to demonstrate that
\PB{} delivers on \textsc{SOLID} in practice: it preserves baseline safety
under delay while remaining competitive with delay-aware bandit learners.
We pit \PB{} against two safety-aware baselines run with delayed feedback:
\CUCB{}~\citep{wu2016conservative} and \SafeEXP{}, which couples the
conservative wrapper of \citet{wu2016conservative} with
EXP3-IX~\citep{kocak2014efficient,neu2015explore}. All three methods process
feedback only upon arrival; the distinguishing feature of \PB{} is its
\emph{delay-calibrated} restart rule, while the baselines treat delay
implicitly through lazy updates.

\vspace{-0.5em}
\paragraph{Setup.}
We evaluate on a non-stationary synthetic bandit environment with horizon
$T=50{,}000$ and $A=100$ arms. Losses are generated in $S=500$ blocks, with
block-specific means and variances so that the best arm changes over time. To
stress-test safety, the comparator $x^c$ is a full-support distribution
concentrated on the hindsight best fixed arm $i^\star$, assigning mass
$1-(A-1)\delta$ to $i^\star$ and $\delta=10^{-3}$ to every other arm. This
oracle choice is deliberately favorable to the baselines and isolates the
safety--learning trade-off. We compare performance under three delay regimes:
no delay, geometric delays, and heavy-tailed Pareto delays, using the same loss
tables across conditions.\footnote{ Full implementation details, parameter choices, and
additional ablations are deferred to \cref{app:experiments}.}

\begin{figure}[h]
  \vspace{-0.75em}
  \centering
  \includegraphics[width=0.75\linewidth]{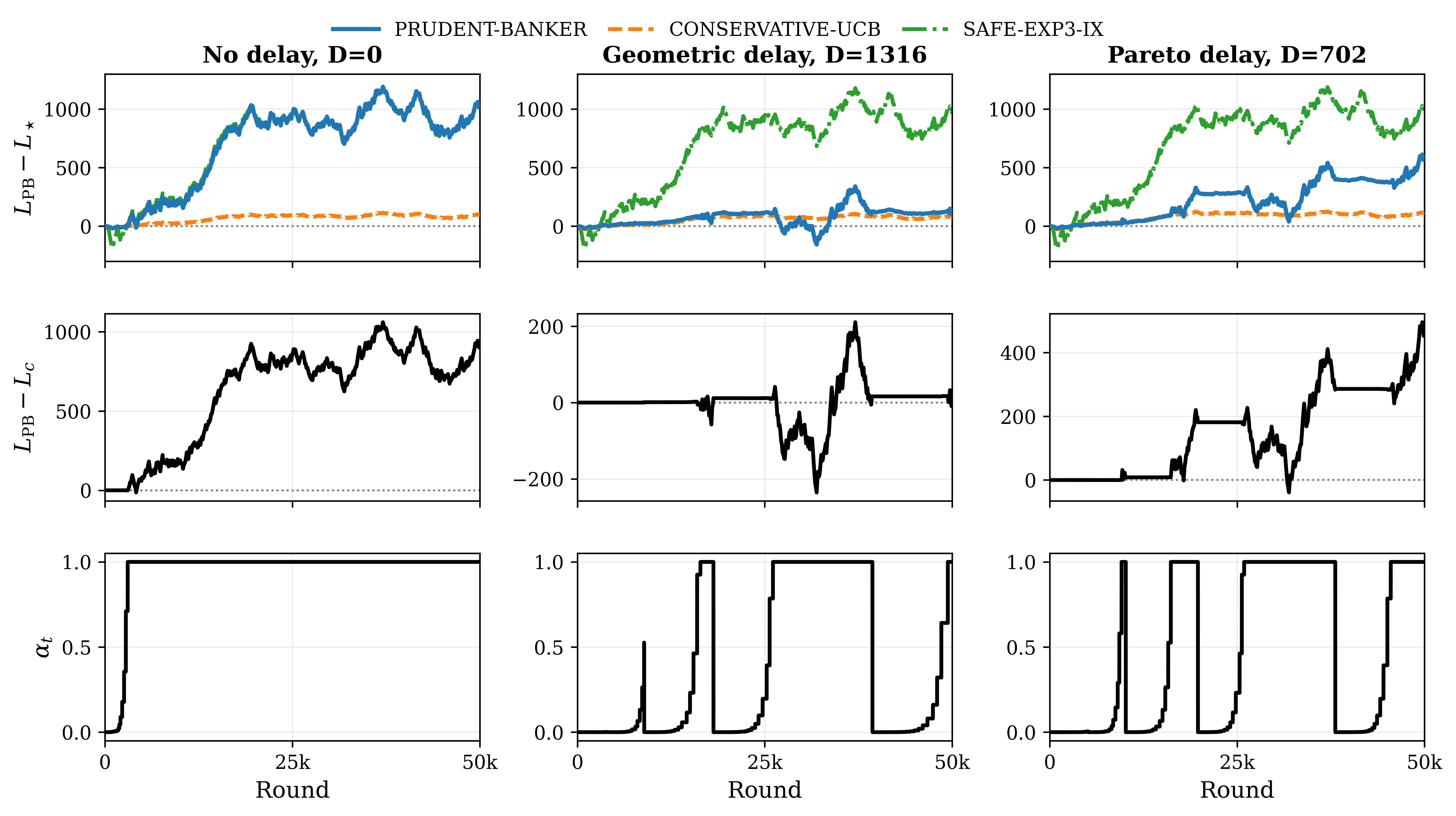}
  \vspace{-0.75em}
  \caption{\small Dynamic performance of \PB{} versus delay-aware safety baselines.
  \emph{Top:} cumulative pseudo-loss minus that of the hindsight best fixed arm
  $i^\star$. \emph{Middle:} \PB{} cumulative loss minus comparator loss
  $L_c$ (negative values mean \PB{} beats $x^c$). \emph{Bottom:} the \PB{}
  mixing coefficient $\alpha_t$. Under delay, \PB{} repeatedly performs
  conservative restarts and only re-enters aggressive phases after the
  delay-calibrated threshold certifies comparator suboptimality.}
  \label{fig:main}
  \vspace{-1em}
\end{figure}

\vspace{-0.5em}
\paragraph{Results.} Figure~\ref{fig:main} confirms the three predictions of our analysis. \emph{(i) Robust safety.} Across all three delay regimes, the middle row shows that the gap between \PB{} and the safe comparator stays bounded (and is repeatedly negative under both delay conditions), matching the $\widetilde{O}(1)$ comparator-regret guarantee of \cref{thm:main}. By contrast, the safety wrappers built around \CUCB{} and \SafeEXP{} only enforce a multiplicative $(1-\alpha_{\mathrm{safe}})$ budget and cannot certify comparator-level safety. \emph{(ii) Delay-calibrated phase transitions.} The bottom row visualizes the operational role of the delay-aware restart threshold: in the no-delay column, $\alpha_t$ rises monotonically to $1$ and stays there, recovering the immediate-feedback behavior of phased aggression; under the geometric and Pareto delays, $\alpha_t$ oscillates as the realized delay budget repeatedly exceeds the current estimate $\widehat D_s$, triggering hard restarts and demanding fresh evidence before re-aggression---exactly the mechanism that \cref{lem:missing,lem:extinction} were designed to enforce.
\emph{(iii) Worst-case competitiveness.} In the top row, \PB{} stays within an $\widetilde O(\sqrt{T}+\sqrt D)$ envelope of $i^\star$ and dominates \SafeEXP{}, whose adversarial exploration is wasteful when the comparator is already concentrated. \CUCB{} appears slightly tighter to $i^\star$ only because its default arm \emph{is} $i^\star$ in this diagnostic setting; this advantage is not online and disappears once the safe anchor is no longer hindsight-optimal.

\vspace{-0.5em}
\paragraph{Takeaway. \hspace{-1em}}
Together with upper and the lower bounds, our experiments support the central message of the paper: {\it baseline safety and delay-robust adversarial learning can
be combined without losing the optimal safety--robustness trade-off.}
Natural next
steps include adaptive or learned baselines and extensions to contextual,
linear, reinforcement-learning, and multi-agent settings.
\bibliographystyle{plainnat}
\bibliography{example_paper}
\newpage
\appendix
\onecolumn
\tableofcontents
\newpage
\section{Related Work}
\subsection{Exploitation vs. Safety Trade-offs.}
Approaches to safe exploitation typically deviate from minimax strategies to target sub-optimal opponents, limiting potential losses to the minimax value~\cite{ganzfried2015safe}. These methods often interpolate between conservative and aggressive strategies using tunable confidence parameters~\cite{damer2017safely, liu2022safe}. However, this introduces a fundamental trade-off: such algorithms either risk $\Omega(T)$ regret against strong adversaries~\cite{damer2017safely, liu2022safe} or lack theoretical guarantees for exploitation~\cite{ganzfried2015safe}. While recent advances establish logarithmic regret for small-scale games, they rely on restrictive assumptions like pairwise distinct payoffs or full information feedback~\cite{maiti2023logarithmic}. It is interesting to understand if we can achieve similar best-of-both-worlds results under adversarial setting with delayed feedback.

\subsection{Comparator Adaptivity in Full Information.}
Safety, defined as ensuring low regret relative to a pre-specified baseline policy $x^c$, is a cornerstone of ``safe AI.'' In the full-information setting, it is established that algorithms can achieve constant regret against a specific comparator strategy while maintaining optimal worst-case regret guarantees. Notable examples include \cite{even-dar-2008,hutter2005adaptive, kapralov2011prediction, koolen2013pareto, sani2014exploiting}. Similarly, parameter-free methods adapt to unknown comparators~\cite{orabona2016coin, cutkosky2018black, orabona2019modern}. These works successfully resolve the ``Best-of-Both-Worlds'' challenge regarding safety and adversarial robustness. \textsc{\small Prudent-Banker} extends the Phased Aggression framework~\cite{even-dar-2008} to the bandit setting. While algorithms above apply to immediate feedback setting well, our algorithm aims to achieve safety comparator guarantee on delayed, partial feedback.
\subsection{Hardness of Safe Bandit Learning.}
Under bandit feedback, achieving constant regret against a deterministic comparator generally implies linear worst-case regret against other actions~\cite{lattimore2015pareto}. This impossibility result necessitates relaxing assumptions on the comparator to make $\tilde{\mathcal{O}}(1)$ safety feasible. Recent work by Muller et al.~\cite{MullerEtAl2025} demonstrates that if the safe baseline is required to be \emph{full-support}, one can indeed achieve constant safety regret alongside near-optimal minimax regret using a phased aggression mechanism. We adopt this full-support assumption to render the safety goal achievable. However, the standard phased aggression mechanism in~\cite{MullerEtAl2025} relies on immediate feedback. \textsc{\small Prudent-Banker} extends the framework of~\cite{MullerEtAl2025} to delayed feedback setting.

\subsection{Conservative Bandits and Safe RL.}
Our framework's definition of safety is distinct from the paradigms often studied in conservative bandits~\cite{wu2016conservative} and conservative reinforcement learning (RL)~\cite{garcelon2020conservative}. In those contexts, algorithms are designed to satisfy a constraint—often defined as securing at least a $(1-\alpha)$-fraction of a baseline's return. While this ensures performance relative to a baseline, it allows for linear regret $\mathcal{O}(\alpha T)$ relative to the optimal policy in the worst case. Furthermore, in constrained settings such as Constrained MDPs~\cite{badanidiyuru2018bandits, efroni2020exploration}, achieving constant regret against a safe baseline is frequently unobtainable under adversarial environemnt ~\cite{liu2021learning}. In contrast, \textsc{\small Prudent-Banker} aims for constant $\tilde{\mathcal{O}}(1)$ regret relative to the baseline while simultaneously minimizing regret against the best strategy in hindsight, even in the presence of arbitrary delays.

\subsection{Adversarial bandits and mirror-descent viewpoints.}
Many techniques have been applied to adversarial multi-armed bandits (MAB) setting. For example, exponential-weights methods 
\textsc{EXP3}~\cite{auer2002nonstochastic}, Online Mirror Descent (OMD) and
Follow-the-Regularized-Leader (FTRL) frameworks~\cite{warmuth1997continuous,gordon1999regret,lattimore2020bandit}.
Some new developments employ different regularizer, including Tsallis-entropy and related OMD variants~\cite{zimmert2020optimal,wei2018more}.
Our work follows the OMD paradigm but targets a more complicated regime where (i) feedback is delayed,
and (ii)regret is constant relative to a designated safe comparator.

\subsection{Online learning with delayed feedback.}
Online learning with delayed feedback has been studied across several settings, including multi-armed bandits (MABs), linear bandits, and more general sequential decision-making problems. For MABs, early work considered stochastic settings with i.i.d.\ delays \citep{joulani2013online}, while adversarial formulations established minimax lower bounds under uniform delays \citep{cesa2016delay}; subsequent results achieved near-optimal regret depending on both the horizon $T$ and total delay $D$ via OMD-based methods with moving negative-entropy regularization \citep{bistritz2019online,thune2019nonstochastic,zimmert2020optimal}. Among which the \texttt{Banker-OMD} framework \citep{huang2023banker} provides a unified delay-handling mechanism that decouples delays from algorithm design and extends delayed bandit guarantees to scale-free loss settings. Delayed feedback has also been investigated in linear bandits, including stochastic models with i.i.d.\ or partially observable delays and adversarial models with uniform delays \citep{zhou2019learning,vernade2020linear,ito2020delay}. Beyond bandits, delays have been studied in stochastic and adversarial Markov Decision Processes, general online optimization with full-information or bandit feedback, and wireless network optimization \citep{mesterharm2005line,agarwal2011distributed,dudik2011efficient,desautels2014parallelizing,lancewicki2022learning,howson2021delayed,jin2022near,dai2022follow,huang2021robust}.

\newpage
\section{Formal Description of \textsc{Prudent-Banker}}
\label{app:sec:alg}

\begin{algorithm}[!htbp]
\caption{\textsc{Prudent-Banker}}
\label{alg:Full-banker-omd-phased}
\begin{algorithmic}[1]

\State \textbf{Input:} number of arms $A$, horizon $T$, comparator $x^c$, comparator margin $\delta\in(0,1/A]$, constants $(C_1,C_2)$, mirror map $(\Psi,x_0)$, $x_0 = \mathbf{1}/A$, step-size subroutine $S$

\State \textbf{Initialize:}
stage $s\gets1$,
$\widehat D_s\gets1$,
$k\gets1$,
$\operatorname{start}\gets1$,
$\hat{x}_1\gets x_0$,
$B_0 \gets 0$

\State Define
\[
\widehat R(\widehat D)=\sqrt{C_1C_2}\!\left(3\sqrt{T}+7\sqrt{2\widehat D\ln(\widehat D+1)}\right),
\qquad
\widehat\xi(\widehat D)=\frac{\sqrt{8\widehat D+1}-1}{\delta}
\]

\State $\widehat R_s\gets \widehat R(\widehat D_s)$,
$\widehat\xi_s\gets \widehat\xi(\widehat D_s)$,
$\alpha\gets \min\{1/\widehat R_s,1\}$

\For{$t=1$ to $T$}
  \State $D_{t}^{(s)} \gets \sum_{r=\operatorname{start}}^{t} d_r$
  \If{$D_{t}^{(s)}>\widehat D_s$}
    \State $s\gets s+1$, $k\gets1$, $\widehat D_s\gets 2^{\lceil \log_2 D_{t}^{(s)}\rceil}$
    \State $\operatorname{start}\gets t+1$, $\widehat R_s\gets \widehat R(\widehat D_s)$, $\widehat\xi_s\gets \widehat\xi(\widehat D_s)$
    \State $\alpha\gets \min\{1/\widehat R_s,1\}$, $\hat{x}_t\gets x_0$
    \State \textbf{continue}
  \EndIf

  \State
  \[
  \sigma_t \gets
  S\!\left(t-\operatorname{start}+1,\{(A_u,\mathrm{status}_u,\hat\ell_u,\sigma_u,v_u)\}_{u<t}\right)
  \]

  \State \textbf{Banker initialization:}
  $v_t\gets \sigma_t$, \ $b_t\gets \sigma_t$

  \ForAll{$u=\operatorname{start},\ldots,t-1$ with $\mathrm{status}_u=\textsc{arrived}$}
    \State $\sigma_{t,u}\gets \min\{v_u,b_t\}$
    \State $v_u\gets v_u-\sigma_{t,u}$, \ $b_t\gets b_t-\sigma_{t,u}$
  \EndFor

  \State $B_t \gets B_{t-1} + b_t$

  \State
  \[
  \hat x_t \gets
  \nabla\bar\Psi^*\!\left(
  \frac{1}{\sigma_t}\sum_{\operatorname{start} \le u \le t-1}\sigma_{t,u}\nabla\Psi(z_u)
  +\frac{b_t}{\sigma_t}\nabla\Psi(x_0)
  \right)
  \]

  \State $x_t\gets \alpha\hat x_t+(1-\alpha)x^c$
  \State Sample $A_t\sim x_t$, incur loss $\ell_t(A_t)$

  \State \textbf{Upon receiving each new feedback } $(u,\ell_u(A_u))$:
  \State \hspace{\algorithmicindent} $\mathrm{status}_u\gets \textsc{arrived}$
  \State \hspace{\algorithmicindent} $\hat\ell_u\gets \dfrac{\ell_u(A_u)}{x_u(A_u)}e_{A_u}$
  \State \hspace{\algorithmicindent} $z_u\gets
    \nabla\bar\Psi^*\!\left(\nabla\Psi(x_u)-\dfrac{1}{\sigma_u}\hat\ell_u\right)$

  \State $g_t\gets \sum_{u=\operatorname{start}}^{t}\mathbf 1\{\mathrm{status}_u=\textsc{arrived}\}\hat\ell_u$

  \If{$\max_{x\in\Delta_A}\langle g_t,x^c-x\rangle>2\widehat R_s+\widehat\xi_s$
      \textbf{and} $\alpha<1$}
    \State $k\gets k+1$, $\operatorname{start}\gets t+1$, $\hat x_{t+1}\gets \mathbf{1}/A$
    \State $\alpha\gets \min\{2^{k-1}/\widehat R_s,1\}$
  \EndIf
\EndFor

\end{algorithmic}
\end{algorithm}
\newpage
\subsection{Further Insights in Prudent-Banker}

From technical perspectives, our contributions are threefold:

\medskip
\noindent\textbf{1.\ Eliminating the safety certification gap.}
We derive a \emph{delay-calibrated restart condition} that accounts for the information deficit induced by delayed feedback.
In standard bandits, verifying that the baseline/comparator is suboptimal reduces to a relatively direct hypothesis test.
We show that under delays, this test must be relaxed by an explicit slack term
\[
\xi_s \;=\; \Theta\!\left(\sqrt{\frac{D_s}{\delta}}\right),
\]
where $D_s$ denotes the cumulative delay mass within stage $s$.
This slack acts as a buffer against \emph{phantom suboptimality} arising from unobserved losses, preventing spurious restarts and thereby preserving an $O(1)$ safety guarantee.

\medskip
\noindent\textbf{2.\ Cost-free adaptation to unknown delays.}
A central challenge is adapting to an unknown total delay $D$ without compromising safety. Standard ``doubling trick'' approaches \cite{auer1995gambling,auer2010ucb} typically treat each restart as a fresh instance. While such restarts incur only mild overhead for conventional regret, they can be catastrophic for safety: naively resetting the mechanism forfeits the \emph{negative regret credit} accumulated against $x^c$, potentially forcing the algorithm to pay an additive penalty proportional to the instance scale (e.g., $\sqrt{T}$ or $\sqrt{D}$) repeatedly.

Indeed, the difficulty of learning under an \emph{unknown} delay budget is well-documented in delayed  bandits~\cite{bistritz2019online} and further refined in subsequent work~\cite{bistritz2022no}, where the typical adaptation cost scales on the order of $\sqrt{TK + D\log K}$. Furthermore, as analyzed by \citet{besson2018doubling}, simple ``oblivious'' geometric restarts generally fail to preserve instance-dependent guarantees because periodic resets destroy the accumulated statistical evidence required for fast rates.  In contrast, our coupled schedule
restarts only when warranted by the observed delay structure and safety diagnostics, insuring only a negligible logarithmic overhead. Consequently, delay adaptation becomes essentially ``free'' at the level of the safety bounds.

\medskip
\noindent\textbf{3.\ Generalizing \texttt{Banker-OMD} to Safe Learning.}
Finally, we answer the challenge of extending safety guarantees beyond standard immediate-feedback environments. While \citet{MullerEtAl2025} established the foundations of the (Safe) online linear minimization framework, the feasibility of such guarantees under partial monitoring remained a question. \\We bridge this gap by providing a unified analysis that integrates \texttt{Banker-OMD} with safety constraints. We demonstrate that by decomposing the regret analysis into ``arrived'' and ``pending'' components,
It is possible the potential-function credits to be used for delay tolerance can be \emph{dually leveraged} to certify safety, effectively broadening the applicability of ``best-of-both-worlds'' mechanisms to the asynchronous reality of deployed systems.

\subsection{Further Insights in Banker-OMD} As we discussed in the main draft, \emph{Prudent-Banker} combines two mechanisms---one to \emph{survive delayed bandit feedback} and one to \emph{stay safe relative to a baseline comparator}. Our delayed bandit feedback mechanism follows banker OMD idea. The intuition is to carefully account of how much past information is allowed to influence each decision.

At the beginning of round $t$, the algorithm selects a target action scale $\sigma_t$, which specifies the total amount of update weight to be used when forming the next decision.
Due to delayed feedback, only a subset of past rounds have revealed their losses by time $t$, and therefore only those rounds can contribute update information. In particular, when feedback from round $s < t$ arrives, we can compute the corresponding mirror-descent update:
\[
z_s \;=\; \nabla\bar{\Psi}^*\!\left(\nabla\Psi(x_s) - \frac{1}{\sigma_s}\hat{\ell}_s\right).
\]

To construct the decision, the algorithm draws update weight from previously arrived rounds in a greedy manner.
Each past round $s<t$ maintains a remaining budget $v_s$, representing how much of its feedback can still influence future decisions.
For every arrived round $s$, the algorithm allocates an amount
\[
\sigma_{t,s}=\min\{v_s,\; b_t\},
\]
where $b_t$ denotes the remaining portion of $\sigma_t$ that has not yet been covered.
The allocated amount $\sigma_{t,s}$ is deducted both from $v_s$ and from $b_t$, and this process continues until either the full budget $\sigma_t$ is reached or no further arrived feedback is available.

If the available feedback is insufficient to cover $\sigma_t$, the remaining portion $b_t$ is filled using a fixed default distribution $x_0$.
The resulting decision is defined as a mirror-map average
\[
\tilde{x}_t
=
\nabla \bar{\Psi}^*\!\left(
\frac{1}{\sigma_t}\sum_{s<t}\sigma_{t,s}\nabla\Psi(z_s)
\;+\;
\frac{b_t}{\sigma_t}\nabla\Psi(x_0)
\right),
\]
which combines all available mirror-descent updates with the default one, in proportions determined exactly by the amount of missing feedback.
This ensures that each decision relies only on information that has actually arrived, while reducing to a default when feedback is delayed.

This construction guarantees that each unit of learning-rate budget is used at most once, restores a telescoping structure in the regret analysis, and ensures that the additional regret incurred due to delays depends only on the cumulative missing budget, which can be bounded in terms of the total delay.

\subsection{Proof Roadmap}
\textbf{Stage Partitioning via Doubling Trick.} Since the total delay $D$ is unknown, the algorithm proceeds in stages $s = 1, 2, \dots, S$. We maintain a delay estimator $\widehat{D}_s$, initialized at $\widehat{D}_1 = 1$. At any time step $t$, if the realized cumulative delay exceeds the current estimator, we update $\widehat{D}_{s+1} = 2\widehat{D}_s$ and initiate a new stage. This doubling scheme ensures that the total number of stages is bounded by $S \leq \lceil \log_2 D \rceil + 1$ (Lemma~\ref{lem:delay-estimator}), allowing the algorithm to adapt to the delay scale without prior knowledge.

\textbf{Phased Aggression Framework.} Within each stage $s$, the time horizon is further partitioned into a sequence of phases $k = 1, 2, \dots, K(s)$. In each phase, the algorithm plays a mixed strategy $x_t = \alpha \hat{x}_t + (1-\alpha) x^c$, where $\hat{x}_t$ is the Banker-OMD strategy and $x^c$ is a safe comparator. The aggression parameter $\alpha$ determines the influence of the comparator and is scheduled as:
\begin{equation}
    \alpha = \frac{2^{k-1}}{\widehat{R}_s}
\end{equation}
where $\widehat{R}_s$ denotes the stage-specific regret upper bound. A phase transition $k \to k+1$ is triggered whenever the cumulative loss of the safe comparator exceeds a threshold, signifying ``poor" performance. Upon transitioning, we increase the weight on the OMD learner (increase $\alpha$), effectively "tuning out" the poorly performing comparator.

\textbf{Regret Analysis and Complexity.} This nested architecture facilitates a dual-layered regret guarantee:
\begin{itemize}[leftmargin=2em]
    \item \textbf{Comparator Regret:} By maintaining a phased mixture, the algorithm ensures that the regret relative to the safe comparator remains $O(1)$ within each stage (Lemma~\ref{lem:extinction}, Lemma~\ref{lem:normal}, Lemma~\ref{lem:missing}).
    \item \textbf{Worst-case Regret:} We prove the regret upper bound for Banker-OMD using Lemma~\ref{lem:delay-mass}. Then decompose the regret into Banker-OMD regret (Lemma~\ref{lem:banker}) and regret of comparator relative to strategy $x$. Combining Lemma~\ref{lem:delay-estimator}, each phase incurs a worst-case regret bounded by $4 \widehat R (D^{k(s)}_{\text{end}_{k(s)}}) + O(\sqrt{D^{k(s)}_{\text{end}_{k(s)}}})$ (Lemma~\ref{lem:normal}). Since $\alpha$ grows exponentially, the number of phases per stage is bounded by $K(s) = O(\log \widehat{R}_s)$. 
    \item \textbf{Total Regret:} Summing the regret over $K(s)$ phases and $S$ stages yields a global regret bound that accounts for both the estimation of $D$ and the adaptive weighting of the comparator.
\end{itemize}

\newpage
\section{Deferred Proofs}
\label{app:proof}
\vspace{-1em}
\begin{PBAppLemma}[Bound on Cumulative Outstanding Delay]{number_of_delay_vs_delay_length}{
    Following lemma relates two different measurements of delay.
    The quantity $\mathfrak{d}_t^{k(s)}$ counts how many feedback items within phase $k(s)$ are still missing at time $t$,
    and $\mathfrak{D}_t^{k(s)}=\sum_{u=\operatorname{start}}^t \mathfrak{d}_u^{k(s)}$ is the \emph{cumulative delayed feedback items} up to $t$.
    The lemma shows $\mathfrak{D}_t^{k(s)}$ is monotone in $t$ and
    that $\mathfrak{D}_{\operatorname{end}_{k(s)}}^{k(s)}$ is bounded by the total delay lengths $D_{\operatorname{end}_{k(s)}}^{k(s)}$.
    Intuitively, each round $r$ with delay $d_r$ can contribute at most $d_r$ time-steps of ``being missing,''
    so the total missing-mass over the horizon cannot exceed the sum of delay lengths.
    Since the regret upper bound of \cref{alg:prudent_banker} relies on $\sigma_t$ constructed by $\mathfrak{D}_t^{k(s)}$, we need this lemma to ensure our regret bound is parametrized by $D_{\operatorname{end}_{k(s)}}^{k(s)}$.
    The proof works by double-counting: $\mathfrak{D}_{\operatorname{end}_{k(s)}}^{k(s)}$ counts the number of pairs $(r,u)$ such that the feedback of $r$ is still missing at time $u$,
    which equals $\sum_{r=\operatorname{start}}^{\text{end}_{k(s)}} \min\{d_r,\,\operatorname{end}_{k(s)}-r\}$ and is therefore at most $\sum_{r=\operatorname{start}_{k(s)}}^{\text{end}_{k(s)}} d_r=D_{\operatorname{end}_{k(s)}}^{k(s)}$.
}
    The cumulative number of delayed feedback items up to time $t$, is at most the cumulative delayed feedback items over phase $k(s)$ inside stage $s$, which in turn is bounded by the sum of the delay lengths. Formally, 
    \[
    \mathfrak{D}_t^{k(s)} \leq \mathfrak{D}_{\operatorname{end}_{k(s)}}^{k(s)} \leq D_{\operatorname{end}_{k(s)}}^{k(s)}.
    \]
\end{PBAppLemma}

\begin{proof}
    Since $\mathfrak{D}_t^{k(s)}=\sum_{u=\operatorname{start}_{k(s)}}^t \mathfrak{d}_u^{k(s)}$ with $\mathfrak{d}_u^{k(s)}=\sum_{r=\operatorname{start}_{k(s)}}^{u-1} \mathbf{1}\left\{u \leq r+d_r\right\} \geq 0$, it follows immediately that $\mathfrak{D}_t^{k(s)}$ is non-decreasing in $t$. Thus, the first inequality $\mathfrak{D}_t^{k(s)} \leq \mathfrak{D}_{\text{end}_{k(s)}}^{k(s)}$ holds for all $t \le \text{end}_{k(s)}$.

    For the second inequality, we first show that $\mathfrak{D}_{\text{end}_{k(s)}}^{k(s)}=\sum_{u=\text{start}_{k(s)}}^{\text{end}_{k(s)}} \min \left\{d_u, \text{end}_{k(s)} - u\right\}$.
    By definition of delay complexity, 
    \begin{equation*}
        \mathfrak{D}_{\text{end}_{k(s)}}^{k(s)} = \sum_{u=\operatorname{start}_{k(s)}}^{\operatorname{end}_{k(s)}} \mathfrak{d}_u^{k(s)}=\sum_{u=\operatorname{start}_{k(s)}}^{\operatorname{end}_{k(s)}} \sum_{r=\operatorname{start}_{k(s)}}^{u-1} \mathbf{1}\left\{u \leq r+d_r\right\}.
    \end{equation*}

    We swap the order of summation (counting pairs of $(r,u)$ such that $r < u \le r+d_r$):
    $
        \mathfrak{D}_{\text{end}_{k(s)}}^{k(s)} = \sum_{r=\operatorname{start}_{k(s)}}^{\operatorname{end}_{k(s)}} \sum_{u=r+1}^{\operatorname{end}_{k(s)}} \mathbf{1}\left\{u \leq r+d_r\right\}.
    $
    Fix $s$. We are counting the number of integers $u$ such that $u \in \{r+1, \ldots, \text{end}_{k(s)}\}$ and $u \leq r+d_r$.
    Thus, $u$ ranges from $r+1$ up to $\min \{\text{end}_{k(s)}, r+d_r\}$. The total number of such integers is:
    \begin{align*}
        \max \left(0, \min \left\{\text{end}_{k(s)}, r+d_r\right\} - (r+1) + 1\right) 
        &= \max \left(0, \min \left\{\text{end}_{k(s)}, r+d_r\right\} - r\right) \\
        &= \min \left\{d_r, \text{end}_{k(s)} -r\right\},
    \end{align*}
    where the last step follows because delays are non-negative ($d_r \ge 0$).
    Plugging this back gives:
    \begin{equation*}
        \mathfrak{D}_{\text{end}_{k(s)}}^{k(s)} = \sum_{r=\operatorname{start}_{k(s)}}^{\text{end}_{k(s)}} \min \left\{d_r, \text{end}_{k(s)}-r\right\}.
    \end{equation*}
    Finally, we bound this sum by $D_{\operatorname{end}_{k(s)}}^{k(s)}$:
    \begin{enumerate}
        \item If all feedback arrives by end of phase $k(s)$ (i.e., $r+d_r \leq \text{end}_{k(s)}$ for all $r$), then $\min \{d_r, \text{end}_{k(s)}-r\} = d_r$, and $\mathfrak{D}_{\text{end}_{k(s)}}^{k(s)} = \sum d_r = D_{\text{end}_{k(s)}}^{k(s)}$.
        \item If some feedback arrives after $\text{end}_{k(s)}$ (i.e., $r+d_r > \text{end}_{k(s)}$ for some $r$), then $\min \{d_r, \text{end}_{k(s)}-r\} < d_r$ for those $r$, and thus $\mathfrak{D}_{\text{end}_{k(s)}}^{k(s)} < D_{\text{end}_{k(s)}}^{k(s)}$.
    \end{enumerate}
    In all cases, $\mathfrak{D}_{\text{end}_{k(s)}}^{k(s)} \le D_{\text{end}_{k(s)}}^{(s)}$.
\end{proof}
\newpage

\begin{PBAppLemma}[Bound on Estimated Total Delay]{bound_on_estimated_total_delay}{
    This lemma establishes that the updated budget $\widehat{D}_{s+1}$ provides a tight upper bound on the realized total delay. Specifically, we show that $\widehat{D}_{s+1} < 2 D_{\operatorname{end}_{k(s)}}^{k(s)}$, ensuring that the estimated capacity does not overestimate the actual total delay by more than a factor of two. The analysis proceeds in two steps:
\begin{enumerate*}[label=(\roman*)]
    \item we first demonstrate the monotonicity of the estimator ($\widehat{D}_{s+1} \ge \widehat{D}_s$);
    \item we then derive the upper bound using the doubling update rule defined in \cref{alg:prudent_banker}.
\end{enumerate*}
This inequality is critical for subsequent lemmas, as it allows us to bound the number of phases and the total delay accumulated within them by relating the budget back to the realized total delay.
}
    For every stage $s$, the total-delay capacity satisfies:
    \[
    \widehat{D}_{s} \leq \widehat{D}_{s+1}<2 D_{\operatorname{end}_{k(s)}}^{k(s)}
    \]
\end{PBAppLemma}

\begin{proof}
    $\\$\vspace{-2em}
\begin{itemize}
\item \textbf{Monotonicity ($\widehat D_s \le \widehat D_{s+1}$):}
The inequality holds by construction. In \cref{alg:prudent_banker}, the budget is updated via $\widehat D_{s+1} \gets \widehat D_{s} \cdot 2^m$ for some integer $m \geq 1$ when a reset occurs, or remains constant otherwise. Thus, $\widehat D_{s+1} \ge \widehat D_s$.

\item \textbf{Tightness ($\widehat D_{s+1} < 2 D_{\operatorname{end}_{k(s)}}^{k(s)}$):}
A stage reset is triggered only when $D_{\operatorname{end}_{k(s)}}^{k(s)} > \widehat D_s$. The algorithm finds the smallest integer $m \in \mathbb{N}_{\ge 1}$ such that the new budget covers the realized regret:
\[
2^m \widehat D_s \ge D_{\operatorname{end}_{k(s)}}^{k(s)}.
\]
The update rule sets $\widehat D_{s+1} = 2^m \widehat D_s$. Due to the minimality of $m$, the exponent $m-1$ would fail to cover the regret, implying:
\begin{equation}
    \label{eq:doubling_bounds}
    2^{m-1} \widehat D_s < D_{\operatorname{end}_{k(s)}}^{k(s)} \le 2^m \widehat D_s = \widehat D_{s+1}.
\end{equation}
To obtain the upper bound, we rearrange the left-hand side of \eqref{eq:doubling_bounds}:
\[
\frac{1}{2} (2^m \widehat D_s) < D_{\operatorname{end}_{k(s)}}^{k(s)} \implies \frac{1}{2} \widehat D_{s+1} < D_{\operatorname{end}_{k(s)}}^{k(s)} \implies \widehat D_{s+1} < 2 D_{\operatorname{end}_{k(s)}}^{k(s)}.
\]
Combining the lower and upper bounds, we conclude that $D_{\operatorname{end}_{k(s)}}^{k(s)} \le \widehat D_{s+1} < 2 D_{\operatorname{end}_{k(s)}}^{k(s)}$.
\end{itemize}
\end{proof}
\newpage

\begin{PBAppLemma}[Regret Guarantee for Banker-OMD]{banker_regret_bound}{
    This lemma establishes the regret guarantee for the Banker-OMD strategy running with action scales $\sigma_t$, achieving a bound of roughly $\tilde{O}(\sqrt{T} + \sqrt{D})$.
    We need this because \cref{alg:prudent_banker} is built by mixing the base Banker-OMD iterate $\hat{x}_t$ with the safe comparator $x^c$.
    The proof follows the standard Banker-OMD decomposition: the regret is bounded by the sum of total \emph{borrowing} $B_{\text{end}_{k(s)}}$ and total \emph{immediate cost} $\sum_t \sigma_t^{-1}$.
    To control $B_{\text{end}_{k(s)}}$, we consider the last round $T_0$ when borrowing occurred; the amount borrowed is the current bill plus bills from rounds with outstanding feedback.
    To control $\sum_t \sigma_t^{-1}$, we use the specific structure of $\sigma_t$ to separate a $1/\sqrt{t}$ term and a delay-dependent term, applying the inequality $\sum_{t=1}^T \frac{x_t}{\sqrt{\sum_{s\le t}x_s}}\le 2\sqrt{\sum_t x_t}$ along with the bound $\mathfrak{D}_{\text{end}_{k(s)}}^{k(s)}\le D_{\operatorname{end}_{k(s)}}^{k(s)}$ from Lemma~\ref{pblem:number_of_delay_vs_delay_length}.
}
    Let $\tilde{t} = t - \text{start}_{k(s)} + 1$. 
    
    Setting $\sigma_{t} = \left(\frac{1}{\sqrt{\tilde{t}}} + \mathfrak{d}_{t}^{k(s)} \sqrt{\frac{\ln \left(\mathfrak{D}_t^{k(s)}+1\right)}{\mathfrak{D}_t^{k(s)}}}\right)^{-1}$, we have:
    \[
        \max _{x \in \Delta_A} \mathbb{E}\left[\sum_{t=\mathrm{start}_{k(s)}}^{\mathrm{end}_{k(s)}} \langle \hat{\ell}_t, \hat{x}_t - x \rangle\right]  \leq (C_1 + 2 C_2) \sqrt{\tilde{\text{end}_{k(s)}}} + (3C_1 + 2 C_2) \sqrt{ D_{\operatorname{end}_{k(s)}}^{k(s)} \ln (D_{\operatorname{end}_{k(s)}}^{k(s)}+1)}.
    \]
    Setting $\sigma_t = \sqrt{\frac{C_2}{C_1}} \left(\frac{1}{\sqrt{\tilde{t}}} + \mathfrak{d}_{t}^{k(s)} \sqrt{\frac{\ln \left(\mathfrak{D}_t^{k(s)}+1\right)}{\mathfrak{D}_t^{k(s)}}}\right)^{-1}$, we have:
    \[
        \max _{x \in \Delta_A} \mathbb{E}\left[\sum_{t=\mathrm{start}_{k(s)}}^{\mathrm{end}_{k(s)}} \langle \hat{\ell}_t, \hat{x}_t - x \rangle\right] \leq \sqrt{C_1 C_2} \left( 3\sqrt{\tilde{\text{end}_{k(s)}}} + 7 \sqrt{2 D_{\operatorname{end}_{k(s)}}^{k(s)} \ln (D_{\operatorname{end}_{k(s)}}^{k(s)}+1)} \right).
    \]
\end{PBAppLemma}

\begin{proof}
    Since $\hat{\ell}_t$ is an unbiased estimator, $\mathbb{E}\left[\sum_{t=\mathrm{start}_{k(s)}}^{\mathrm{end}_{k(s)}} \langle \hat{\ell}_t, \hat{x}_t - x \rangle\right]  = \mathbb{E}\left[\sum_{t=\mathrm{start}_{k(s)}}^{\mathrm{end}_{k(s)}} \langle \ell_t, \hat{x}_t - x \rangle\right]$. Taking the maximum over $x \in \Delta_A$:
    \[
        \mathcal{R}_{k(s)} \coloneqq \max _{x \in \Delta_A}\mathbb{E}\left[\sum_{t=\mathrm{start}_{k(s)}}^{\mathrm{end}_{k(s)}} \langle \hat{\ell}_t, \hat{x}_t - x \rangle\right] = \max _{x \in \Delta_A} \mathbb{E}\left[\sum_{t=\mathrm{start}_{k(s)}}^{\mathrm{end}_{k(s)}} \langle \ell_t, \hat{x}_t - x \rangle\right].
    \]
     Since $(\Psi, x_0)$ is $(C_1, C_2)$-regular, then we can apply Corollary 4.3 of \citet{huang2023banker}:
    \begin{equation}
        \label{eq:regret_decomp}
        \mathcal{R}_{k(s)} \leq C_1 \cdot \mathbb{E}\left[B_{k(s)}\right] + C_2 \cdot \mathbb{E}\left[\sum_{t=\mathrm{start}_{k(s)}}^{\mathrm{end}_{k(s)}} \sigma_t^{-1}\right].
    \end{equation}

    \textbf{Step 1: Bounding the Borrowing Term $B_{k(s)}$.}
    Consider the last round $T_0$ when $B_t$ increases  within phase $k(s)$. Define the phase-local index $\tilde{T}_0 = T_0 - \text{start}_{k(s)} + 1$. Assume there are $m=\mathfrak{d}_{T_0}^{k(s)}$ feedback items missing at the beginning of $T_0$, corresponding to rounds $t_1 < \dots < t_m$ within phase $k(s)$. Define phase-local indices $\tilde{t}_i=t_i- \text{start}_{k(s)} +1 $.
    Using the definition $\sigma_{t}^{-1}=\frac{1}{\sqrt{\tilde{t}}}+\mathfrak{d}_t^{k(s)} \sqrt{\frac{\ln \left(\mathfrak{D}_t^{k(s)}+1\right)}{\mathfrak{D}_t^{k(s)}}}$, we have $\sigma_{t}^{-1} \geq \frac{1}{\sqrt{\tilde{t}}}$ and $\sigma_{t}^{-1} \geq \mathfrak{d}_t^{k(s)} \sqrt{\frac{\ln \left(\mathfrak{D}_t^{k(s)}+1\right)}{\mathfrak{D}_t^{k(s)}}}$.
    By Lemma 4.5 of \citet{huang2023banker}, we have:
    \[
        B_{\text{end}_{k(s)}} = B_{T_0}=\sigma_{T_0}+\sum_{t=\operatorname{start}_{k(s)}}^{T_0-1} \mathbf{1}\left\{t+d_t \geq T_0\right\} \sigma_{t}.
    \]
    Since there are $m=\mathfrak{d}_{T_0}^{k(s)}$ feedback items pending, the sum covers exactly the indices $t_i$. Thus:
    \[
        B_{\text{end}_{k(s)}} = \sigma_{T_0} + \sum_{i=1}^m \sigma_{t_i}.
    \]
    Using the bound $\sigma_t \leq \frac{1}{\mathfrak{d}_t^{k(s)}} \sqrt{\frac{\mathfrak{D}_t^{k(s)}}{\ln (\mathfrak{D}_t^{k(s)}+ 1)}}$ and noting that $x \mapsto \sqrt{\frac{x}{\ln(x+1)}}$ is increasing:
    \begin{align*}
    B_{\operatorname{end}_{k(s)}} 
    & \leq \frac{1}{\mathfrak{d}_{T_0}^{k(s)}} \sqrt{\frac{\mathfrak{D}_{T_0}^{k(s)}}{\ln \left(\mathfrak{D}_{T_0}^{k(s)}+1\right)}}+\sigma_{t_1}+\sum_{i=2}^m \frac{1}{\mathfrak{d}_{t_i}^{k(s)}} \sqrt{\frac{\mathfrak{D}_{t_i}^{k(s)}}{\ln \left(\mathfrak{D}_{t_i}^{k(s)}+1\right)}} \\ 
    & \leq \frac{1}{m} \sqrt{\frac{D_{\operatorname{end}_{k(s)}}^{k(s)}}{\ln \left(D_{\operatorname{end}_{k(s)}}^{k(s)}+1\right)}}+\sqrt{\tilde{t}_1}+\sqrt{\frac{D_{\operatorname{end}_{k(s)}}^{k(s)}}{\ln \left(D_{\operatorname{end}_{k(s)}}^{k(s)}+1\right)}} \sum_{i=2}^m \frac{1}{\mathfrak{d}_{t_i}^{k(s)}} .
    \end{align*}
    Since $\tilde{t}_1 \le \tilde{\text{end}}_{k(s)} $ and at time $t_i$ there are at least $i-1$ other missing feedback items (from $t_1, \dots, t_{i-1}$), we have $\mathfrak{d}_{t_i}^{k(s)} \ge i-1$. Thus:
    \begin{align*}
        B_{\text{end}_{k(s)}} 
        &\leq \sqrt{\tilde{\text{end}}_{k(s)}} + \sqrt{\frac{D_{\operatorname{end}_{k(s)}}^{k(s)}}{\ln (D_{\operatorname{end}_{k(s)}}^{k(s)} + 1)}} \left( \frac{1}{m} + \sum_{i=2}^m \frac{1}{i-1} \right) \\
        &= \sqrt{\tilde{\text{end}}_{k(s)}} + \sqrt{\frac{D_{\operatorname{end}_{k(s)}}^{k(s)}}{\ln (D_{\operatorname{end}_{k(s)}}^{k(s)} + 1)}} \left( \frac{1}{m} + \sum_{j=1}^{m-1} \frac{1}{j} \right).
    \end{align*}
    Using the harmonic series bound $\sum_{j=1}^{k} \frac{1}{j} \le \ln k + \gamma + \frac{1}{2k}$ (where $\gamma \approx 0.577$):
    \begin{align*}
        B_{\text{end}_{k(s)}} &\leq \sqrt{\tilde{\text{end}_{k(s)}}} + \sqrt{\frac{D_{\operatorname{end}_{k(s)}}^{k(s)}}{\ln (D_{\operatorname{end}_{k(s)}}^{k(s)} + 1)}} \left( \frac{1}{m} + \ln(m-1) + \gamma + \frac{1}{2(m-1)} \right) \\
        &\leq \sqrt{\tilde{\text{end}_{k(s)}}} + \sqrt{\frac{D_{\operatorname{end}_{k(s)}}^{k(s)}}{\ln (D_{\operatorname{end}_{k(s)}}^{k(s)} + 1)}} (\ln m + 2).
    \end{align*}
    Using the property that if total delay within phase $k(s)$ is less than $D_{\operatorname{end}_{k(s)}}^{k(s)}$, the maximum concurrency $m$ satisfies roughly $m^2/2 \le D_{\operatorname{end}_{k(s)}}^{k(s)} \implies m \le \sqrt{2D_{\operatorname{end}_{k(s)}}^{k(s)}+1/4} + 1/2$, we substitute back:
    \begin{equation}
        \label{eq:B_T_bound}
        B_{\text{end}_{k(s)}} \leq \sqrt{\tilde{\text{end}_{k(s)}}} + \frac{1}{2} \sqrt{\frac{D_{\operatorname{end}_{k(s)}}^{k(s)}}{\ln (D_{\operatorname{end}_{k(s)}}^{k(s)}+1)}} \ln \left(2D_{\operatorname{end}_{k(s)}}^{k(s)} + \frac{1}{4}\right) + 2\sqrt{\frac{D_{\operatorname{end}_{k(s)}}^{k(s)}}{\ln (D_{\operatorname{end}_{k(s)}}^{k(s)}+1)}}.
    \end{equation}

    \textbf{Step 2: Bounding the Immediate Cost $\sum \sigma_t^{-1}$.}
    \begin{align}
        \label{eq:sigma_sum}
        \sum_{t=\text{start}_{k(s)}}^{\text{end}_{k(s)}} \sigma_t^{-1}
        & = \sum_{t=\text{start}_{k(s)}}^{\text{end}_{k(s)}} \left(\frac{1}{\sqrt{\tilde{t}}}+\mathfrak{d}_t^{k(s)} \sqrt{\frac{\ln \left(\mathfrak{D}_t^{k(s)} + 1\right)}{\mathfrak{D}_t^{k(s)}}}\right) \nonumber \\
        & \leq \sum_{t=\text{start}_{k(s)}}^{\text{end}_{k(s)}} \frac{1}{\sqrt{\tilde{t}}} + \sqrt{\ln (D_{\operatorname{end}_{k(s)}}^{k(s)}+1)} \sum_{t=\text{start}_{k(s)}}^{\text{end}_{k(s)}} \frac{\mathfrak{d}_t^{k(s)}}{\sqrt{\mathfrak{D}_t^{k(s)}}} \nonumber \\
        & \leq 2\sqrt{\tilde{\text{end}_{k(s)}}} + \sqrt{\ln (D_{\operatorname{end}_{k(s)}}^{k(s)}+1)} \cdot 2\sqrt{\mathfrak{D}_{\text{end}_{k(s)}}^{k(s)}} \nonumber \\
        & \leq 2\sqrt{\tilde{\text{end}_{k(s)}}} + \sqrt{2 D_{\operatorname{end}_{k(s)}}^{k(s)} \ln (D_{\operatorname{end}_{k(s)}}^{k(s)}+1)}.
    \end{align}
    Here we used $\sum_{t=1}^T \frac{x_t}{\sqrt{\sum_{s=1}^t x_s}} \leq 2 \sqrt{\sum_{t=1}^T x_t}$ and $\mathfrak{D}_{\operatorname{end}_{k(s)}}^{k(s)} \leq D_{\operatorname{end}_{k(s)}}^{k(s)}$(Lemma~\ref{pblem:number_of_delay_vs_delay_length}).

    \textbf{Step 3: Combining Terms.}
    Plugging \eqref{eq:B_T_bound} and \eqref{eq:sigma_sum} into \eqref{eq:regret_decomp}:
    \[
        \mathcal{R}_{k(s)} \leq (C_1 + 2 C_2) \sqrt{\tilde{\text{end}_{k(s)}}} + (3C_1 + 2 C_2) \sqrt{ D_{\operatorname{end}_{k(s)}}^{k(s)} \ln (D_{\operatorname{end}_{k(s)}}^{k(s)}+1)}.
    \]
    Finally, for the tuned learning rate $\sigma_t \leftarrow \sqrt{\frac{C_2}{C_1}} \sigma_t$, the terms scale as $C_1 \sqrt{\frac{C_2}{C_1}} B_{\text{end}_{k(s)}}$ and $C_2 \sqrt{\frac{C_1}{C_2}} \sum \sigma_t^{-1}$, effectively multiplying the final bound by $\sqrt{C_1 C_2}$ (and adjusting constants).
    Specifically:
    \begin{align*}
        \mathcal{R}_{k(s)} &\leq \sqrt{C_1 C_2} \left[ \left(\sqrt{\tilde{\text{end}_{k(s)}}} + \frac{1}{2}\sqrt{\dots}\ln(\dots) + 2\sqrt{\dots}\right) + \left(2\sqrt{\tilde{\text{end}_{k(s)}}} + \sqrt{2D\ln(D+1)}\right) \right] \\
        &\leq \sqrt{C_1 C_2} \left( 3\sqrt{\tilde{\text{end}_{k(s)}}} + 7 \sqrt{2 D_{\operatorname{end}_{k(s)}}^{(s)} \ln (D_{\operatorname{end}_{k(s)}}^{(s)}+1)} \right).
    \end{align*}
    The constant $7$ arises from bounding the terms involving $\ln(2D_{\operatorname{end}_{k(s)}}^{k(s)})$ and constants for $D_{\operatorname{end}_{k(s)}}^{k(s)} \ge 1$.
\end{proof}

\begin{PBAppLemma}[Bound on Missing Feedback Contribution]{missing_feedback_contribution}{
    Following lemma upper bounds the contribution of \emph{missing} feedback items by time $T_0$:
    it controls $\sum_{u \in \mathcal{M}(T_0)} \langle \hat{\ell}_u, x^c-x\rangle$ where $\mathcal{M}(T_0)$ is the set of rounds whose feedback has not arrived by $T_0$.
    Intuitively, missing feedback is dangerous because it prevents the phase-gap statistic from ``seeing'' some losses,
    so we must separately bound how bad the unseen (missing) terms can be.
    We need this lemma in both phase analyses (normal and extinction): it is the only place where we pay explicitly for delayed/missing information,
    and it is exactly why the restart threshold includes the additive slack $\hat{\xi}_s$.
    The proof has two parts.
    First we bound the \emph{number} of missing items: if feedback from $s$ is still missing at $T_0$ then necessarily $d_s\ge T_0-s$,
    so summing these lower bounds over the missing indices yields a triangular-number constraint
    $D_{\operatorname{end}_{k(s)}}^{k(s)} \ge 1+2+\cdots+m=m(m+1)/2$ and hence $m\lesssim \sqrt{D_{\operatorname{end}_{k(s)}}^{k(s)}}$.
    Second we bound the \emph{per-item} impact of a missing estimator using the mixture structure $x_t=\alpha\hat{x}_t+(1-\alpha)x^c$.
}
    Fix a phase $k(s)$ of \cref{alg:prudent_banker} starting at $\mathrm{start}_{k(s)}$ ($\operatorname{start}_{k(s)}$ in the algorihtm).
    Let $T_0 := \mathrm{start}_{k(s)+1}-2$ (the last time index at which the algorithm checks the phase condition before the phase ends).
    Let
    \[
    \mathcal{M}(T_0)
    \;\coloneqq\;
    \{\, u \in \{\operatorname{start}_{k(s)},\ldots,T_0-1\} : \mathrm{status}_u(T_0)=\textsc{missing}\,\},
    \qquad
    m \;\coloneqq\; |\mathcal{M}(T_0)|
    \]
    be the set and number of rounds within phase $k(s)$ whose feedback has not arrived by time $T_0$ .
    Then for any $x\in\Delta_A$:
    \[
        \sum_{u\in\mathcal{M}(T_0)} \langle \hat{\ell}_u,\, x^c-x\rangle
        \;\le\;
        2\sqrt{2D_{\operatorname{end}_{k(s)}}^{k(s)}+\tfrac14}-1 .
    \]
\end{PBAppLemma}

\begin{proof}
    \textbf{Step 1: Bounding the number of missing feedbacks.}
    For any $r\in\mathcal{M}(T_0)$, the feedback for round $u$ has not arrived by time $T_0$, so its delay satisfies $d_u \ge T_0-u$.
    Summing over all missing items:
    \[
        D_{\operatorname{end}_{k(s)}}^{k(s)}
        \;=\;
        \sum_{t=\text{start}_k(s)}^{\text{end}_{k(s)}} d_t
        \;\ge\;
        \sum_{u\in\mathcal{M}(T_0)} d_u
        \;\ge\;
        \sum_{u\in\mathcal{M}(T_0)} (T_0-u).
    \]

    Let the missing indices be ordered as $u_1 < u_2 < \cdots < u_m.$ Since the latest possible missing round is $T_0-1$ (feedback is checked at end of round), we have:
    \begin{align*}
        T_0 - u_m &\ge 1, \\
        T_0 - u_{m-1} &\ge 2, \\
        &\vdots \\
        T_0 - u_1 &\ge m.
    \end{align*}
    Summing these lower bounds yields a triangular number:
    \[
        \sum_{i=1}^m (T_0-u_i) \;\ge\; \sum_{j=1}^m j \;=\; \frac{m(m+1)}{2}.
    \]
    Combining this with the above lower bound on
    $D_{\operatorname{end}_{k(s)}}^{k(s)}$,
    we obtain
    \[
        \frac{m(m+1)}{2}
        \;\le\;
        D_{\operatorname{end}_{k(s)}}^{k(s)}.
    \]
    Solving the quadratic for $m$:
    \[
        m^2+m-2D_{\operatorname{end}_{k(s)}}^{k(s)} \le 0
        \quad\Longrightarrow\quad
        m
        \;\le\;
        \frac{-1+\sqrt{1+8D_{\operatorname{end}_{k(s)}}^{k(s)}}}{2}
        \;=\;
        \sqrt{2D_{\operatorname{end}_{k(s)}}^{k(s)}+\tfrac14}-\tfrac12.
    \]
    \textbf{Step 2: Bounding the per-round inner product.}
    When the feedback for round $u$ arrives, the algorithm constructs the importance-weighted estimator $\hat{\ell}_u(a) = \frac{\ell_u(a)}{x_u(a)} \mathbf{1}\{A_u=a\}$.
    We analyze the inner product $\langle \hat{\ell}_u, x^c-x\rangle$:
    \[
        \langle \hat{\ell}_u, x^c-x\rangle = \langle \hat{\ell}_u, x^c\rangle - \underbrace{\langle \hat{\ell}_u, x\rangle}_{\ge 0} \le \langle \hat{\ell}_u, x^c\rangle.
    \]
    Evaluating the term $\langle \hat{\ell}_u, x^c\rangle$:
    \[
        \langle \hat{\ell}_u, x^c\rangle = \sum_{a} \frac{\ell_u(a)\mathbf{1}\{A_u=a\}}{x_u(A_u)} x^c(a) = \frac{\ell_u(A_u) x^c(A_u)}{x_u(A_u)}.
    \]
    In any active phase, the played policy is a mixture $x_u = \alpha \hat{x}_u + (1-\alpha) x^c$.
    WLOG suppose that $\hat{R}=2^r$ is a power of 2. Since $\alpha=\frac{2^{k-1}}{\hat{R}}<1$ implies $k<\log _2 \hat{R}+1=r+1$, then $\alpha \leq 2^{r-1} / 2^r=1 / 2$. We have $1-\alpha \ge 1/2$, and thus:
    \[
        x_u(A_u) \ge (1-\alpha)x^c(A_u) \ge \tfrac12 x^c(A_u).
    \]
    Using $\ell_u(A_u) \le 1$, we get:
    \[
        \langle \hat{\ell}_u, x^c\rangle \le \frac{1 \cdot x^c(A_u)}{\tfrac12 x^c(A_u)} = 2.
    \]
    Therefore, for every missing round $u$, $\langle \hat{\ell}_u, x^c-x\rangle \le 2$.

    \textbf{Conclusion.}
    Summing over the set of missing rounds $\mathcal{M}(T_0)$:
    \[
        \sum_{u\in\mathcal{M}(T_0)} \langle \hat{\ell}_u, x^c-x\rangle
        \le \sum_{u\in\mathcal{M}(T_0)} 2
        = 2m
        \le 2\left(\sqrt{2D_{\operatorname{end}_{k(s)}}^{k(s)}+\tfrac14}-\tfrac12\right)
        = 2\sqrt{2D_{\operatorname{end}_{k(s)}}^{k(s)}+\tfrac14}-1.
    \]
\end{proof}

\begin{PBAppLemma}[Regret Bound for Normal Phase]{normal_phase_bound}{
    In a normal phase, the restart condition never triggers:
    the phase-gap statistic does not exceed the threshold $2 \widehat R_s + \widehat \xi_s$ throughout the phase.
    This lemma shows two things:
    1. The regret against any strategy $x$ is bounded by a controlled quantity (roughly $2 \widehat R_s + \widehat \xi_s$ plus a bounded missing-feedback contribution).
    2. The regret against the safe comparator $x^c$ is small (at most $2^{k-1}$).
    We need this because the algorithm either keeps running in a normal phase, or it restarts due to a substantially suboptimal comparator; this lemma handles the former.
    Proof intuition: based on $x_t=\alpha_k \hat{x}_t+(1-\alpha_k)x^c$, we decompose regret into (i) regret of the base Banker-OMD sequence $\hat{x}_t$ (bounded by Lemma~\ref{pblem:banker_regret_bound}) and (ii) a comparator-gap term.
    For the comparator-gap term, we split into ``arrived'' feedback (controlled by the phase-gap definition since restart didn't trigger) and ``missing'' feedback (controlled by Lemma~\ref{pblem:missing_feedback_contribution}).
}
    Consider a phase $k(s)$ of \cref{alg:prudent_banker} that starts at index $\mathrm{start}_{k(s)}$ and ends at $\mathrm{end}_{k(s)} := \mathrm{start}_{k(s)+1}-1$ (if no new phase is triggered, $\mathrm{end}_{k(s)} = T$).
    Let $\alpha_{k(s)} < 1$ denote the value of $\alpha$ during this phase, and recall that for all $t$ in this phase the played distribution is $x_t = \alpha_{k(s)} \hat{x}_t + (1-\alpha_{k(s)})x^c$.
    Let $\widehat\xi_s := \frac{\sqrt{8 \widehat D_s+1}-1}{\delta}$.
    Then for any comparator $x\in\Delta_A$:
    \[
        \mathbb{E}\!\left[\sum_{t=\mathrm{start}_{k(s)}}^{\mathrm{end}_{k(s)}}
        \langle \hat{\ell}_t, x_t - x\rangle\right]
        \;\le\;
        4 \widehat R (D^{k(s)}_{\text{end}_{k(s)}}) + \frac{\sqrt{16 D_{\operatorname{end}_{k(s)}}^{k(s)}+1}-1}{\delta} + 2 \sqrt{2D_{\operatorname{end}_{k(s)}}^{k(s)}+\tfrac14} + 1.
    \]
    Moreover, for the comparator $x^c$:
    \[
        \mathbb{E}\!\left[\sum_{t=\mathrm{start}_{k(s)}}^{\mathrm{end}_{k(s)}}
        \langle \hat{\ell}_t, x_t - x^c\rangle\right] \le 2^{k(s)-1}.
    \]
\end{PBAppLemma}

\begin{proof}
Fix a phase $k(s)$ inside stage $s$. For simplicity of notation, we use $k$ instead.

    Fix a phase $k$. Since $x_t = \alpha_k \hat{x}_t + (1-\alpha_k)x^c$ throughout the phase, we can decompose the regret for any $x\in\Delta_A$:
    \begin{align*}
        \sum_{t=\mathrm{start}_k}^{\mathrm{end}_k} \langle \hat{\ell}_t, x_t-x\rangle
        &= \sum_{t=\mathrm{start}_k}^{\mathrm{end}_k} \left\langle \hat{\ell}_t, \alpha_k(\hat{x}_t-x) + (1-\alpha_k)(x^c-x)\right\rangle\\
        &= \alpha_k \sum_{t=\mathrm{start}_k}^{\mathrm{end}_k} \langle \hat{\ell}_t, \hat{x}_t-x\rangle
        \;+\;
        (1-\alpha_k)\sum_{t=\mathrm{start}_k}^{\mathrm{end}_k} \langle \hat{\ell}_t, x^c-x\rangle .
    \end{align*}

    \textbf{Step 1: Bounding the Base Regret.}
    For all phases in stage $s=1$ and all phases excluding the first one in stage $s \geq 2$, Lemma~\ref{pblem:banker_regret_bound} applied to the Banker-OMD base sequence $\{\hat{x}_t\}$ indicates $\mathbb{E}[\sum_{t=\mathrm{start}_k}^{\mathrm{end}_k} \langle \hat{\ell}_t, \hat{x}_t-x\rangle] \le \widehat R (D_{\operatorname{end}_{k(s)}}^{k(s)})$. For first phases in each stage $s \geq 2$,  $\mathbb{E}[\sum_{t=\mathrm{start}_k}^{\mathrm{end}_k} \langle \hat{\ell}_t, \hat{x}_t-x\rangle] \le \widehat R (D_{\operatorname{end}_{k(s)}}^{k(s)}) + 1$. This is because: $\mathbb{E} [\langle \hat{\ell}_{\text{start}_{1}}, \hat{x}_{\text{start}_{1}} - x \rangle] \leq 1$, then we initialize $\hat{\ell}_{\text{start}_{2}} = 1/A$ and continue the phases. Since $\widehat R (\cdot)$ is monotone increasing and $D_{\operatorname{end}_{k(s)}}^{k(s)} \leq \widehat D_s$ before exiting phase. Thus:
    \begin{equation} \label{eq:base_term}
        \alpha_k\,\mathbb{E}\!\left[\sum_{t=\mathrm{start}_k}^{\mathrm{end}_k} \langle \hat{\ell}_t, \hat{x}_t-x\rangle\right]\le \alpha_k (\widehat R (\widehat D_s) + 1) = \alpha_k (\widehat R_s +1 ).
    \end{equation}

    \textbf{Step 2: Bounding the Comparator Gap (Arrived Feedback).}
    Recall the definition of the cumulative gap statistics:
    \[
        g_t := \sum_{j=\mathrm{start}_k}^{t} \hat{\ell}_j\,\mathbf{1}\{\mathrm{status}_j(t)=\textsc{arrived}\},
        \qquad
        \mathrm{phase\_gap}_t := \max_{u\in\Delta_A}\langle g_t, x^c-u\rangle.
    \]
    Since the phase does \emph{not} restart before time $\mathrm{end}_k$, the restart condition must have failed for all check-points $t \in \{\mathrm{start}_k, \dots, \mathrm{end}_k - 1\}$. Therefore:
    \[
        \mathrm{phase\_gap}_t \le 2\widehat R_s+\widehat\xi_s \qquad \text{for all } t \le \mathrm{end}_k - 1.
    \]
    Evaluating this at the last check-point $t = \mathrm{end}_k - 1$ for our specific comparator $x$:
    \begin{equation} \label{eq:arrived_bound}
        \sum_{j=\mathrm{start}_k}^{\mathrm{end}_k - 1}
        \langle \hat{\ell}_j\mathbf{1}_{\{\mathrm{status}_j(\mathrm{end}_k-1)=\textsc{arrived}\}}, x^c-x\rangle
        \le \mathrm{phase\_gap}_{\mathrm{end}_k-1}
        \le 2\widehat R_s+\widehat\xi_s.
    \end{equation}

    \textbf{Step 3: Bounding Missing and Last-Round Terms.}
    We decompose the total comparator gap sum as:
    \begin{align*}
        \sum_{t=\mathrm{start}_k}^{\mathrm{end}_k} \langle \hat{\ell}_t, x^c-x\rangle
        &= \underbrace{\sum_{t=\mathrm{start}_k}^{\mathrm{end}_k - 1} \langle \hat{\ell}_t\mathbf{1}_{\{\mathrm{arrived}\}}, x^c-x\rangle}_{\text{Bounded by } 2\widehat R_s+\widehat\xi_s \text{ (Eq. \ref{eq:arrived_bound})}}
        + \underbrace{\sum_{t=\mathrm{start}_k}^{\mathrm{end}_k - 1} \langle \hat{\ell}_t\mathbf{1}_{\{\mathrm{missing}\}}, x^c-x\rangle}_{\text{Missing Feedback}}
        + \underbrace{\langle \hat{\ell}_{\mathrm{end}_k}, x^c-x\rangle}_{\text{Last Round}}.
    \end{align*}
    For the \textbf{Missing Feedback} term, we apply Lemma~\ref{pblem:missing_feedback_contribution}:
    \[
        \sum_{t=\mathrm{start}_k}^{\mathrm{end}_k-1} \langle \hat{\ell}_t\mathbf{1}_{\{\mathrm{status}_t=\textsc{missing}\}}, x^c-x\rangle
        \le 2\sqrt{D_{\operatorname{end}_{k(s)}}^{k(s)}}+\tfrac14 - 1.
    \]
    For the \textbf{Last Round} term, note that $\hat{\ell}_u = \frac{\ell_u(A_u)}{x_u(A_u)}e_{A_u}$. Thus:
    \[
        \langle \hat{\ell}_{\mathrm{end}_k}, x^c-x\rangle \le \langle \hat{\ell}_{\mathrm{end}_k}, x^c\rangle
        = \frac{\ell_{\mathrm{end}_k}(A_{\mathrm{end}_k}) x^c(A_{\mathrm{end}_k})}{x_{\mathrm{end}_k}(A_{\mathrm{end}_k})}
        \le \frac{x^c(A_{\mathrm{end}_k})}{(1-\alpha_k)x^c(A_{\mathrm{end}_k})}
        = \frac{1}{1-\alpha_k}.
    \]
    In a normal phase, $\alpha_k \le 1/2$ (since $\alpha$ doubles but is capped at $1$, and $\alpha_k < 1$ implies the previous doubling didn't exceed the threshold), so $\frac{1}{1-\alpha_k} \le 2$. Thus, the last round contributes at most $2$.

    \textbf{Step 4: Conclusion.}
    Combining all terms and using $(1-\alpha_k) \le 1$:
    \begin{align*}
        \sum_{t=\mathrm{start}_k}^{\mathrm{end}_k} \langle \hat{\ell}_t, x_t-x\rangle
        &\le \alpha_k (\widehat R_s + 1) + (1-\alpha_k)\Big((2\widehat R_s+\widehat\xi_s) + (2 \sqrt{2D_{\operatorname{end}_{k(s)}}^{k(s)} + \tfrac14} - 1) + 2\Big)\\
        &\le \alpha_k (\widehat R_s + 1) + (1-\alpha_k)\Big((4\widehat R (D_{\operatorname{end}_{k(s)}}^{k(s)}) + \frac{\sqrt{8 \widehat D_s+1}-1}{\delta}) + (2 \sqrt{2 D_{\operatorname{end}_{k(s)}}^{k(s)} +\tfrac14} - 1) + 2\Big)\\
        &\le 4 \widehat R (D^{k(s)}_{\text{end}_{k(s)}}) + \frac{\sqrt{16 D_{\operatorname{end}_{k(s)}}^{k(s)}+1}-1}{\delta} + 2 \sqrt{2D_{\operatorname{end}_{k(s)}}^{k(s)}+\tfrac14} + 1.
    \end{align*}
    The second inequality comes from $\widehat R_s = \widehat R (\widehat D_s) \leq \widehat R (\widehat D_{s+1}) \leq \widehat R (2 D_{\operatorname{end}_{k(s)}}^{k(s)}) < 2  \widehat R (D_{\operatorname{end}_{k(s)}}^{k(s)})$ by Lemma~\ref{pblem:bound_on_estimated_total_delay}.
   Taking expectation proves the first claim.
    For the second claim, set $x=x^c$. The comparator gap term vanishes entirely, leaving only the base regret term:
    \[
        \mathbb{E}\!\left[\sum_{t=\mathrm{start}_k}^{\mathrm{end}_k} \langle \hat{\ell}_t, x_t-x^c\rangle\right]
        = \alpha_k\,\mathbb{E}\!\left[\sum_{t=\mathrm{start}_k}^{\mathrm{end}_k} \langle \hat{\ell}_t, \hat{x}_t-x^c\rangle\right]
        \le \alpha_k \widehat R_s \leq 2^{k-1}.
    \]
\end{proof}

\newpage
\begin{PBAppLemma}[Negative Regret in Extinction Phase]{extinction_phase_bound}{
    The following lemma captures the upper bound of comparator regret in extinction phase (the phase that will be exited) is \emph{strictly negative}—at most $-2^{k(s)-1}$.
    We need this lemma for the overall $O(1)$ guarantee against the safe comparator $x^c$:
    each time we restart, we certify that the algorithm has made geometric negative progress relative to $x^c$,
    which will later cancel the positive cost of the final non-restarting phase.
    Proof intuition: rewrite $x_t-x^c=\alpha_{k(s)}(\hat{x}_t-x^c)$ and then insert an optimizer $x^*$ of the phase-gap at the exit time,
    which makes the arrived part contribute at most $-(2 \widehat R_s + \widehat \xi_s)$.
    The base Banker-OMD regret contributes at most $\widehat R_s$, and the missing part is bounded by $O (\sqrt{D_{\operatorname{end}_{k(s)}}^{k(s)}})$.
    Parameter $\widehat \xi_s$ is chosen so that it dominates the missing part, leaving comparator regret at most $- \widehat R_s$,
    and multiplying by $\alpha_{k(s)}=2^{k(s)-1}/ \widehat R_s$ gives the claimed $-2^{k(s)-1}$ bound.
}
    Consider a phase $k(s)$ of \cref{alg:prudent_banker} that starts at index $\mathrm{start}_{k(s)}$ and ends at $\mathrm{end}_{k(s)} := \mathrm{start}_{k(s)+1}-1$ (if no new phase is triggered, take $\mathrm{end}_{k(s)} = T$).
    Then:
    \[
        \mathbb{E}\Bigg[\sum_{t=\mathrm{start}_{k(s)}}^{\mathrm{end}_{k(s)}}\langle \hat{\ell}_t,\, x_t-x^c\rangle\Bigg]\;\le\;-2^{k(s)-1}.
    \]
\end{PBAppLemma}

\begin{proof}
For simplicity of notation, use $k$ to replace $k(s)$ in the following.

    \textbf{Step 1: Decomposition.}
    During phase $k$, the mixing parameter is constant $\alpha := \alpha_k = \frac{2^{k-1}}{\widehat R_s} < 1$, and the played distribution is $x_t = \alpha \hat{x}_t + (1-\alpha)x^c$.
    Hence, $x_t-x^c = \alpha(\hat{x}_t-x^c)$, which allows us to scale the regret:
    \begin{equation}\label{eq:mix_reduce}
        \sum_{t=\mathrm{start}_k}^{\mathrm{end}_k}\langle \hat{\ell}_t, x_t-x^c\rangle
        = \alpha \sum_{t=\mathrm{start}_k}^{\mathrm{end}_k}\langle \hat{\ell}_t, \hat{x}_t-x^c\rangle.
    \end{equation}

    \textbf{Step 2: Exit Condition Analysis.}
    At time $\mathrm{end}_k$, the algorithm computes the cumulative gap statistics using only arrived feedback:
    \[
        g_{\mathrm{end}_k} := \sum_{j=\mathrm{start}_k}^{\mathrm{end}_k} \hat{\ell}_j\,\mathbf{1}\{\mathrm{status}_j(\mathrm{end}_k)=\textsc{arrived}\}.
    \]
    Since $\mathrm{end}_k$ is the last round before restarting, the restart condition must hold:
    \[
        \text{phase\_gap}_{\mathrm{end}_k} = \max_{x \in \Delta_A} \langle g_{\mathrm{end}_k},\, x^c - x \rangle > 2 \widehat R_s + \widehat \xi_s.
    \]
    Let $x^* \in \arg\max_{x\in\Delta_A}\langle g_{\mathrm{end}_k}, x^c-x\rangle$. Then:
    \begin{equation}\label{eq:exit_gap}
        \langle g_{\mathrm{end}_k}, x^c-x^*\rangle > 2 \widehat R_s + \widehat \xi_s
        \quad\Longleftrightarrow\quad
        \langle g_{\mathrm{end}_k}, x^*-x^c\rangle < -(2 \widehat R_s + \widehat \xi_s).
    \end{equation}
    Expanding $g_{\mathrm{end}_k}$, this implies:
    \begin{equation}\label{eq:arrived_negative}
        \sum_{t=\mathrm{start}_k}^{\mathrm{end}_k}\big\langle \hat{\ell}_t\,\mathbf{1}\{\mathrm{status}_t(\mathrm{end}_k)=\textsc{arrived}\},\,x^*-x^c\big\rangle < -(2 \widehat R_s + \widehat \xi_s).
    \end{equation} \newpage
    \textbf{Step 3: Term-by-Term Bounding.}
    We decompose the sum in \eqref{eq:mix_reduce} by introducing $x^*$ and splitting arrived/missing terms:
    \begin{align}
        \sum_{t=\mathrm{start}_k}^{\mathrm{end}_k}\langle \hat{\ell}_t, \hat{x}_t-x^c\rangle
        &=\sum_{t=\mathrm{start}_k}^{\mathrm{end}_k}\langle \hat{\ell}_t, \hat{x}_t-x^*\rangle \nonumber \\
        &\quad +\sum_{t=\mathrm{start}_k}^{\mathrm{end}_k}\big\langle \hat{\ell}_t\,\mathbf{1}\{\mathrm{status}_t(\mathrm{end}_k)=\textsc{arrived}\}, x^*-x^c\big\rangle \nonumber \\
        &\quad +\sum_{t=\mathrm{start}_k}^{\mathrm{end}_k}\big\langle \hat{\ell}_t\,\mathbf{1}\{\mathrm{status}_t(\mathrm{end}_k)=\textsc{missing}\}, x^*-x^c\big\rangle.
        \label{eq:three_terms}
    \end{align}
    Taking expectations, we bound each term:

    \textit{(i) Banker-OMD Regret Term:}
    By Lemma~\ref{pblem:banker_regret_bound}, $\mathbb{E}\Big[\sum \langle \hat{\ell}_t, \hat{x}_t-x^*\rangle\Big]\le \widehat R (D_{\operatorname{end}_{k(s)}}^{(s)}) \leq \widehat R (\widehat{D}_s) = \widehat R_s $.

    \textit{(ii) Arrived Feedback Term:}
    From \eqref{eq:arrived_negative}, the expected value is at most $-(2 \widehat R_s + \widehat \xi_s)$.

    \textit{(iii) Missing Feedback Term:}
    Let $m$ be the number of missing items at $\mathrm{end}_k$.
    Since $\ell_t(A_t) \le 1$ and $x^c(a) \ge \delta$, we bound the inner product:
    \[
        \langle \hat{\ell}_t, x^*-x^c\rangle \le \langle \hat{\ell}_t, x^*\rangle
        = \frac{\ell_t(A_t) x^*(A_t)}{x_t(A_t)}
        \le \frac{1}{x_t(A_t)}.
    \]
    Since $x_t(a) \ge (1-\alpha)x^c(a) \ge (1-\alpha)\delta$ and assuming $\alpha \le 1/2$ (extinction implies we haven't reached full trust yet), $x_t(a) \ge \delta/2$.
    Thus, $\langle \hat{\ell}_t, x^*-x^c\rangle \le 2/\delta$. Summing over $m$ missing items:
    \begin{equation}\label{eq:missing_bound}
        \mathbb{E}\Bigg[\sum_{\text{missing}} \langle \hat{\ell}_t, x^*-x^c\rangle\Bigg] \le \frac{2m}{\delta}.
    \end{equation}

    \textbf{Step 4: Conclusion.}
    Plugging the bounds back into \eqref{eq:three_terms} and multiplying by $\alpha$:
    \begin{align*}
        \mathbb{E}\Bigg[\sum_{t=\mathrm{start}_k}^{\mathrm{end}_k}\langle \hat{\ell}_t, x_t-x^c\rangle\Bigg]
        &\le \alpha \left( \widehat R_s - (2 \widehat R_s + \widehat \xi_s) + \frac{2m}{\delta} \right) \\
        & = -\alpha \widehat R_s - \alpha \left( \widehat \xi_s - \frac{2m}{\delta} \right).
    \end{align*}
    Recall we set $\widehat \xi_s = \frac{\sqrt{8 \widehat D_s +1}-1}{\delta}$.  From Lemma~\ref{pblem:missing_feedback_contribution}, the number of missing items $m$ satisfies $m \le \frac{\sqrt{8 D_{\operatorname{end}_{k(s)}}^{k(s)} + 1}-1}{2}$.
    Since $D^{k(s)}_{\text{end}_k(s)} \leq \widehat D_{s}$, we have $\frac{2m}{\delta} \le \widehat \xi_s$, which implies $\widehat \xi_s - \frac{2m}{\delta} \ge 0$. 
    The term drops out (or helps), leaving:
    \[
        \mathbb{E}\Bigg[\sum_{t=\mathrm{start}_k}^{\mathrm{end}_k}\langle \hat{\ell}_t, x_t-x^c\rangle\Bigg]
        \le -\alpha \widehat R_s.
    \]
    Using $\alpha = 2^{k-1}/\widehat R_s$, we get $-\alpha \widehat R_s = -2^{k-1}$.
\end{proof}
\newpage
\newpage

\begin{PBAppTheorem}[Global Regret and Safety Guarantee]{global_regret_theorem}{
    This theorem combines the per-phase analyses to obtain two global statements:
    (1) standard pseudo-regret against any strategy $x\in\Delta_A$ is at most $\tilde{O}(\sqrt{T}+\sqrt{D})$,
    and (2) the regret against the special safe comparator $x^c$ is $O(1)$.
    Intuitively, the algorithm alternates between cautious behavior (mixing with $x^c$) and increasingly aggressive behavior (doubling $\alpha$);
    either it eventually becomes fully aggressive ($\alpha=1$) and behaves like Banker-OMD,
    or it keeps restarting, but each restart certifies $x^c$ behaves poorly.
    We need this theorem to justify both objectives of phased aggression:
    near-optimal regret scaling with delays, and stability control via the bounded deviation from $x^c$.
    Proof intuition: partition time into stages, then further decompose stage into phases. The maximum number of stages is bounded by the number of times $\widehat D_1$ can be doubled before exceeding $D$, which is $O(\log D)$. Therefore, we only need to focus on the stage-level regret.
    For the worst-case regret bound, apply Lemma~\ref{pblem:normal_phase_bound} to every phase with $\alpha<1$ and Lemma~\ref{pblem:banker_regret_bound} to the final phase if $\alpha$ reaches $1$;
    the number of $\alpha<1$ phases is only $O(\log \widehat R_s)$ since $\alpha$ doubles each restart, which produces the $\tilde{O}$ bound.
    For the $x^c$-comparator guarantee, sum the phase-wise bounds:
    all exiting phases contribute at most $-\sum 2^{k(s)-1}$ by Lemma~\ref{pblem:extinction_phase_bound}, while the final phase contributes at most $+2^{K(s)-1}$ by Lemma~\ref{pblem:normal_phase_bound},
    so the total forms a geometric cancellation that leaves only a constant.
}
    Let $\delta \in(0,1 / A]$. Let $\{x_t\}_{t=1}^T$ be the sequence produced by \cref{alg:prudent_banker}, and let
    $\hat{\ell}_t\in\mathbb{R}^A$ be the importance-weighted estimator defined in the algorithm.
    Then:
    \[
        \max_{x\in\Delta_A}\;
        \mathbb{E}\!\left[\sum_{t=1}^{T}\langle \hat{\ell}_t,\; x_t-x\rangle\right]
        \;\le\;
        \tilde{O} \left(\sqrt{C_1 C_2}(\sqrt{T}+\sqrt{D}) \right),
    \]
    and moreover:
    \[
        \mathbb{E}\!\left[\sum_{t=1}^{T}\langle \hat{\ell}_t,\; x_t-x^c\rangle\right]\le O(\log_2 D).
    \]
\end{PBAppTheorem}

\begin{proof}
    Partition the time horizon into stages indexed by $s = 1, 2, \ldots, S$. Further partition each stage $s$ into phases indexed by $k(s) = 1,2,\ldots,K(s)$. Phase $k(s)$ runs from $\mathrm{start}_{k(s)}$ to $\mathrm{end}_{k(s)}$.
    Let $\alpha_{k(s)}$ denote the constant mixing parameter for phase $k(s)$. Recall that $\alpha_{k(s)}=\min\{2^{k(s)-1}/\widehat R_s,1\}$. 

    By construction, the delay budget $\widehat{D}_s$ doubles after each stage reset (specifically $\hat{D}_{s+1} \ge 2\hat{D}_s$). Consequently, the maximum number of stages $S$ is bounded by the number of times $\hat{D}_1$ can be doubled before exceeding $D$.
Let $S_{max}$ be the smallest integer such that $2^{S_{max}-1} \ge D$. It follows that:
\[
S \leq \lceil \log_2 D \rceil + 1 = O(\log D).
\]
    \textbf{Part 1: Standard Regret Bound (Against any $x$).}
    Let $k_\star (s):=\min\{k(s):\alpha_{k(s)}=1\}$ (set $k_\star(s) = +\infty$ if $\alpha_{k(s)}$ never reaches 1).
    Since $\alpha_{k(s)} < 1$ implies $2^{k(s)-1} < \widehat R_s$, the number of phases with $\alpha_{k(s)}<1$ is at most $\lceil \log_2 \widehat R_s\rceil$.
    For every such phase $k$, Lemma~\ref{pblem:normal_phase_bound} guarantees that for any $x\in\Delta_A$:
    \[
        \mathbb{E}\!\left[\sum_{t=\mathrm{start}_{k(s)}}^{\mathrm{end}_{k_\star(s)}}\langle \hat{\ell}_t,\; x_t-x\rangle\right]
        \le
        4 \widehat R (D^{k(s)}_{\text{end}_{k(s)}}) + \frac{\sqrt{16 D_{\operatorname{end}_{k(s)}}^{k(s)}+1}-1}{\delta} + 2 \sqrt{2D_{\operatorname{end}_{k(s)}}^{k(s)}+\tfrac14} + 1.
    \]
    If a final phase occurs with $\alpha_{k_\star(s)}=1$, the algorithm reduces to Banker-OMD for that duration. By Lemma~\ref{pblem:banker_regret_bound}:
    \[
        \mathbb{E}\!\left[\sum_{t=\mathrm{start}_{k_\star(s)}}^{\text{end}_{k(s)}}\langle \hat{\ell}_t,\; x_t-x\rangle\right]
        \le \widehat R (D^{k(s)}_{\text{end}_{k(s)}}).
    \]
    \newpage
    Summing over all phases within a stage:
    \begin{align*}
        \mathbb{E}\!\left[ \sum_{k(s) =1}^{K(s)} \sum_{t=\mathrm{start}_{k(s)}}^{\mathrm{end}_{k_\star(s)}} \langle \hat{\ell}_t,\; x_t-x\rangle\right]
        &\le
        \lceil\log_2 \widehat R_s \rceil \Big(4 \widehat R (D^{k(s)}_{\text{end}_{k(s)}}) + \frac{\sqrt{16 D_{\operatorname{end}_{k(s)}}^{k(s)}+1}-1}{\delta} + 2 \sqrt{2D_{\operatorname{end}_{k(s)}}^{k(s)}+\tfrac14} + 1\Big)\;\\
        & +\;\mathbf{1}\{k_\star<\infty\}\, \widehat R (D^{k(s)}_{\text{end}_{k(s)}})\\
        & \le \lceil 1 + \log_2 \widehat R (D) \rceil 
        \Big(4 \widehat R (D) + \frac{\sqrt{16 D+1}-1}{\delta} + 2 \sqrt{2D+\tfrac14} + 1\Big)\;\\
        & +\;\mathbf{1}\{k_\star<\infty\}\, \widehat R (D)\\
        &=
        \tilde{O}\!\left(\widehat R(D)+\sqrt{D}\right).
    \end{align*}
    Substituting $\widehat R(D) =\tilde{O} \left(\sqrt{C_1 C_2}(\sqrt{T}+\sqrt{D}) \right)$, we obtain the stated bound $\tilde{O} \left(\sqrt{C_1 C_2}(\sqrt{T}+\sqrt{D}) \right)$.

    Then sum over all stages, 
    \begin{align*}
        \mathbb{E}\!\left[\sum_{s=1}^{S} \sum_{k(s) =1}^{K(s)} \sum_{t=\mathrm{start}_{k(s)}}^{\mathrm{end}_{k_\star(s)}} \langle \hat{\ell}_t,\; x_t-x\rangle\right]
        \le (\lceil \log_2 D \rceil + 1) \tilde{O} \left(\sqrt{C_1 C_2}(\sqrt{T}+\sqrt{D}) \right)
    \end{align*}
    \textbf{Part 2: Safety Guarantee (Regret against $x^c$).}
    We analyze the sum $\sum_{k} \text{Regret}_k(x^c)$ by considering two cases.

    \textit{Case A: The algorithm never reaches $\alpha_{k(s)}=1$.} with in a stage $s$.
    The final phase $K(s)$ is a normal phase. By Lemma~\ref{pblem:normal_phase_bound}, its regret against $x^c$ is $\le 2^{K(s)-1}$.
    Every earlier phase $k(s) < K(s)$ ended because the restart condition was triggered. By Lemma~\ref{pblem:extinction_phase_bound}, such phases have strictly negative regret against $x^c$: $\le -2^{k(s)-1}$.
    Summing these terms yields a telescoping geometric series:
    \[
        \mathbb{E}\!\left[\sum_{k(s) =1}^{K(s)} \sum_{t=\mathrm{start}_{k(s)}}^{\mathrm{end}_{k_\star(s)}} \langle \hat{\ell}_t,\; x_t-x^c\rangle\right]
        \le
        2^{K(s)-1} + \sum_{k(s)=1}^{K(s)-1} (-2^{k(s)-1})
        =
        2^{K(s)-1} - (2^{K(s)-1} - 1)
        =
        1.
    \]

    \textit{Case B: The algorithm reaches $\alpha_{k(s)}=1$ in phase $k_\star (s)$.}
    Phases $1,\ldots,k_\star(s)-1$ are exiting phases. By Lemma~\ref{pblem:extinction_phase_bound}, their total regret against $x^c$ is:
    \[
        \sum_{k(s)=1}^{k_\star(s)-1} (-2^{k(s)-1}) = -(2^{k_\star(s)-1}-1).
    \]
    The final phase $k_\star(s)$ uses $\alpha_{k(s)}=1$ (pure Banker-OMD). By Lemma~\ref{pblem:banker_regret_bound}, its regret is at most $\hat{R}_s$.
    The total regret is thus bounded by $\hat{R}_s - (2^{k_\star(s)-1} - 1)$.
    Since $\alpha_{k_\star(s)}=1$ implies $2^{k_\star(s)-1}/\hat{R}_s \ge 1 \iff 2^{k_\star(s)-1} \ge \hat{R}_s$, we have:
    \[
        \hat{R}_s - (2^{k_\star-1} - 1) \le \hat{R}_s - (\hat{R}_s - 1) = 1.
    \]
    In both cases, the total regret incurred in each stage $s$ against $x^c$ is $O(1)$.

    Then sum over all stages, 
    \begin{align*}
        \mathbb{E}\!\left[\sum_{s=1}^{S} \sum_{k(s) =1}^{K(s)} \sum_{t=\mathrm{start}_{k(s)}}^{\mathrm{end}_{k_\star(s)}} \langle \hat{\ell}_t,\; x_t-x^c\rangle\right]
        \le (\lceil \log_2 D \rceil + 1) = O(\log_2 D)
    \end{align*}
\end{proof}

\newpage
\section{Examples of Different Regularizers}
\label{app:regularizer}
\subsection{1/2-Tsallis}

Take \(x_0=\left(\frac{1}{A}, \ldots, \frac{1}{A}\right) .\)
For \(\Psi(x)=-2 \sum_i \sqrt{x_i}\), \(\nabla \Psi_i(x)=-\frac{1}{\sqrt{x_i}} .\)
Hence, \(\nabla \Psi\left(x_0\right)=(-\sqrt{A}, \ldots,-\sqrt{A}) \).
For any $y \in \Delta_{A}$,
\begin{align*}
D_{\Psi}\left(y, x_0\right) & =\Psi(y)-\Psi\left(x_0\right)-\left\langle\nabla \Psi\left(x_0\right), y-x_0\right\rangle \\
& =-2 \sum_{i=1}^A \sqrt{y_i}+2 \sqrt{A}-\left\langle-\sqrt{A} \mathbf{1}, y-x_0\right\rangle .
\end{align*}
Since $y, x_0 \in \Delta_{A}$, we have $\left\langle\mathbf{1}, y-x_0\right\rangle=0$, so
\[D_{\Psi}\left(y, x_0\right)=2\left(\sqrt{A}-\sum_{i=1}^A \sqrt{y_i}\right) .\]
Also $0 \leq y_i \leq 1$, so $\sqrt{y_i} \geq y_{i,}$ and therefore
\[\sum_{i=1}^A \sqrt{y_i} \geq \sum_{i=1}^A y_i=1 .\]
Thus
\[D_{\Psi}\left(y, x_0\right) \leq 2(\sqrt{A}-1). \]

Start from the same second-order expansion used in Lemma B.5 in \citet{huang2023banker}:
\[\sigma_t D_{\Psi}\left(x_t, \tilde{z}_t\right)=\frac{1}{2 \sigma_t}\left\|\hat{\ell}_t\right\|_{\nabla^2 \Psi^*\left(\theta_t\right)}^2,\]
for some $\theta_t$ on the segment joining $\nabla \Psi\left(x_t\right)-\hat{\ell}_t / \sigma_t$ and $\nabla \Psi\left(x_t\right)$.
Now compute the Hessians for 1/2-Tsallis:
\[\nabla^2 \Psi(x)=\operatorname{diag}\left(\frac{1}{2 x_i^{3 / 2}}\right) .\]
For the conjugate,
\[\nabla^2 \Psi^*(\nabla \Psi(x))=\left(\nabla^2 \Psi(x)\right)^{-1}=\operatorname{diag}\left(2 x_i^{3 / 2}\right) .\]
Also, coordinatewise,
\[\nabla \Psi_i\left(x_t\right)-\frac{\hat{\ell}_{t, i}}{\sigma_t} \leq \nabla \Psi_i\left(x_t\right),\]
because $\hat{\ell}_t \geq 0$ for the standard importance estimator with nonnegative losses. For this regularizer, each diagonal entry of $\nabla^2 \Psi^*(\cdot)$ is increasing on the relevant negative domain, so
\[\nabla^2 \Psi^*\left(\theta_t\right) \preceq \nabla^2 \Psi^*\left(\nabla \Psi\left(x_t\right)\right)=\operatorname{diag}\left(2 x_{t, i}^{3 / 2}\right) .\]
Therefore,
\begin{align*}
\sigma_t D_{\Psi}\left(x_t, \tilde{z}_t\right) & \leq \frac{1}{2 \sigma_t} \sum_{i=1}^A \hat{l}_{t, i}^2 \cdot 2 x_{t, i}^{3 / 2}  =\frac{1}{\sigma_t} \sum_{i=1}^A \hat{l}_{t, i}^2 x_{t, i}^{3 / 2}  = \frac{1}{\sigma_t}
\left(\frac{l_{t, A_t}}{x_{t, A_t}}\right)^2 x_{t, A_t}^{3 / 2} \leq  \frac{2}{\sigma_t \delta}  \sqrt{x_{t,A_t}} .
\end{align*}
Notice $l_{t,A_t} \in [0,1]$ and $x_{t,A_t} = \alpha \hat x_{t,A_t} + (1 - \alpha) x^c \geq \delta/2$, then $\frac{1}{x_{t, A_t}} \leq 2/ \delta$.
\begin{align*}
\mathbb{E}\left[\sigma_t D_{\Psi}\left(x_t, \tilde{z}_t\right) \mid \mathcal{F}_{t-1}\right] & \leq \frac{2}{\sigma_t \delta} \sum_{i=1}^A x_{t, i}   \sqrt{x_{t,i}}  =\frac{2}{\sigma_t \delta} \sum_{i=1}^A x_{t, i}^{3/2} \leq \frac{2}{\sigma_t \delta}
\end{align*}
\subsection{negative entropy}

Take
\(\Psi(x)=\sum_{i=1}^A x_i \log x_i\).
Then
\(\nabla \Psi_i(x)=1+\log x_i, \nabla^2 \Psi(x)=\operatorname{diag}\left(1 / x_i\right) .\)
 With uniform $x_0$,
\[D_{\Psi}\left(y, x_0\right)=\sum_{i=1}^A y_i \log \frac{y_i}{1 / A}=\sum_{i=1}^A y_i \log \left(A y_i\right) .\]
This is exactly the KL divergence $D_{\mathrm{KL}}\left(y \| x_0\right)$, and over the simplex its maximum is attained at a vertex, giving
\[D_{\Psi}\left(y, x_0\right) \leq \log A .\]
So we take \(C_1=\log A\).

Start from the same second-order expansion used in Lemma B.5 in \citet{huang2023banker}:
\[\sigma_t D_{\Psi}\left(x_t, \tilde{z}_t\right)=\frac{1}{2 \sigma_t}\left\|\hat{\ell}_t\right\|_{\nabla^2 \Psi^*\left(\theta_t\right)}^2,\]
for some $\theta_t$ on the segment joining $\nabla \Psi\left(x_t\right)-\hat{\ell}_t / \sigma_t$ and $\nabla \Psi\left(x_t\right)$.
For negative entropy, the conjugate is
\(\Psi^*(\theta)=\sum_{i=1}^A e^{\theta_i-1},\)
so
\[\nabla^2 \Psi^*(\theta)=\operatorname{diag}\left(e^{\theta_i-1}\right) .\]
Because the losses are nonnegative in the delayed adversarial MAB setup and the importance estimator has nonnegative coordinates, we have
\[\nabla \Psi\left(x_t\right)-\frac{\hat{\ell}_t}{\sigma_t} \leq \nabla \Psi\left(x_t\right) \quad \text { coordinatewise. }\]
Since $e^{\theta_i-1}$ is increasing in $\theta_i$,
\[\nabla^2 \Psi^*\left(\theta_t\right) \preceq \nabla^2 \Psi^*\left(\nabla \Psi\left(x_t\right)\right)=\nabla^2 \Psi\left(x_t\right)^{-1}=\operatorname{diag}\left(x_{t, i}\right) .\]
Hence
\begin{align*}
\sigma_t D_{\Psi}\left(x_t, \tilde{z}_t\right) 
\leq \frac{1}{2 \sigma_t}\left\|\hat{\ell}_t\right\|_{\operatorname{diag}\left(x_t\right)}^2
 =\frac{1}{2 \sigma_t} \sum_{i=1}^A x_{t, i} \hat{\ell}_{t, i}^2 
= \frac{1}{2 \sigma_t}
\left(\frac{l_{t, A_t}}{x_{t, A_t}}\right)^2 x_{t, A_t}
\leq \frac{1}{\sigma_t \delta}  
\end{align*}
Then 
\begin{align*}
\mathbb{E}\left[\sigma_t D_{\Psi}\left(x_t, \tilde{z}_t\right) \mid \mathcal{F}_{t-1}\right] 
\leq \frac{1}{ \sigma_t \delta} \sum_{i=1}^A  x_{t, i} 
=\frac{1}{\sigma_t \delta} 
\end{align*}
\newpage

\section{Lower Bound}
\label{app:lower-bound}

\begin{PBAppLemma}[Greedy bucket properties]{greedy-buckets}{
    The greedy buckets identify intervals during which the learner cannot receive
    any feedback generated inside the same interval. Thus each bucket behaves like
    one batch. Since the delays are non-increasing, later buckets cannot be longer
    than earlier ones. Moreover, the square of a bucket length controls the total
    delay hidden in the next bucket: each point in the next bucket has delay at most
    the previous bucket length, and the next bucket itself is no longer than the
    previous one.
}
Let \(d_1,\dots,d_T\in \mathbb Z_{\ge 1}\) be non-increasing and satisfy \(d_t \le T+1-t \quad \forall t\in[T].\) Let
\(B_m = \{b_m,\dots,b_{m+1}-1\}, m=1,\dots,M,\) be the greedy bucket decomposition, i.e.,
\(\forall t\in B_m: t+d_t > b_{m+1}-1,\)
and \(\exists t_m\in B_m: t_m+d_{t_m}=b_{m+1}.\)
Write
\(L_m := |B_m| = b_{m+1}-b_m.\)
Then:
\begin{enumerate}
    \item \(L_1 \ge L_2 \ge \cdots \ge L_M\).
    \item For every \(m\in[M-1]\),
    \[
    L_m^2 \ge \sum_{t\in B_{m+1}} d_t.
    \]
\end{enumerate}
\end{PBAppLemma}

\begin{proof}
We borrow the proof idea from \citet{masoudian2022best}.
For every \(m\), since \(b_m\in B_m\), we have
\[
b_m+d_{b_m} > b_{m+1}-1.
\]
Since \(b_m\), \(d_{b_m}\), and \(b_{m+1}\) are integers, it follows that
\[
b_m+d_{b_m} \ge b_{m+1},
\]
and therefore
\[
L_m = b_{m+1}-b_m \le d_{b_m}.
\]

On the other hand, by construction of the greedy buckets, there exists \(t_m\in B_m\)
such that
\[
t_m+d_{t_m}=b_{m+1}.
\]
Hence
\[
L_m = b_{m+1}-b_m = d_{t_m} + (t_m-b_m) \ge d_{t_m}.
\]
Since \(t_m \le b_{m+1}-1\) and the delays are non-increasing,
\[
d_{t_m} \ge d_{b_{m+1}-1}.
\]
Thus
\[
L_m \ge d_{b_{m+1}-1}.
\]

Now
\[
L_{m+1} = b_{m+2}-b_{m+1} \le d_{b_{m+1}}
\le d_{b_{m+1}-1}\le L_m,
\]
which proves
\[
L_1 \ge L_2 \ge \cdots \ge L_M.
\]

For the second claim, let \(t\in B_{m+1}\). Since \(t\ge b_{m+1}\) and the delays are non-increasing,
\[
d_t \le d_{b_{m+1}} \le d_{b_{m+1}-1} \le L_m.
\]
Therefore
\[
\sum_{t\in B_{m+1}} d_t
\le |B_{m+1}|\,L_m
= L_{m+1}L_m
\le L_m^2,
\]
where the last inequality uses \(L_{m+1}\le L_m\).
\end{proof}

\begin{PBAppLemma}[Safe lower bound for the batched bandit game]{batched-safe-lb}{
    This theorem isolates the statistical obstruction created by batching. We build
    two environments that differ only in whether arm \(2\) is slightly worse or
    slightly better than the baseline arms. If the learner plays arm \(2\) often,
    then it violates safety in the environment where arm \(2\) is worse. If it plays
    arm \(2\) rarely, then it suffers large regret in the environment where arm
    \(2\) is better. Because feedback inside each block is hidden until the block
    ends, the learner cannot reliably distinguish these two environments quickly
    enough. The quantity \(V=\sum_m L_m^2\) is the effective variance scale of this
    batched testing problem.
}
Let \(A\ge 2\), let \(n\ge 1\), and let
\(L_1 \ge L_2 \ge \cdots \ge L_n \ge 1\)
be positive integers. Consider the batched bandit game with block lengths
\(L_1,\dots,L_n\): in block \(m\), the learner chooses actions
\(J_{m,1},\dots,J_{m,L_m}\in[A]\)
sequentially, but without observing any loss from the current block; after the
block ends, the learner observes the chosen losses
\(g_{m,1,J_{m,1}},\dots,g_{m,L_m,J_{m,L_m}}.\)
Define \(V := \sum_{m=1}^n L_m^2.\)
Fix \(\delta\in(0,1/A]\) and \(\Lambda\ge 0\), and assume
\(\delta \ge \frac{L_1}{64V}.\)
Then there exists a comparator \(x^c\in\Delta_A\) with
\(x^c(i)\ge \delta \qquad \forall i\in[A]\)
such that for every batched bandit algorithm there exists an oblivious
deterministic block-loss sequence
\[
g_{m,s}\in[0,1]^A,
\qquad
m=1,\dots,n,\ s=1,\dots,L_m,
\]
for which
\[
R^{\mathrm{bat}}(x^c;g)\le \Lambda
\quad\Longrightarrow\quad
\max_{x\in\Delta_A} R^{\mathrm{bat}}(x;g)
\ge
\frac{3}{256}\,\delta^{-1/2}\sqrt{\frac{V}{L_1}}
-
\frac{\sqrt{\Lambda}}{32\sqrt{2}}\,
\delta^{-3/4}\left(\frac{V}{L_1}\right)^{1/4}
-\Lambda.
\]
Here
\[
R^{\mathrm{bat}}(x;g)
:=
\mathbb{E}\!\left[
\sum_{m=1}^n\sum_{s=1}^{L_m}g_{m,s,J_{m,s}}
-
\sum_{m=1}^n\sum_{s=1}^{L_m}\langle x,g_{m,s}\rangle
\right],
\]
where the expectation is over the learner's randomization.
\end{PBAppLemma}

\begin{proof}
Fix any batched bandit algorithm. Define
\[
x^c := (1-(A-1)\delta,\delta,\dots,\delta)\in\Delta_A.
\]
Since \(\delta\le 1/A\), every coordinate of \(x^c\) is at least \(\delta\).

We first construct two oblivious randomized environments, and then extract a
deterministic loss sequence from one of them.

\medskip
\noindent
\textbf{Step 1: randomized hard instance.}
Pick a parameter \(\gamma>0\), to be fixed later, and define
\[
\varepsilon_m := \gamma \frac{L_m}{\sqrt{V}},
\qquad
\Delta_m := \gamma \frac{L_m}{\sqrt{V}},
\qquad
w_m := \frac{L_m^2}{V},
\qquad m=1,\dots,n.
\]
Then \(w_m\ge 0\) and \(\sum_{m=1}^n w_m=1\).

For each block \(m\) and each slot \(s=1,\dots,L_m\), let
\[
Z_{m,s}^+ \sim \mathrm{Bernoulli}\!\left(\frac12+\varepsilon_m\right),
\qquad
Z_{m,s}^- \sim \mathrm{Bernoulli}\!\left(\frac12-\varepsilon_m\right),
\]
independently across all \((m,s)\).

Under \(\mathcal E_+\), define
\[
g_{m,s,1}^{(+)}=\frac12,
\qquad
g_{m,s,2}^{(+)}=Z_{m,s}^+,
\qquad
g_{m,s,i}^{(+)}=\frac12 \quad (i=3,\dots,A),
\]
and under \(\mathcal E_-\), define
\[
g_{m,s,1}^{(-)}=\frac12,
\qquad
g_{m,s,2}^{(-)}=Z_{m,s}^-,
\qquad
g_{m,s,i}^{(-)}=\frac12 \quad (i=3,\dots,A).
\]
Thus every loss lies in \([0,1]^A\).

Let \(N_m\) be the number of times the learner plays arm \(2\) in block \(m\):
\[
N_m := \sum_{s=1}^{L_m}\mathbf{1}\{J_{m,s}=2\}.
\]
Define the weighted number of arm 2 being played in block $m$ as:
\[
W := \sum_{m=1}^n \frac{L_m N_m}{V}.
\]
Since \(0\le N_m\le L_m\) and \(\sum_{m=1}^n L_m^2=V\), we have
\[
0\le W\le 1.
\]

\medskip
\noindent
\textbf{Step 2: safety regret under \(\mathcal E_+\).}
Under \(\mathcal E_+\), arm \(2\) has mean loss
\[
\mathbb E[g_{m,s,2}^{(+)}]=\frac12+\Delta_m,
\]
while all other arms have mean loss \(1/2\). Hence the learner's expected loss
in block \(m\) equals
\[
\frac{L_m}{2}+\Delta_m\,\mathbb E_+[N_m],
\]
and the comparator \(x^c\) incurs expected loss
\[
L_m \left((1-\delta)\frac12+\delta\left(\frac12+\Delta_m\right)\right)
=
\frac{L_m}{2}+\delta L_m\Delta_m.
\]

Therefore
\[
R_+^{\mathrm{bat}}(x^c)
=
\sum_{m=1}^n \Delta_m\bigl(\mathbb E_+[N_m]-\delta L_m\bigr).
\]
Since \(\Delta_m=\gamma L_m/\sqrt{V}\),
\[
R_+^{\mathrm{bat}}(x^c)
=
\gamma\sqrt{V}\left(\mathbb E_+[W]-\delta\right).
\]
Thus, if
\[
R_+^{\mathrm{bat}}(x^c)\le \Lambda,
\]
then
\begin{equation}
\label{eq:batched-good}
\mathbb E_+[W]\le \delta+\frac{\Lambda}{\gamma\sqrt{V}}.
\end{equation}

\medskip
\noindent
\textbf{Step 3: KL bound between \(\mathcal E_+\) and \(\mathcal E_-\).}
Let \(P_+\) and \(P_-\) denote the laws of the full batched transcript under
\(\mathcal E_+\) and \(\mathcal E_-\), respectively. Write \(H_m\) for the
transcript up to the end of block \(m-1\).

By the chain rule for KL divergence,
\[
\mathrm{KL}(P_+\|P_-)
=
\sum_{m=1}^n
\mathbb E_+\!\left[
\mathrm{KL}\bigl(
P_+(J_{m,1:L_m},Y_{m,1:L_m}\mid H_m)
\|
P_-(J_{m,1:L_m},Y_{m,1:L_m}\mid H_m)
\bigr)
\right],
\]
where
\[
J_{m,1:L_m}:=(J_{m,1},\dots,J_{m,L_m}),
\qquad
Y_{m,1:L_m}:=(g_{m,s,J_{m,s}})_{s=1}^{L_m}.
\]

Because the learner observes no loss from the current block before the block
ends, the conditional law of \(J_{m,1:L_m}\) given \(H_m\) is the same under
\(P_+\) and \(P_-\). Hence the block-\(m\) contribution reduces to the KL of
the observed losses conditioned on the chosen action sequence.

Fix \(H_m\) and a realized action sequence \(a_{1:L_m}\). For every slot \(s\)
with \(a_s\neq 2\), the observed loss is deterministically \(1/2\) under both
environments, so the KL contribution is \(0\). For every slot \(s\) with
\(a_s=2\), the observed loss is Bernoulli with parameter
\(1/2+\varepsilon_m\) under \(\mathcal E_+\) and with parameter
\(1/2-\varepsilon_m\) under \(\mathcal E_-\). Since losses are independent
across slots, the conditional KL for block \(m\) is
\[
N_m(a_{1:L_m})\,
\mathrm{kl}\!\left(\frac12+\varepsilon_m,\frac12-\varepsilon_m\right).
\]
Taking expectation gives
\[
\mathrm{KL}(P_+\|P_-)
=
\sum_{m=1}^n
\mathbb E_+[N_m]\,
\mathrm{kl}\!\left(\frac12+\varepsilon_m,\frac12-\varepsilon_m\right).
\]

For \(\varepsilon\in[0,1/4]\),
\[
\mathrm{kl}\!\left(\frac12+\varepsilon,\frac12-\varepsilon\right)
\le 16\varepsilon^2.
\]
We will verify below that \(\varepsilon_m\le 1/4\) for all \(m\). Using
\(\varepsilon_m=\gamma L_m/\sqrt{V}\), we obtain
\[
\mathrm{KL}(P_+\|P_-)
\le
16\gamma^2
\sum_{m=1}^n \mathbb E_+[N_m]\frac{L_m^2}{V}.
\]
Since \(L_m\le L_1\),
\[
\frac{L_m^2}{V}\,\mathbb E_+[N_m]
\le
L_1 \frac{L_m\,\mathbb E_+[N_m]}{V}.
\]
Therefore
\[
\mathrm{KL}(P_+\|P_-)
\le
16\gamma^2 L_1 \mathbb E_+[W].
\]

By Pinsker's inequality,
\[
\mathrm{TV}(P_+,P_-)
\le
\sqrt{\frac12\mathrm{KL}(P_+\|P_-)}
\le
2\sqrt{2}\,\gamma\sqrt{L_1\mathbb E_+[W]}
\le
4\gamma\sqrt{L_1\mathbb E_+[W]}.
\]
Since \(W\in[0,1]\),
\[
\mathbb E_-[W]-\mathbb E_+[W]
\le
\mathrm{TV}(P_+,P_-)
\le
4\gamma\sqrt{L_1\mathbb E_+[W]}.
\]
Hence
\begin{equation}
\label{eq:batched-bad}
\mathbb E_-[W]
\le
\mathbb E_+[W]+4\gamma\sqrt{L_1\mathbb E_+[W]}.
\end{equation}

\medskip
\noindent
\textbf{Step 4: regret against the best arm under \(\mathcal E_-\).}
Under \(\mathcal E_-\), arm \(2\) is the unique best arm in expectation, because
\[
\mathbb E[g_{m,s,2}^{(-)}]=\frac12-\Delta_m,
\]
while every other arm has mean loss \(1/2\). The learner's expected regret
against the pure strategy \(e_2\) is therefore
\[
R_-^{\mathrm{bat}}(e_2)
=
\sum_{m=1}^n \Delta_m\bigl(L_m-\mathbb E_-[N_m]\bigr)
=
\gamma\sqrt{V}\bigl(1-\mathbb E_-[W]\bigr).
\]
Combining this with \eqref{eq:batched-bad} and then with
\eqref{eq:batched-good}, we get: if
\(R_+^{\mathrm{bat}}(x^c)\le \Lambda\), then
\[
R_-^{\mathrm{bat}}(e_2)
\ge
\gamma\sqrt{V}
\left(
1-\delta-\frac{\Lambda}{\gamma\sqrt{V}}
-4\gamma\sqrt{L_1\left(\delta+\frac{\Lambda}{\gamma\sqrt{V}}\right)}
\right).
\]
Using \(\sqrt{x+y}\le \sqrt{x}+\sqrt{y}\),
\[
R_-^{\mathrm{bat}}(e_2)
\ge
\gamma\sqrt{V}\bigl(1-\delta-4\gamma\sqrt{L_1\delta}\bigr)
-\Lambda
-4\sqrt{\Lambda}\,\gamma^{3/2}V^{1/4}\sqrt{L_1}.
\]

Choose
\[
\gamma := \frac{1}{32\sqrt{L_1\delta}}.
\]
Then
\[
\varepsilon_m
=
\frac{L_m}{32\sqrt{L_1\delta}\sqrt{V}}
\le
\frac{\sqrt{L_1}}{32\sqrt{\delta V}}
\le \frac14,
\]
where the last inequality follows from
\[
\delta\ge \frac{L_1}{64V}.
\]
Thus the Bernoulli KL bound used above is valid.

Moreover,
\[
4\gamma\sqrt{L_1\delta}=\frac18,
\qquad
\delta\le \frac1A\le \frac12,
\]
and hence
\[
1-\delta-4\gamma\sqrt{L_1\delta}\ge \frac38.
\]
Therefore
\[
\gamma\sqrt{V}\bigl(1-\delta-4\gamma\sqrt{L_1\delta}\bigr)
\ge
\frac{3}{256}\,\delta^{-1/2}\sqrt{\frac{V}{L_1}}.
\]
Also,
\[
4\sqrt{\Lambda}\,\gamma^{3/2}V^{1/4}\sqrt{L_1}
=
\frac{\sqrt{\Lambda}}{32\sqrt{2}}\,
\delta^{-3/4}\left(\frac{V}{L_1}\right)^{1/4}.
\]
Hence, if \(R_+^{\mathrm{bat}}(x^c)\le \Lambda\), then
\[
R_-^{\mathrm{bat}}(e_2)
\ge
\frac{3}{256}\,\delta^{-1/2}\sqrt{\frac{V}{L_1}}
-
\frac{\sqrt{\Lambda}}{32\sqrt{2}}\,
\delta^{-3/4}\left(\frac{V}{L_1}\right)^{1/4}
-\Lambda.
\]
Since
\[
\max_{x\in\Delta_A}R_-^{\mathrm{bat}}(x)
\ge
R_-^{\mathrm{bat}}(e_2),
\]
we have shown that
\begin{equation}
\label{eq:batched-minus-conclusion}
\max_{x\in\Delta_A}R_-^{\mathrm{bat}}(x)
\ge
\frac{3}{256}\,\delta^{-1/2}\sqrt{\frac{V}{L_1}}
-
\frac{\sqrt{\Lambda}}{32\sqrt{2}}\,
\delta^{-3/4}\left(\frac{V}{L_1}\right)^{1/4}
-\Lambda
\end{equation}
whenever \(R_+^{\mathrm{bat}}(x^c)\le \Lambda\).

\medskip
\noindent
\textbf{Step 5: choose one randomized environment.}
There are two cases.

\emph{Case 1:} \(R_+^{\mathrm{bat}}(x^c)>\Lambda\). Choose
\(\mathcal E=\mathcal E_+\). Then
\[
R_{\mathcal E}^{\mathrm{bat}}(x^c)\le \Lambda
\quad\Longrightarrow\quad
\max_{x\in\Delta_A}R_{\mathcal E}^{\mathrm{bat}}(x)\ge B_\Lambda
\]
holds vacuously, where
\[
B_\Lambda :=
\frac{3}{256}\,\delta^{-1/2}\sqrt{\frac{V}{L_1}}
-
\frac{\sqrt{\Lambda}}{32\sqrt{2}}\,
\delta^{-3/4}\left(\frac{V}{L_1}\right)^{1/4}
-\Lambda.
\]

\emph{Case 2:} \(R_+^{\mathrm{bat}}(x^c)\le \Lambda\). Choose
\(\mathcal E=\mathcal E_-\). Then \eqref{eq:batched-minus-conclusion} gives
\[
\max_{x\in\Delta_A}R_{\mathcal E}^{\mathrm{bat}}(x)\ge B_\Lambda,
\]
so the implication
\[
R_{\mathcal E}^{\mathrm{bat}}(x^c)\le \Lambda
\quad\Longrightarrow\quad
\max_{x\in\Delta_A}R_{\mathcal E}^{\mathrm{bat}}(x)\ge B_\Lambda
\]
holds as well.

Thus, for every batched bandit algorithm there exists an oblivious randomized
environment \(\mathcal E\) for which the desired implication holds.

\medskip
\noindent
\textbf{Step 6: extract a deterministic sequence.}
If we are in Case 1, then
\[
R_{\mathcal E_+}^{\mathrm{bat}}(x^c)>\Lambda.
\]
Since this is the expectation of the regret over the random block-loss sequence
drawn from \(\mathcal E_+\), there exists a deterministic realization for which
\[
R^{\mathrm{bat}}(x^c;g)>\Lambda.
\]
For this realization, the implication
\[
R^{\mathrm{bat}}(x^c;g)\le \Lambda
\quad\Longrightarrow\quad
\max_{x\in\Delta_A}R^{\mathrm{bat}}(x;g)\ge B_\Lambda
\]
holds vacuously.

If we are in Case 2, then
\[
\max_{x\in\Delta_A}R_{\mathcal E_-}^{\mathrm{bat}}(x)\ge B_\Lambda.
\]
Writing \(R^{\mathrm{bat}}(x;g)\) for the regret on a deterministic
realization \(g\), we have
\[
R_{\mathcal E_-}^{\mathrm{bat}}(x)
=
\mathbb E_{g\sim\mathcal E_-}
\!\left[
R^{\mathrm{bat}}(x;g)
\right].
\]
Therefore
\[
\mathbb E_{g\sim\mathcal E_-}
\!\left[
\max_{x\in\Delta_A}R^{\mathrm{bat}}(x;g)
\right]
\ge
\max_{x\in\Delta_A}
\mathbb E_{g\sim\mathcal E_-}
\!\left[
R^{\mathrm{bat}}(x;g)
\right]
\ge
B_\Lambda.
\]
Hence some deterministic realization \(g\) satisfies
\[
\max_{x\in\Delta_A}R^{\mathrm{bat}}(x;g)\ge B_\Lambda.
\]
For this realization, the implication
\[
R^{\mathrm{bat}}(x^c;g)\le \Lambda
\quad\Longrightarrow\quad
\max_{x\in\Delta_A}R^{\mathrm{bat}}(x;g)\ge B_\Lambda
\]
holds.

Thus, in both cases, there exists an oblivious deterministic block-loss
sequence with the claimed property.
\end{proof}

\begin{PBAppLemma}[Uniform simulation of a delayed bandit algorithm by a batched bandit algorithm]{delay-to-batch}{
    This lemma formalizes the reduction from delayed feedback to batching. Once we
    restrict attention to a suffix of greedy buckets, no feedback generated inside a
    bucket can be used before that bucket ends. Therefore a batched algorithm can
    faithfully simulate any delayed algorithm: it pre-simulates the zero prefix,
    then processes each suffix bucket as one batch, revealing the chosen losses only
    after the bucket ends. Since the simulated delayed transcript is identical to
    the original delayed transcript, the regret against every comparator is exactly
    preserved.
}
Assume the setting of Lemma~\ref{pblem:greedy-buckets}, and fix \(j\in[M]\).
For every delayed bandit algorithm \(\mathcal A^{\mathrm{del}}\), there exists a
batched bandit algorithm \(\mathcal A^{\mathrm{bat}}\) with block lengths
\(L_j,\dots,L_M\), depending only on \(\mathcal A^{\mathrm{del}}\), the delay
sequence \(d_{1:T}\), and \(j\), such that the following holds.

For every family of vectors
\[
g_{m,s}\in[0,1]^A,
\qquad
m=j,\dots,M,\quad s=1,\dots,L_m,
\]
define a deterministic delayed-loss sequence \(\ell(g)=(\ell_1(g),\dots,\ell_T(g))\) by
\[
\ell_t(g) :=
\begin{cases}
\mathbf 0, & t < b_j,\\[0.5ex]
g_{m,s}, & t=b_m+s-1 \text{ for some } m\in\{j,\dots,M\},\ s\in[L_m].
\end{cases}
\]
Then for every comparator \(x\in\Delta_A\),
\[
R_T^{\mathrm{del}}(x;\ell(g))
=
R_{j:M}^{\mathrm{bat}}(x;g).
\]
where \[
R_T^{\mathrm{del}}(x;\ell(g))
=
\mathbb E\!\left[
\sum_{t=1}^T
\left(
[\ell_t(g)]_{A_t}
-
\langle x,\ell_t(g)\rangle
\right)
\right].
\]

Similarly,
\[
R_{j:M}^{\mathrm{bat}}(x;g)
:=
\mathbb E\!\left[
\sum_{m=j}^M\sum_{s=1}^{L_m}
\left(
[g_{m,s}]_{J_{m,s}}
-
\langle x,g_{m,s}\rangle
\right)
\right],
\]
where \(A_t\) is the action played by \(\mathcal A^{\mathrm{del}}\) at delayed
round \(t\) and \(J_{m,s}\) is the action played by
\(\mathcal A^{\mathrm{bat}}\) in slot \(s\) of block \(m\).
Here the left-hand side is the regret of \(\mathcal A^{\mathrm{del}}\) on the delayed
instance \(\ell(g)\), and the right-hand side is the regret of
\(\mathcal A^{\mathrm{bat}}\) on the batched instance \(g\).
\end{PBAppLemma}

\begin{proof}
Fix a delayed bandit algorithm \(\mathcal A^{\mathrm{del}}\). We construct a single
batched bandit algorithm \(\mathcal A^{\mathrm{bat}}\); the construction depends only on
\(\mathcal A^{\mathrm{del}}\), \(d_{1:T}\), and \(j\), and not on the eventual choice of \(g\).

Internally, \(\mathcal A^{\mathrm{bat}}\) maintains a faithful copy of
\(\mathcal A^{\mathrm{del}}\), including its random seed. We use the convention that
feedback whose reveal time is \(t\) is delivered immediately before the action at round \(t\).

\medskip
\noindent
\textbf{Step 1: pre-simulate the zero prefix.}
For every round \(t<b_j\), the delayed loss vector is \(\ell_t(g)=\mathbf 0\), so the entire
prefix is known in advance. The batched simulator therefore runs rounds
\(1,\dots,b_j-1\) of \(\mathcal A^{\mathrm{del}}\) internally before the batched game begins.

More concretely, for each simulated round \(t=1,\dots,b_j-1\), it first delivers to the
internal copy the entire collection of previously generated prefix feedback items whose
reveal time equals \(t\). Each such feedback value is \(0\). The internal copy then chooses an
action \(I_t\in[A]\). Since the loss vector at round \(t\) is \(\mathbf 0\), the chosen loss is
also \(0\). If its reveal time \(t+d_t\) lies in \(\{t+1,\dots,b_j-1\}\), then this zero loss
will be delivered later during the same pre-simulation. If \(t+d_t\in\{b_j,\dots,T\}\), the
simulator stores this feedback item, together with its reveal time \(t+d_t\), for delivery
during the batched simulation. (If \(t+d_t=T+1\), it is irrelevant for decisions up to horizon
\(T\) and need not be stored.)

After this pre-simulation, the internal copy is in exactly the same state that
\(\mathcal A^{\mathrm{del}}\) would be in immediately before round \(b_j\) on the delayed
instance \(\ell(g)\).

\medskip
\noindent
\textbf{Step 2: simulate the suffix online, one bucket at a time.}
Now process the suffix buckets \(B_j,\dots,B_M\). For a fixed bucket \(B_m\), the batched
algorithm goes through the slots corresponding to rounds
\[
t=b_m,\dots,b_{m+1}-1.
\]

Before the slot corresponding to round \(t\), the batched simulator delivers to the internal
copy the entire collection of stored feedback items whose reveal time equals \(t\). We claim
that every feedback item that should be revealed at time \(t\) is indeed already known to the
batched simulator.

Let \(u\) be a round whose feedback is revealed at time \(t\), so \(u+d_u=t\). Since
\(d_u\ge 1\), necessarily \(u<t\). There are now three possibilities.

\begin{itemize}
    \item If \(u<b_j\), then \(u\) is a prefix round, and its zero feedback was generated during
    Step 1 and stored there.
    \item If \(u\in B_r\) for some \(r<m\), then \(u\) lies in a strictly earlier suffix bucket.
    The chosen loss from round \(u\) was observed when block \(r\) ended, and was then stored
    together with its reveal time.
    \item \(u\) cannot lie in the current bucket \(B_m\), because for every \(u\in B_m\), the
    greedy-bucket property gives
    \[
    u+d_u > b_{m+1}-1 \ge t.
    \]
    Likewise, \(u\) cannot lie in a future bucket \(B_r\) with \(r>m\), because then
    \(u\ge b_{m+1}>t\), so \(u+d_u\ge u+1>t\).
\end{itemize}

Hence every feedback item due at time \(t\) comes either from the prefix or from a strictly
earlier suffix bucket, and is already available to the batched simulator.

After receiving all feedback items whose availability time equals \(t\), the
internal copy of \(\mathcal A^{\mathrm{del}}\) outputs an action
\(A_t\in[A]\). The batched algorithm plays the same action in the corresponding
slot
\[
    s=t-b_m+1
\]
of block \(m\); that is, it sets
\[
    J_{m,s}:=A_t.
\]

When block \(m\) ends, the batched environment reveals the chosen losses
\[
    [g_{m,1}]_{J_{m,1}},\dots,[g_{m,L_m}]_{J_{m,L_m}}.
\]
For each \(s\in[L_m]\), let
\[
    u=b_m+s-1.
\]
Since \(J_{m,s}=A_u\), the revealed value satisfies
\[
    [g_{m,s}]_{J_{m,s}}
    =
    [g_{m,s}]_{A_u}
    =
    [\ell_u(g)]_{A_u}.
\]
The simulator stores the delayed feedback item
\[
    \left(A_u,[g_{m,s}]_{J_{m,s}}\right)
\]
with availability time
\[
    \tau_u:=u+d_u+1,
\]
provided \(\tau_u\le T\). If \(\tau_u>T\), then this feedback item cannot affect
any action up to horizon \(T\), and hence may be ignored.

This storage is valid because the greedy-bucket property ensures that, for every
\(u\in B_m\),
\[
    u+d_u>b_{m+1}-1.
\]
Since all quantities are integers, this implies
\[
    u+d_u\ge b_{m+1},
\]
and therefore
\[
    \tau_u=u+d_u+1\ge b_{m+1}+1.
\]
Thus no feedback generated inside the current bucket \(B_m\) is available before
any action inside \(B_m\). Once block \(m\) ends, the batched simulator knows all
chosen losses from \(B_m\), and can deliver them to the internal delayed copy at
their correct future availability times.

By induction over rounds \(t=b_j,\dots,T\), immediately before each suffix round
\(t\), the internal copy of \(\mathcal A^{\mathrm{del}}\) has exactly the same
decision-relevant history as \(\mathcal A^{\mathrm{del}}\) would have on the
delayed instance \(\ell(g)\). In particular, for every suffix round
\(t=b_m+s-1\),
\[
    J_{m,s}=A_t.
\]

\medskip
\noindent
\textbf{Step 3: equality of regrets.}
For every suffix round \(t=b_m+s-1\), we have
\[
    \ell_t(g)=g_{m,s}
    \qquad\text{and}\qquad
    J_{m,s}=A_t.
\]
Therefore the learner incurs the same realized loss in the delayed and batched
games:
\[
    [\ell_t(g)]_{A_t}
    =
    [g_{m,s}]_{A_t}
    =
    [g_{m,s}]_{J_{m,s}}.
\]
The comparator incurs the same loss as well:
\[
    \langle x,\ell_t(g)\rangle
    =
    \langle x,g_{m,s}\rangle.
\]
For the prefix rounds \(t<b_j\), we have \(\ell_t(g)=\mathbf 0\), so
\[
    [\ell_t(g)]_{A_t}
    -
    \langle x,\ell_t(g)\rangle
    =
    0.
\]
Hence, pathwise,
\[
\sum_{t=1}^T
\left(
[\ell_t(g)]_{A_t}
-
\langle x,\ell_t(g)\rangle
\right)
=
\sum_{m=j}^M\sum_{s=1}^{L_m}
\left(
[g_{m,s}]_{J_{m,s}}
-
\langle x,g_{m,s}\rangle
\right).
\]
Taking expectation over the common randomization of the coupled delayed and
batched algorithms gives
\[
R_T^{\mathrm{del}}(x;\ell(g))
=
R_{j:M}^{\mathrm{bat}}(x;g).
\]
\end{proof}

\begin{PBAppTheorem}[Delayed safety lower bound]{delayed-safe-lb}{
    The delayed lower bound combines the previous two ingredients. First, the greedy
    buckets convert a suffix of the delayed game into a batched game with block lengths \(L_j,\ldots,L_M\). Second, the batched lower bound applies to this
    simulated game and gives a regret lower bound governed by \(V_j=\sum_{m=j}^M L_m^2\). Finally, the greedy bucket geometry shows that this
    batched complexity dominates the total delay hidden in the suffix: \(V_j\ge D_{\overline{S_j}}\). Hence large delayed feedback forces the same
    safety--optimality tradeoff as large batches.
}
    Let \(d_1,\dots,d_T\in \mathbb Z_{\ge 1}\) be non-increasing delays with \(d_t\le T+1-t \quad \forall t\in[T].\)
    Let \(D:=\sum_{t=1}^T d_t\) be the total delay. Let
    \(B_m=\{b_m,\dots,b_{m+1}-1\}, \quad m=1,\dots,M,\)
    be the greedy bucket decomposition, and define
    \(L_m:=|B_m|.\) Fix \(j\in[M]\) and write
    \[
    V_j := \sum_{m=j}^M L_m^2,
    \qquad
    S_j := \bigcup_{m=1}^j B_m,
    \qquad
    D_{\overline{S_j}} := \sum_{t\in[T]\setminus S_j} d_t.
    \]
    Fix \(\delta\in(0,1/A]\) and assume
    \(\delta \ge \frac{L_j}{64V_j}.\)
    Then there exists a comparator \(x^c\in\Delta_A\) with
    \(x^c(i)\ge \delta \quad \forall i\in[A]\)
    such that for every delayed adversarial bandit algorithm there exists an
    oblivious deterministic delayed-loss sequence
    \(\ell_1,\dots,\ell_T\in[0,1]^A\)
    for which
   {\footnotesize \[
    R_T^{\mathrm{del}}(x^c;\ell)\le \log D
    \quad\Rightarrow\quad
    \max_{x\in\Delta_A}R_T^{\mathrm{del}}(x;\ell)
    \ge
    \frac{3}{256}\,\delta^{-1/2}\sqrt{\frac{V_j}{L_j}}
    -
    \frac{\sqrt{\log D}}{32\sqrt{2}}\,
    \delta^{-3/4}\left(\frac{V_j}{L_j}\right)^{1/4}
    -\log D.
    \]}
    Moreover,
    \(V_j \ge D_{\overline{S_j}},\)
    and for the same loss sequence \(\ell\),
   {\footnotesize \[
    R_T^{\mathrm{del}}(x^c;\ell)\le \log D
    \quad\Rightarrow\quad
    \max_{x\in\Delta_A}R_T^{\mathrm{del}}(x;\ell)
    \ge
    \frac{3}{256}\,\delta^{-1/2}
    \sqrt{\frac{D_{\overline{S_j}}}{L_j}}
    -
    \frac{\sqrt{\log D}}{32\sqrt{2}}\,
    \delta^{-3/4}\left(\frac{V_j}{L_j}\right)^{1/4}
    -\log D.
    \]}
\end{PBAppTheorem}


\begin{proof}
By Lemma~\ref{pblem:greedy-buckets}, the suffix bucket lengths satisfy
\[
L_j \ge L_{j+1} \ge \cdots \ge L_M.
\]
Set
\[
n := M-j+1,
\qquad
\widetilde L_r := L_{j+r-1},
\qquad r=1,\dots,n.
\]
Then
\[
\widetilde L_1 \ge \widetilde L_2 \ge \cdots \ge \widetilde L_n \ge 1,
\qquad
\sum_{r=1}^n \widetilde L_r^2 = V_j.
\]

Apply Lemma~\ref{pblem:batched-safe-lb} to the block-length sequence
\(\widetilde L_1,\dots,\widetilde L_n\) with
\[
\Lambda=\log D.
\]
Since \(d_t\ge 1\) for all \(t\), we have \(D\ge 1\), and hence
\(\Lambda\ge 0\). Also,
\[
\delta \ge \frac{L_j}{64V_j}
=
\frac{\widetilde L_1}{64\sum_{r=1}^n \widetilde L_r^2},
\]
so the batched theorem applies. Hence there exists a comparator
\(x^c\in\Delta_A\) with
\[
x^c(i)\ge \delta \qquad \forall i\in[A]
\]
such that for every batched bandit algorithm on block lengths
\(\widetilde L_1,\dots,\widetilde L_n\), there exists an oblivious deterministic
block-loss sequence
\[
\widetilde g_{r,s}\in[0,1]^A,
\qquad
r=1,\dots,n,\quad s=1,\dots,\widetilde L_r,
\]
for which
\[
R^{\mathrm{bat}}(x^c;\widetilde g)\le \log D
\quad\Longrightarrow\quad
\max_{x\in\Delta_A}R^{\mathrm{bat}}(x;\widetilde g)
\ge
\frac{3}{256}\,\delta^{-1/2}\sqrt{\frac{V_j}{L_j}}
-
\frac{\sqrt{\log D}}{32\sqrt{2}}\,
\delta^{-3/4}\left(\frac{V_j}{L_j}\right)^{1/4}
-\log D.
\]

Now fix any delayed adversarial bandit algorithm \(\mathcal A^{\mathrm{del}}\).
By Lemma~\ref{pblem:delay-to-batch}, there exists a batched bandit algorithm
\(\mathcal A^{\mathrm{bat}}\), depending only on
\(\mathcal A^{\mathrm{del}}\), \(d_{1:T}\), and \(j\), such that for every suffix
block-loss sequence \(g\) and the associated delayed sequence \(\ell(g)\),
\[
R_T^{\mathrm{del}}(x;\ell(g))
=
R_{j:M}^{\mathrm{bat}}(x;g)
\qquad
\forall x\in\Delta_A.
\]

Apply the batched guarantee above to this specific batched algorithm
\(\mathcal A^{\mathrm{bat}}\). Then there exists a deterministic block-loss
sequence
\[
\widetilde g_{r,s}\in[0,1]^A,
\qquad
r=1,\dots,n,\quad s=1,\dots,\widetilde L_r,
\]
such that
\[
R^{\mathrm{bat}}(x^c;\widetilde g)\le \log D
\quad\Longrightarrow\quad
\max_{x\in\Delta_A}R^{\mathrm{bat}}(x;\widetilde g)
\ge
\frac{3}{256}\,\delta^{-1/2}\sqrt{\frac{V_j}{L_j}}
-
\frac{\sqrt{\log D}}{32\sqrt{2}}\,
\delta^{-3/4}\left(\frac{V_j}{L_j}\right)^{1/4}
-\log D.
\]

Re-index this block-loss sequence by setting
\[
g_{m,s}:=\widetilde g_{m-j+1,s},
\qquad
m=j,\dots,M,\quad s=1,\dots,L_m,
\]
and define the delayed-loss sequence \(\ell\) by
\[
\ell_t :=
\begin{cases}
\mathbf 0, & t<b_j,\\[0.5ex]
g_{m,s}, & t=b_m+s-1
\text{ for some } m\in\{j,\dots,M\},\ s\in[L_m].
\end{cases}
\]
Since this is only a relabeling of the blocks,
\[
R^{\mathrm{bat}}(x;\widetilde g)
=
R_{j:M}^{\mathrm{bat}}(x;g)
\qquad
\forall x\in\Delta_A.
\]
Therefore, by Lemma~\ref{pblem:delay-to-batch},
\[
R_T^{\mathrm{del}}(x;\ell)
=
R_{j:M}^{\mathrm{bat}}(x;g)
=
R^{\mathrm{bat}}(x;\widetilde g)
\qquad
\forall x\in\Delta_A.
\]
Thus
\[
R_T^{\mathrm{del}}(x^c;\ell)\le \log D
\quad\Longrightarrow\quad
\max_{x\in\Delta_A}R_T^{\mathrm{del}}(x;\ell)
\ge
\frac{3}{256}\,\delta^{-1/2}\sqrt{\frac{V_j}{L_j}}
-
\frac{\sqrt{\log D}}{32\sqrt{2}}\,
\delta^{-3/4}\left(\frac{V_j}{L_j}\right)^{1/4}
-\log D.
\]

It remains to relate \(V_j\) to \(D_{\overline{S_j}}\). Lemma~\ref{pblem:greedy-buckets}
gives
\[
L_m^2 \ge \sum_{t\in B_{m+1}} d_t
\qquad
(m=j,\dots,M-1).
\]
Summing over \(m=j,\dots,M-1\),
\[
\sum_{m=j}^{M-1}L_m^2
\ge
\sum_{m=j+1}^M \sum_{t\in B_m} d_t
=
D_{\overline{S_j}}.
\]
Since
\[
V_j = \sum_{m=j}^M L_m^2
\ge
\sum_{m=j}^{M-1}L_m^2,
\]
we obtain
\[
V_j \ge D_{\overline{S_j}}.
\]
Therefore
\[
\sqrt{\frac{V_j}{L_j}}
\ge
\sqrt{\frac{D_{\overline{S_j}}}{L_j}},
\]
and the second displayed lower bound follows for the same loss sequence
\(\ell\).
\end{proof}

\begin{PBAppCorollaryBox}{A delay sequence with linear dependence on total delay}{linear-total-delay}
    Fix integers \(q\ge 1\) and \(N\ge 1\), set
    \(T:=(N+1)q,\)
    and define the non-increasing delay sequence
    \(d_t := \min\{q,T+1-t\},
    \quad
    t=1,\dots,T.\)
    The greedy bucket decomposition is
    \(B_m=\{(m-1)q+1,\dots,mq\},
    \quad
    m=1,\dots,N+1,\)
    so that
    \(L_m=q
    \quad
    (m=1,\dots,N+1).\)
    Taking \(j=1\), if \(\delta\in(0,1/A]\) satisfies
    
    \(\delta
    \ge
    \frac{L_1}{64V_1}
    =
    \frac{1}{64\left(\frac{D}{q}+\frac{q-1}{2}\right)},
    \)
    then there exists a comparator \(x^c\in\Delta_A\) with
    \(
    x^c(i)\ge \delta \quad \forall i\in[A]
    \)
    such that for every delayed adversarial bandit algorithm there exists an
    oblivious deterministic delayed-loss sequence
    \[
    \ell_1,\dots,\ell_T\in[0,1]^A
    \]
    for which
    $
    R_T^{\mathrm{del}}(x^c;\ell)\le \log D
    $ implies 
    \[
    \max_{x\in\Delta_A}R_T^{\mathrm{del}}(x;\ell)
    \ge
    \frac{3}{256}\,\delta^{-1/2}\sqrt{D}
    -
    \frac{\sqrt{\log D}}{32\sqrt{2}}\,
    \delta^{-3/4}D^{1/4}
    -\log D \text{ for } q =1
    \]
\end{PBAppCorollaryBox}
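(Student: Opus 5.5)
The plan is to specialize the delayed safety lower bound (Theorem~\ref{pbthm:delayed-safe-lb}) with $j=1$. All that is needed is to check its hypotheses for this explicit delay sequence and to compute $D$, $V_1$ and $L_1$ in closed form. Most of the work is bookkeeping; the one nontrivial point is that the greedy bucket decomposition really is the uniform partition into blocks of length $q$, including the last, truncated-delay bucket.

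\textbf{Step 1: admissibility of the delays.} We have $d_t=\min\{q,T+1-t\}$, which is non-increasing in $t$. It satisfies $d_t\le T+1-t$, and $d_t\ge 1$ because $t\le T$. Since $T+1-t\ge q$ exactly when $t\le T+1-q=Nq+1$, we get $d_t=q$ for $t\le Nq+1$ and $d_t=T+1-t$ for $t\ge Nq+1$.

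\textbf{Step 2: the greedy buckets.} For $m\le N$ and $t\in B_m=\{(m-1)q+1,\dots,mq\}$ we have $d_t=q$. Hence $t+d_t\ge (m-1)q+1+q=mq+1=b_{m+1}>b_{m+1}-1$, and the first element $t=(m-1)q+1$ attains $t+d_t=b_{m+1}$ with equality. For the last bucket $B_{N+1}=\{Nq+1,\dots,T\}$ we have $t+d_t=T+1=b_{N+2}$ for every $t$, so both defining properties hold there as well. Thus $M=N+1$ and $L_m=q$ for all $m$.

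\textbf{Step 3: closed forms.} The total delay splits into the first $N$ buckets and the last one:
\[
D=Nq\cdot q+\sum_{k=1}^{q}k=Nq^2+\tfrac{q(q+1)}{2}.
\]
The bucket complexity is $V_1=(N+1)q^2$. A direct check gives
\[
\frac{D}{q}+\frac{q-1}{2}=Nq+q=\frac{V_1}{q}=\frac{V_1}{L_1},
\]
so the stated condition on $\delta$ is exactly the hypothesis $\delta\ge L_1/(64V_1)$ of Theorem~\ref{pbthm:delayed-safe-lb} with $j=1$. That theorem therefore supplies the comparator $x^c$ with $\min_i x^c(i)\ge\delta$, and, for every delayed algorithm, a loss sequence $\ell$. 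On this sequence $R_T^{\mathrm{del}}(x^c;\ell)\le\log D$ implies the first displayed bound, with $V_1/L_1$ in place of $V_j/L_j$.

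\textbf{Step 4: the case $q=1$.} Here $T=N+1$, $d_t\equiv 1$ and $D=N+1$. Also $V_1/L_1=(N+1)q=N+1=D$. Substituting $\sqrt{V_1/L_1}=\sqrt D$ and $(V_1/L_1)^{1/4}=D^{1/4}$ into the theorem's first bound gives exactly the claimed inequality. It is convenient to use this first bound rather than the $D_{\overline{S_j}}$ version, since the latter only sees the delay outside $B_1$.

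For general $q$ the same argument gives the bound with $\sqrt{V_1/L_1}=\sqrt{D/q+(q-1)/2}$. This is why the linear-in-$D$ form is stated only for $q=1$.
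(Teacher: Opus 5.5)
Your proposal is correct and follows the paper's proof essentially step for step. Both verify that $d_t=\min\{q,T+1-t\}$ is admissible, check the greedy-bucket properties for the uniform length-$q$ blocks (including the last bucket, where $t+d_t=T+1$), compute $D=Nq^2+q(q+1)/2$ and $V_1/L_1=(N+1)q=D/q+(q-1)/2$, and invoke Theorem~\ref{pbthm:delayed-safe-lb} with $j=1$. Your explicit $q=1$ substitution $V_1/L_1=D$, using the $V_j$-based bound rather than the $D_{\overline{S_j}}$ version that the paper computes but does not need, simply writes out the final step the paper leaves implicit.
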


\begin{proof}
The sequence \(d_t=\min\{q,T+1-t\}\) is non-increasing and satisfies
\[
d_t\le T+1-t
\qquad
\forall t\in[T].
\]
Moreover, \(d_t=q\) for \(t\le Nq+1\), and on the final \(q-1\) rounds the delays
decrease as \(q-1,q-2,\dots,1\). Hence
\[
D
=
Nq^2+\sum_{r=1}^q r
=
Nq^2+\frac{q(q+1)}{2}.
\]

For \(m=1,\dots,N+1\), define
\[
b_m:=(m-1)q+1.
\]
Then \(B_m=\{b_m,\dots,b_{m+1}-1\}\) has length \(q\). If \(m\le N\), then
\(d_t=q\) throughout \(B_m\), and therefore for every \(t\in B_m\),
\[
t+d_t=t+q>b_{m+1}-1,
\]
with equality \(b_m+d_{b_m}=b_{m+1}\). For the last bucket \(B_{N+1}\), we have
\[
t+d_t=T+1=b_{N+2}
\qquad
\forall t\in B_{N+1}.
\]
Thus these are exactly the greedy buckets.

The identities for \(L_1\), \(V_1\), and \(D_{\overline{S_1}}\) follow directly:
\[
L_1=q,
\qquad
V_1=\sum_{m=1}^{N+1}q^2=(N+1)q^2,
\]
and since \(S_1=B_1\) and \(\sum_{t\in B_1}d_t=q^2\),
\[
D_{\overline{S_1}}=D-q^2.
\]
Dividing by \(L_1=q\) gives
\[
\frac{V_1}{L_1}
=
(N+1)q
=
\frac{D}{q}+\frac{q-1}{2},
\]
and
\[
\frac{D_{\overline{S_1}}}{L_1}
=
\frac{D-q^2}{q}
=
\frac{D}{q}-q.
\]
The regret lower bound is Theorem~\ref{pbthm:delayed-safe-lb} applied with
\(j=1\).
\end{proof}
\newpage
\section{Experimental details}
\label{app:experiments}

This appendix describes the experimental setup used for the dynamic comparison
in Figure~\ref{fig:dynamic-smoothed-best-arm}, which reports fixed one-step, geometric, and Pareto delayed-feedback conditions.. The purpose of the experiment is
to test safety-aware bandit algorithms under delayed feedback. In particular,
the experiment stresses the following failure mode: if feedback is delayed, an
algorithm may incorrectly infer that the safe comparator is suboptimal and may
switch to an aggressive exploration mode before enough evidence has actually
arrived. The proposed algorithm, \PB{}, is designed to avoid this mistimed
transition by using a delay-calibrated restart threshold.

The displayed run uses horizon
\[
    T = 50{,}000
\]
and number of arms
\[
    A = 100.
\]
At each round \(t\), the learner selects an arm \(a_t \in [A]\) and
incurs loss \(\ell_{t,a_t}\), where
\[
    \ell_t = (\ell_{t,1},\ldots,\ell_{t,A}) \in [0,1]^A.
\]
Smaller loss is better. Although the full loss table is generated before the
experiment, each algorithm only observes the loss of the arm it selected, and
that observation may arrive after a delay.

\subsection{Synthetic non-stationary loss process}

The displayed experiment uses the flexible non-stationary loss generator denoted
\texttt{env\_any} in the implementation. Time is divided into
\[
    B = 500
\]
blocks. For each arm \(i \in [A]\) and each block \(b \in [B]\), the environment
draws an arm-block-specific mean and standard deviation,
\[
    x_{i,b} \sim \mathrm{Unif}(0,1),
    \qquad
    \sigma_{i,b} \sim \mathrm{Unif}(0.1,0.2).
\]
The implementation assigns round \(t\) to block
\[
    b(t)
    = 1 +
    \min\left\{
        \left\lfloor
        \frac{t-1}{\lfloor T/B \rfloor + 1}
        \right\rfloor,
        B-1
    \right\}.
\]
Conditional on these parameters, the loss of arm \(i\) at time \(t\) is sampled
from a normal distribution truncated to the unit interval:
\[
    \ell_{t,i}
    \sim
    \mathrm{TN}_{[0,1]}
    \left(
        x_{i,b(t)}, \sigma_{i,b(t)}^2
    \right).
\]
Thus the environment is non-stationary: an arm that is good in one block may be
bad in another block. This makes the setting more demanding than a stationary
stochastic bandit problem, because algorithms must learn while the quality of
arms changes over time.

All algorithms in a given delay column are run on the same realized loss table
and the same realized delay sequence. Therefore, differences between curves are
due to the algorithms rather than to different random environments.

\subsection{Safe comparator}
To isolate safety behavior conditional on a strong safe reference, we choose the reference arm in hindsight. Let $i^{\star} \in \arg \min _{i \in[A]} \sum_{t=1}^T \ell_{t, i}$ be the best fixed arm for the realized loss table. \PB{} receives the full-support comparator $x^c$ defined by $x^c\left(i^{\star}\right)=1-(A-1) \delta$ and $x^c(i)=\delta$ for $i \neq i^{\star}$. In the displayed run, $A=100$ and $\delta=0.001$, so $x^c\left(i^{\star}\right)=0.901$. The conservative baselines require a deterministic safe default arm and a known default reward; we therefore give them the same anchor $\operatorname{arm} i^{\star}$ and $r_0=\frac{1}{T} \sum_{t=1}^T\left(1-\ell_{t, i^*}\right)$. This oracle choice is favorable to methods that frequently play the default arm and should be interpreted as a diagnostic setting, not as an online procedure for constructing a safe comparator.

\subsection{Delay models}

Feedback from round \(s\) arrives at the end of round \(s+d_s\), where \(d_s\)
is the realized delay. Under this end-of-round convention, feedback from round $s$ with $d_s=0$ is observed after the action and loss at round $s$, and is first usable for the decision at round $s+1$. More generally, feedback arriving at the end of round $s+d_s$ is first usable from round $s+d_s+1$. The realized
total delay is
\[
    D = \sum_{t=1}^T d_t.
\]
The experiment compares three delay settings, summarized in
Table~\ref{tab:delay-settings}.

\begin{table}[t]
  \caption{Delay settings in the dynamic experiment. Each delayed column uses
  the same realized delay sequence for all algorithms; the geometric setting
  has the largest realized total delay in the plotted run.}
  \label{tab:delay-settings}
  \centering
  \begin{tabularx}{\linewidth}{@{}lYc@{}}
    \toprule
    Column & Delay model & Realized \(D\) \\
    \midrule
    Fixed one-step
    &
    \(\Pr(d_t=1)=0.03\) and \(\Pr(d_t=0)=0.97\).
    &
    \(1545\)
    \\
    Geometric
    &
    With probability \(0.03\), feedback is delayed and
    \(d_t \sim \mathrm{Geom}(0.4)\) on \(\{1,2,\ldots\}\); otherwise
    \(d_t=0\).
    &
    \(3927\)
    \\
    Pareto
    &
    With probability \(0.03\), feedback is delayed and
    \(d_t=1+\lfloor Z_t\rfloor\), where \(Z_t\) is generated by NumPy's Pareto sampler, $Z_t \sim \operatorname{Lomax}(\alpha=$ 2.5 , scale $=1$ ), i.e. $Z_t \geq 0$; otherwise \(d_t=0\).
    &
    \(2065\)
    \\
    \bottomrule
  \end{tabularx}
\end{table}

The Fixed one-step setting has only short delays: each round is delayed by one step with probability 0.03, and otherwise has no delay. The geometric setting has the
largest realized total delay in the displayed run. The Pareto setting has a
smaller realized total delay than the geometric setting, but it is heavy-tailed:
rare longer delays can create local bursts of missing feedback. This is
important because the effect of delays is not determined only by the scalar
quantity \(D\); the temporal placement of delayed observations can also affect
when an algorithm becomes aggressive or conservative.

\begin{figure}[t]
  \centering
  \includegraphics[width=\linewidth]{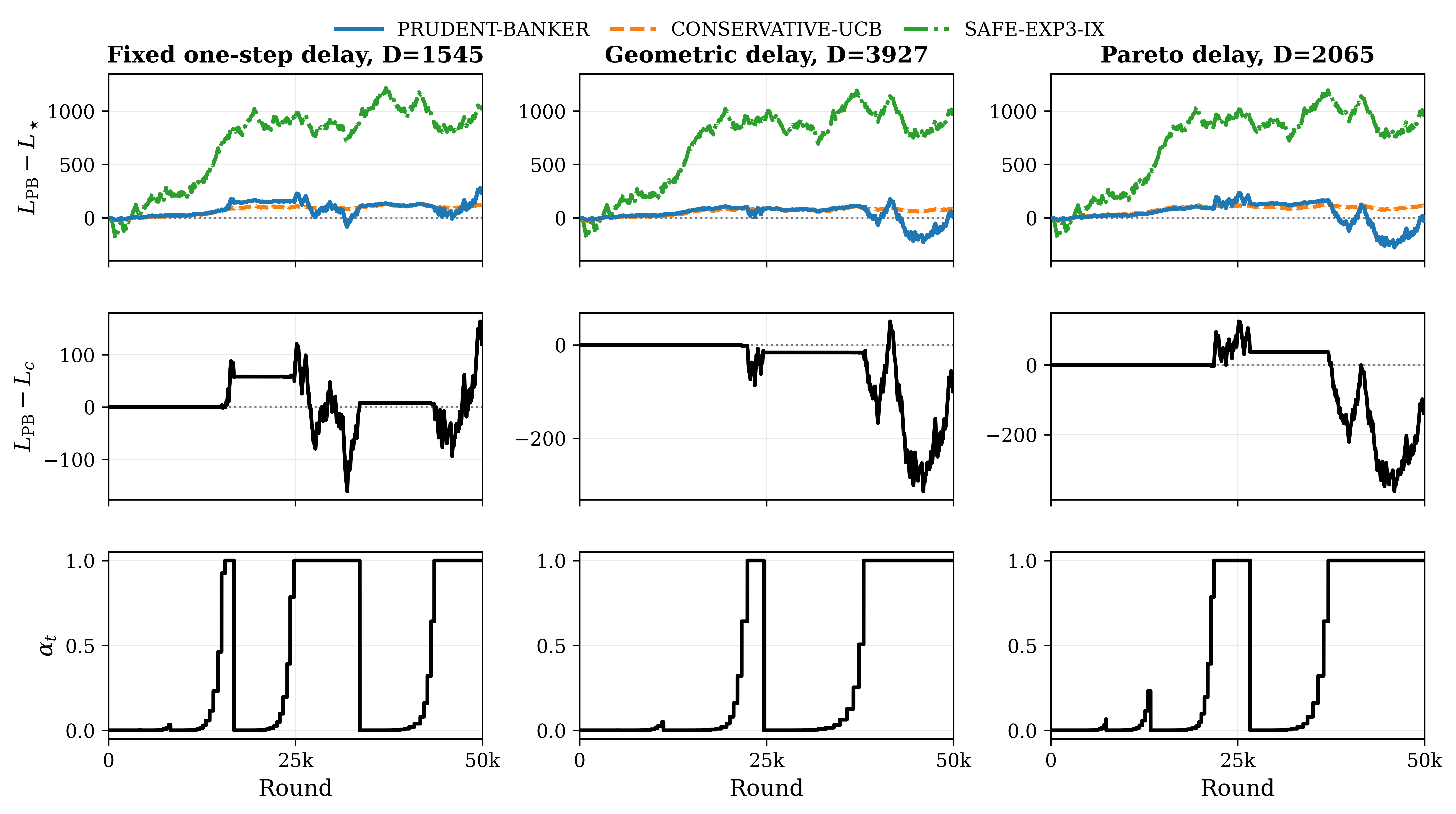}
  \caption{Dynamic performance against the best-fixed-arm comparator.
  The top row plots cumulative loss minus the cumulative loss of the best fixed
  arm in hindsight, the middle row plots \PB{} loss minus comparator loss,
  and the bottom row plots the \PB{} mixing coefficient \(\alpha_t\). The
  key takeaway is that delayed feedback changes the timing of safety decisions:
  under delay, \PB{} repeatedly restarts conservatively and only becomes
  aggressive after revalidating the comparator-suboptimality evidence.}
  \label{fig:dynamic-smoothed-best-arm}
\end{figure}

\subsection{Evaluation metrics}

Let \(p_t^{B} \in \Delta_A\) denote the action distribution used by algorithm
\(B\) at time \(t\). The plotted curves are pseudo-loss curves:
\[
    L_B(t)
    =
    \sum_{s=1}^t
    \langle p_s^B, \ell_s \rangle.
\]
Thus the plotted loss is the expected loss under the algorithm's action
distribution, not merely the realized loss of the sampled arm. This reduces
sampling noise in the visualization while preserving the bandit-feedback
learning dynamics used by the algorithms.

The cumulative loss of the best realized fixed arm is
\[
    L_\star(t)
    =
    \sum_{s=1}^t
    \ell_{s,i^\star}.
\]
The top row of Figure~\ref{fig:dynamic-smoothed-best-arm} plots
\[
    R_B^\star(t)
    =
    L_B(t)-L_\star(t).
\]
This is the cumulative loss of algorithm \(B\) minus the cumulative loss of the
best fixed arm in hindsight. Lower is better. Since \(i^\star\) is selected
using the entire horizon, a curve can be slightly negative at early times if an
algorithm temporarily outperforms the arm that is best over the full horizon.

The middle row plots the \PB{} advantage over comparator:
\[
    L_{\PB{}}(t) - L_c(t),
\]
where
\[
    L_c(t)
    =
    \sum_{s=1}^t
    \langle x_c,\ell_s\rangle.
\]
Positive values mean that \PB{} has more cumulative loss than the safe
comparator. Negative values mean that \PB{} has accumulated less loss than
the comparator.

The bottom row plots the mixing coefficient \(\alpha_t\) used by \PB{}.
Its played distribution is
\[
    x_t
    =
    (1-\alpha_t)x^c + \alpha_t \hat x_t,
\]
where \(x^c\) is the safe comparator and \(\hat x_t\) is the \Banker{}
distribution. Thus \(\alpha_t\) measures aggressiveness:
\[
    \alpha_t \approx 0
    \quad \Longrightarrow \quad
    \text{\PB{} is close to the safe comparator,}
\]
whereas
\[
    \alpha_t = 1
    \quad \Longrightarrow \quad
    \text{\PB{} fully trusts the \Banker{} learner.}
\]
Thus, in the middle row, downward movement indicates improvement relative to the comparator.
\subsection{Algorithms}
\label{app:algorithms}

\subsubsection{\PB{}}

The proposed method combines a delayed-feedback \Banker{} learner with a
safety-aware phased-aggression mechanism. The implementation labels this method
as \PB{} in the figure.

The \Banker{} component uses the negative-entropy regularizer
\[
    \psi(x)
    =
    \sum_{i=1}^A x_i \log x_i.
\]
Therefore, 
\[
C_1=\log A=\log 100,
\qquad
C_2=\frac{1}{\delta}=1000,
\]
and
\[
\sigma_t
=
\sqrt{\frac{C_2}{C_1}}
\left(
\frac{1}{\sqrt{t-\mathrm{start}_{k(s)}+1}}
+
\mathfrak d_t^{k(s)}
\sqrt{
\frac{
\ln\!\left(\mathfrak D_t^{k(s)}+1\right)
}{
\mathfrak D_t^{k(s)}
}
}
\right)^{-1}.
\]
\subsubsection{\CUCB{}}
\CUCB{} is based on Conservative UCB of
\citet{wu2016conservative}, who introduced the conservative bandit framework and
proposed a UCB-style algorithm that maintains a reward constraint relative to a
default arm uniformly over time.
In this experiment, the safe default is the hindsight best
fixed arm \(i^\star\). The baseline is also given the default arm's average
reward
\[
    r_0
    =
    \frac{1}{T}
    \sum_{t=1}^T
    (1-\ell_{t,i^\star}).
\]
This makes \CUCB{} particularly strong in the top-row metric because repeatedly
playing the default arm is already close to optimal with respect to the
best-fixed-arm comparator.

The implementation uses delayed observations. That is, if feedback from round
\(s\) arrives at time \(s+d_s\), then the reward from round \(s\) is not used to
update the empirical mean or confidence interval before that arrival time.
However, \CUCB{} is not delay-calibrated in the same sense as \PB{}: it does
not maintain a delay-budget estimate, it does not use a delay-corrected
comparator-suboptimality threshold, and it does not restart phases when the
realized delay budget exceeds an estimate.

Let $O_{<t}=\left\{s<t: s+d_s<t\right\}$ be the set of feedback items available before choosing at round $t$ . Let $N_i^{\text {obs }}(t)$ be the number of rewards from $\operatorname{arm} i$ in $O_{<t}$, and let $S_i(t)$ be their sum. For an arm with \(N_i^{\mathrm{obs}}(t)>0\), define
\[
    \widehat r_i(t)
    =
    \frac{S_i(t)}{N_i^{\mathrm{obs}}(t)}
\]
and a confidence radius
\[
    c_i(t)
    =
    \sqrt{
        \frac{
            2\log\left(
                \max\left\{
                    3,
                    \frac{2A(t+1)^2}{\delta_{\mathrm{ucb}}}
                \right\}
            \right)
        }{
            N_i^{\mathrm{obs}}(t)
        }
    }.
\]
where $\delta_{\mathrm{UCB}}=\frac{1}{\max \{T, 2\}} = 2 \times 10^{-5}$ in our experiment.

The lower and upper confidence bounds are
\[
    \mathrm{LCB}_i(t)
    =
    \max\{0,\widehat r_i(t)-c_i(t)\},
    \qquad
    \mathrm{UCB}_i(t)
    =
    \min\{1,\widehat r_i(t)+c_i(t)\}.
\]
For an unobserved arm, the implementation uses
\[
    \mathrm{LCB}_i(t)=0,
    \qquad
    \mathrm{UCB}_i(t)=1.
\]
For the default arm, the confidence interval is collapsed to the known value:
\[
    \mathrm{LCB}_{i^\star}(t)
    =
    \mathrm{UCB}_{i^\star}(t)
    =
    r_0.
\]

At time \(t\), the algorithm first selects the optimistic candidate
\[
    j_t
    \in
    \arg\max_{i\in[A]}
    \mathrm{UCB}_i(t).
\]
It then checks whether playing \(j_t\) would keep the pessimistic reward budget
above the conservative safety requirement. Let \(N_i^{\mathrm{play}}(t)\) be
the number of times arm \(i\) has been played before round \(t\). The
implementation plays \(j_t\) only if
\[
    \sum_{i=1}^A
    N_i^{\mathrm{play}}(t)
    \mathrm{LCB}_i(t)
    +
    \mathrm{LCB}_{j_t}(t)
    \ge
    (1-\alpha_{\mathrm{safe}})(t+1)r_0.
\]
Otherwise, it plays the safe default arm \(i^\star\). We pick \(\alpha_{\mathrm{safe}} = 0.1\). 

\subsubsection{\SafeEXP{}}
\SafeEXP{} combines the adversarial
safe-playing strategy of \citet{wu2016conservative} with EXP3-IX as the base
adversarial bandit learner. EXP3-IX is the implicit-exploration variant of EXP3
introduced by \citet{kocak2014efficient}; we use the standard implicit
exploration estimator also analyzed in high-probability form by
\citet{neu2015explore}. In our experiments both baselines are implemented with
delayed observations by applying their updates only when feedback arrives.
The implementation uses
\[
    \eta
    =
    \min
    \left\{
        \frac{1}{2},
        \sqrt{
            \frac{\log A}{AT}
        }
    \right\},
    \qquad
    \gamma
    =
    \frac{\eta}{2}.
\]
Let \(w_{t,i}\) be the EXP3 weight of arm \(i\). When the safety wrapper allows
the base learner to act, the base distribution is
\[
    q_t(i)
    =
    \frac{w_{t,i}}{\sum_{j=1}^A w_{t,j}}.
\]
When feedback from a base-learner round \(s\) arrives, the algorithm forms the
implicit-exploration estimator
\[
    \widehat \ell_{s,a_s}
    =
    \frac{
        \ell_{s,a_s}
    }{
        q_s(a_s)+\gamma
    },
\]
and updates the corresponding log-weight by
\[
    \log w_{s+1,a_s}
    =
    \log w_{s,a_s}
    -
    \eta \widehat \ell_{s,a_s}.
\]
All other coordinates are unchanged by that feedback item.

The safety wrapper maintains a reward budget relative to the known default arm.
At round \(t\), the required budget is
\[
    (1-\alpha_{\mathrm{safe}})r_0(t+1).
\]
If the current budget is large enough, the algorithm allows EXP3-IX to act. If
not, the algorithm plays the safe default arm. As with \CUCB{}, rewards from
non-default arms are credited only when the delayed feedback arrives.

Thus \SafeEXP{} is a delayed-feedback baseline: it updates the adversarial
bandit learner when feedback arrives. However, unlike \PB{}, it does not use a
delay-calibrated comparator-suboptimality test. Consequently, it can continue
to explore broadly even when the comparator is already very strong.

\subsection{Interpretation of Figure~\ref{fig:dynamic-smoothed-best-arm}}
\label{app:dynamic-interpretation}

\subsubsection{Overall patterns}

The top row shows cumulative loss relative to the best fixed arm. Since the
safe comparator puts most weight on hindsight best arm, a conservative
method that often plays the default arm can perform very well in this metric.
This explains why \CUCB{} remains close to the zero line in most columns.

\SafeEXP{} accumulates substantially larger loss relative to the best fixed arm.
This is expected in this diagnostic setting: EXP3-style algorithms maintain
broad exploration for adversarial robustness, and this exploration is costly
when the comparator is already highly concentrated on the best fixed arm.

\PB{} lies between these two extremes. Its behavior is best understood by
looking at the bottom row. When \(\alpha_t\) is small, the algorithm is close to
the safe comparator. When \(\alpha_t\) approaches \(1\), the algorithm becomes
fully aggressive and behaves like the adaptive \Banker{} learner. Therefore, changes
in the top and middle rows are closely tied to the trajectory of \(\alpha_t\).

\subsubsection{Fixed one-step delays}

In the Fixed one-step column, only 3\% of feedback events are delayed, and each active delay has length one. The realized total delay is
\[
    D = 1545.
\]
The bottom row shows repeated cycles in \(\alpha_t\). The coefficient rises as
the algorithm collects evidence, but it drops when the realized delay budget
exceeds the current stage estimate and a restart is triggered.

The top row shows that \PB{} incurs substantially less loss than
\SafeEXP{}, but more loss than \CUCB{}. The middle row is mostly positive, so
\PB{} pays some extra cumulative loss relative to the smoothed best-arm
comparator. However, the key qualitative point is that \PB{} does not
remain permanently aggressive after its first transition. Delayed feedback
causes the algorithm to restart conservatively and revalidate the evidence.

\subsubsection{Geometric delays}

The geometric-delay column has the largest realized total delay:
\[
    D = 3927.
\]
Although only \(3\%\) of rounds activate a delay, the active delay length is
geometric, so the backlog can be substantially larger than in the fixed-delay
setting.

This is the clearest panel for the delay-calibrated behavior of \PB{}. The
bottom row shows longer conservative periods and fewer aggressive intervals. As
the realized delay budget grows, the threshold becomes more conservative, and
the algorithm requires stronger evidence before increasing \(\alpha_t\).

The middle row is more favorable in this column than in the fixed-delay and
Pareto-delay columns. For a substantial interval, it is negative, meaning that
\PB{} temporarily has lower cumulative loss than the smoothed comparator.
Later, after a sharp transition, the gap returns closer to zero. This behavior
is consistent with the intended role of the delay correction: the algorithm
does not overreact to incomplete feedback, but it can still exploit evidence
when the evidence is strong enough after accounting for delay.

\subsubsection{Pareto delays}

The Pareto-delay column has realized total delay
\[
    D = 2065.
\]
This is smaller than the geometric-delay total in the displayed run, but the
delay distribution is heavy-tailed. A few longer delays can hide feedback from
important periods, which can make the comparator-suboptimality test less stable.

The bottom row again shows several restarts of \(\alpha_t\). These restarts
occur because the algorithm discovers that its current delay estimate is too
small and therefore returns to a conservative phase. The top row shows that
\PB{} improves over \SafeEXP{} but remains worse than \CUCB{} relative to
the hindsight best arm. The middle row is mostly negative, indicating
less loss relative to the smoothed comparator.

\subsubsection{Main takeaways}

The experiment supports three qualitative conclusions.

First, The fixed one-step, geometric, and Pareto panels show that the delay-aware restart threshold affects the trajectory of \(\alpha_t\), not only the final cumulative loss

Second, the delay calibration is visible in the trajectory, not only in final
regret. The bottom row shows the operational role of the delay-aware restart
threshold: \PB{} does not treat incomplete feedback as definitive evidence.
Instead, it restarts when the realized delay budget exceeds its estimate and
requires new evidence before becoming aggressive again.

Third, the baselines represent different behaviors. \CUCB{} is extremely
conservative and performs well in the top-row metric because the default arm is
the hindsight best fixed arm in this diagnostic. \SafeEXP{} is adversarially
motivated and delay-updated, but it explores much more broadly and therefore
incurs larger loss relative to the best fixed arm. \PB{} attempts to balance
these objectives: it stays close to the comparator when delay uncertainty is
large, but it can increase aggression when the observed evidence is strong
enough after delay calibration.

\end{document}